\documentclass{article}

\usepackage{PRIMEarxiv}
\usepackage[utf8]{inputenc} 
\usepackage[T1]{fontenc}    
\usepackage{hyperref}       
\usepackage{url}            
\usepackage{booktabs}       
\usepackage{amsfonts}       
\usepackage{amsmath}  
\usepackage{amssymb}  
\usepackage{amsthm}
\usepackage{bm}
\usepackage{nicefrac}       
\usepackage{microtype}      
\usepackage{lipsum}
\usepackage{fancyhdr}       
\usepackage{graphicx}       
\usepackage{subcaption}
\usepackage[labelfont=bf]{caption}
\usepackage{tabularx}
\usepackage{natbib}
\usepackage{makecell}
\usepackage{array}
\usepackage[ruled, linesnumbered]{algorithm2e}
\DontPrintSemicolon
\usepackage{pdflscape}
\usepackage{longtable}
\usepackage{cleveref}
\usepackage{multirow}
\usepackage{threeparttable}
\usepackage[section]{placeins}

\usepackage[dvipsnames]{xcolor}
\usepackage{xargs}
\usepackage[colorinlistoftodos, prependcaption, textsize=small, tickmarkheight=0.7em, textwidth=20mm]{todonotes}
\usepackage{todonotes}
\newcommandx{\ilker}[2][1=]{\todo[linecolor=Green,backgroundcolor=Green!25,bordercolor=Green,#1]{\footnotesize{\textcolor{blue}{\textbf{Ilker:} #2}}}}
\newcommandx{\ilkercaption}[2][1=]{\todo[inline,linecolor=Green,backgroundcolor=Green!25,bordercolor=Green,#1]{\footnotesize{\textcolor{blue}{\textbf{Ilker:} #2}}}}
\newcommandx{\wenhao}[2][1=]{\todo[linecolor=Green,backgroundcolor=Green!25,bordercolor=Green,#1]{\footnotesize{\textcolor{blue}{\textbf{Wenhao:} #2}}}}
\newcommandx{\wenhaocaption}[2][1=]{\todo[inline,linecolor=Green,backgroundcolor=Green!25,bordercolor=Green,#1]{\footnotesize{\textcolor{blue}{\textbf{Wenhao:} #2}}}}

\newtheorem{theorem}{Theorem}
\newtheorem{corollary}{Corollary}
\newtheorem{proposition}{Proposition}

\theoremstyle{definition}
\newtheorem{definition}{Definition}

\theoremstyle{remark}
\newtheorem{remark}{Remark}
\title{Learning An Interpretable Risk Scoring System for \\ Maximizing Decision Net Benefit}
\author{
  Wenhao Chi \\
  Yau Mathematical Sciences Center, Tsinghua University \\
  Amsterdam Business School, University of Amsterdam\\
  whchi@tsinghua.edu.cn
  \And
  \c{S}. \.{I}lker Birbil\\
  Amsterdam Business School, University of Amsterdam\\
  s.i.birbil@uva.nl
}

\begin{document}
\maketitle

\begin{abstract}
Risk scoring systems are widely used in high-stakes domains to assist decision-making. However, existing approaches often focus on optimizing predictive accuracy or likelihood-based criteria, which may not align with the main goal of maximizing utility. In this paper, we propose a novel risk scoring system that directly optimizes net benefit over a range of decision thresholds. The model is formulated as a sparse integer linear programming problem which enables the construction of a transparent scoring system with integer coefficients, and hence, facilitates interpretation and practical application. We also establish fundamental relationships among net benefit, discrimination, and calibration. Our analysis proves that optimizing net benefit also guarantees conventional performance measures. We evaluated our method on multiple public datasets as well as on a large-scale credit risk dataset. This computational study demonstrated that our interpretable method can effectively achieve high net benefit while maintaining competitive discrimination and calibration performance.
\end{abstract}

\keywords{Risk Scoring System \and Decision Curve Analysis \and Net Benefit \and Integer Linear Programming}
\section{Introduction}
Risk scoring models are widely used in decision analysis, particularly in healthcare and criminal justice, to assess risk and guide decision making. These models are favored for their simplicity, ease of interpretation, and rapid evaluation using linear, sparse, integer-based coefficients. However, developing effective risk scoring models remains a challenge. 

A good risk scoring model should not only achieve accurate calibration and high discrimination, but also have high utility in decision-making \citep{steyerbergAssessingPerformancePrediction2010, sadatsafaviMovingAUCDecision2021, FAZEL2022101902,MARKOV2022180}. Calibration ensures that the predicted risks are closely aligned with the actual outcomes, enabling predictions to be interpreted as meaningful probabilities. For example, a predicted risk of 1\% means that, on average, out of every 100 individuals with that risk score, approximately one is expected to experience the event. High discrimination, on the other hand, allows the model to distinguish effectively between different risk levels. These two metrics, although valuable, cannot alone determine whether a model will be practically useful in real-world settings. More importantly, they do not help decision-makers choose between competing models \citep{vickersTraditionalStatisticalMethods2010}.  

To address this issue, researchers have introduced decision curve analysis (DCA) to measure the utility of the model \citep{vickersDecisionCurveAnalysis2006}. DCA is a framework that quantifies a model's net benefit by evaluating the trade-off between true positives and false positives at various decision thresholds. Conventional model development typically focuses on optimizing objectives such as likelihood, mean squared error, or other unweighted loss functions. Although these objectives can produce models with good calibration and discrimination, they do not necessarily ensure that the resulting predictions lead to better utility, and that is what really matters \citep{vickersHypothesisNetBenefit2025}. The fundamental difficulty is that neither calibration nor discrimination captures what happens after a prediction is acted upon. Discrimination, as measured by the Area Under the Receiver Operating Characteristic (AUROC) curve, ranks individuals relative to one another but is entirely insensitive to the decision threshold a practitioner actually uses, and it weights false positives and false negatives symmetrically --- an assumption that rarely holds in practice, where the two types of error carry very different consequences. Calibration ensures that predicted probabilities are accurate on average, but a well-calibrated model can still yield poor decisions if its probability estimates are imprecise precisely in the neighborhood of the operative threshold. More critically, neither metric can answer the question that decision-makers actually face: given my threshold, is this model better than simply treating everyone or no one, and which of two competing models should I prefer? 

Recent studies have increasingly emphasized that predictive model evaluation should account for downstream decision consequences rather than relying solely on conventional discrimination metrics. \citet{zhuWeightedBrierScore2025} introduced a weighted Brier score that incorporates clinical utility considerations into risk prediction evaluation and established connections among discrimination, calibration. \citet{floresConsequentialistCritiqueBinary2025} argued that commonly used classification metrics may fail to reflect the practical consequences of prediction errors. Furthermore, \citet{2025arXiv250614540F} investigated how calibration, label shift, and asymmetric error costs influence the alignment between model evaluation and clinical priorities. These studies highlight the importance of utility assessment of predictive models. However, they primarily focus on evaluating existing models through decision-related criteria rather than incorporates decision utility into the learning process. Therefore, the objective of this study is to develop a risk scoring model that directly optimizes utility. In addition, our goal is to ensure that the proposed model retains strong learning capacity and generalization while simultaneously achieving high levels of calibration and discrimination.

The development of risk scoring systems involves three interrelated challenges: constructing parsimonious models with interpretable integer coefficients, evaluating predictive performance through calibration and discrimination, and ultimately ensuring that model predictions translate into high-quality decisions. Next, we review prior work across these three dimensions. We begin with the line of research on sparse integer scoring systems, focusing in particular on SLIM \citep{ustunSupersparseLinearInteger2016} and RISKSLIM \citep{ustunLearningOptimizedRisk2019} as the methodological predecessors of the approach proposed here. We then discuss DCA as the evaluation framework that motivated our choice of net benefit as a training objective, reviewing both the foundational work of \citet{vickersDecisionCurveAnalysis2006} and subsequent extensions. Throughout, we highlight the gap that motivates the present work: existing scoring system methods optimize predictive accuracy or likelihood-based criteria rather than decision utility, and existing utility evaluation methods are applied post hoc rather than embedded in model training.

\paragraph{Risk Scoring Systems}
Ustun and Rudin proposed the Supersparse Linear Integer Model (SLIM), which learns sparse linear classifiers with small integer coefficients by optimizing a 0-1 loss function through mixed-integer linear programming \citep{ustunSupersparseLinearInteger2016}. Building on this work, they further proposed RISKSLIM, a variant designed specifically for risk assessment, which instead minimizes logistic loss by solving a mixed-integer nonlinear program \citep{ustunLearningOptimizedRisk2019}. Compared with the SLIM model that produces only binary classification outputs, RISKSLIM can generate probability estimates and achieve better calibration and discrimination.

Our model departs from both SLIM and RISKSLIM by shifting the optimization focus from predictive accuracy or calibration to explicit decision utility. While SLIM minimizes a 0-1 loss for binary classification and RISKSLIM optimizes logistic loss to improve calibration and risk estimation, the proposed approach directly maximizes the weighted net benefit across multiple decision thresholds, effectively optimizing the area under the net benefit curve (AUNBC). This utility-driven perspective integrates DCA within the learning process. Thus, it enables the model to align predictive performance with practical decision outcomes rather than post-hoc evaluation. Structurally, all three models share the use of sparse integer coefficients for interpretability, but the proposed model introduces multiple integer intercepts and decision-dependent variables to accommodate piecewise constant risk probabilities tailored to user-defined thresholds. Unlike SLIM’s single binary output and RISKSLIM’s continuous risk scores, the proposed model generates calibrated, piecewise risk estimates that explicitly balance discrimination and calibration with decision utility. \Cref{tab:rssdnb_comparison} summarizes the key differences between RSS-DNB, SLIM, and RISKSLIM from multiple perspectives. Finally, it generalizes the theoretical framework of SLIM by extending learning capacity results to multi-threshold risk scoring. This clearly demonstrates comparable generalization performance when coefficient bounds are sufficiently broad.

\begin{table}[htbp]
\centering
\caption{Comparison of RSS-DNB, SLIM, and RISKSLIM.}
\label{tab:rssdnb_comparison}
\renewcommand{\arraystretch}{1.3}
\begin{tabularx}{\textwidth}{@{}l l X X X@{}}
\toprule
& & \textbf{RSS-DNB} & \textbf{SLIM} & \textbf{RISKSLIM} \\
\midrule
\multirow{6}{*}{\rotatebox{90}{\textbf{Target/Aim}}}
 & Objective
   & Weighted sum of net benefits across multiple decision thresholds (explicitly utility-driven).
   & 0--1 loss for binary classification (prediction accuracy).
   & Logistic loss (calibration, discrimination, risk estimation). \\[2.2em]
 & Net Benefit
   & Directly incorporates decision curve analysis and evaluates utility as area under net benefit curve.
   & Not included (focuses on discrimination via accuracy).
   & Not included directly (focuses on risk probabilities via calibration). \\
\midrule
\multirow{9}{*}{\rotatebox{90}{\textbf{Output/Prediction}}}
 & Output Type
   & Predicts risk probabilities via piecewise constant mapping determined by integer thresholds.
   & Binary classification only.
   & Probabilistic risk scores. \\[1.5em]
 & Thresholds
   & Multiple, user-defined thresholds for decision analysis.
   & One default threshold.
   & Calibrated continuous risk scores. \\[1.5em]
 & \makecell[l]{Calibration \&\\ Discrimination}
   & Explicitly incorporated and analyzed for net benefit.
   & Not explicitly addressed, but sparse coefficients provide interpretability.
   & Designed to improve calibration via logistic link. \\
\midrule
\multirow{9}{*}{\rotatebox{90}{\textbf{Structure/Variables}}}
 & \makecell[l]{Coefficient\\ constraints}
   & Sparse integer coefficients, finite integer sets.
   & Sparse integer coefficients, finite integer sets.
   & Sparse integer coefficients, finite integer sets. \\[1.5em]
 & Intercepts
   & Multiple integer intercepts for each threshold.
   & Single intercept for binary separation.
   & Single intercept (or logistic model). \\[1.5em]
 & \makecell[l]{Decision\\ variables}
   & Binary assignment, selection vector, coefficients, threshold vector, and sparsity indicator.
   & Sparse selection for coefficients.
   & Sparse selection for risk score coefficients. \\[1.5em]
\bottomrule
\end{tabularx}
\end{table}

\paragraph{Decision Curve Analysis}
In order to comprehensively evaluate the net benefit across a range of thresholds, Talluri and Shete proposed a measure named weighted area under the net benefit curve (WA-NBC) to perform decision curve analysis, which provides a reasonable method to compare two competing models crossing in the range of interest \citep{talluriUsingWeightedArea2016}. In our study, we used the area under the net benefit curve as a measure of utility.

Rousson and Zumbrunn showed that, for a given decision threshold and an estimate of disease prevalence, the optimal operating point on the Receiver Operating Characteristic (ROC) curve — the one that maximizes the net benefit — can be identified as the point where the slope of the curve equals a specific value determined jointly by the prevalence and the threshold \citep{roussonDecisionCurveAnalysis2011}. This provides a new perspective on how to select an optimal cutoff point for a given model, but does not address the issue of how to construct a model that optimizes the net benefit, which is the focus of our work.

Van Calster et al. defined a hierarchy of four increasingly high levels of calibration: mean, weak, moderate, and strong calibration. Mean calibration refers to the observed event rate equal to the average predict risk; weak calibration is known as logistic calibration; moderate calibration means that the predicted risks are consistent with the observed event frequencies within each risk stratum; and strong calibration requires that predicted risks are consistent with the observed event frequencies for every covariate pattern. They further demonstrated that if a risk prediction model achieves moderate calibration, the net benefit of decisions based on this model will not be lower than that of the baseline strategies -- treating all or treating none \citep{vancalsterCalibrationHierarchyRisk2016}.

Recently, Vickers et al. hypothesized that directly optimizing net benefit during model development --rather than relying on unweighted loss functions such as mean squared error-- may yield models with greater clinical utility. They also called for methodological research to identify the scenarios in which the net benefit should be adopted as the objective function for the development of models \citep{vickersHypothesisNetBenefit2025}.

Our work is also related to cost-sensitive learning, which considers misclassification costs. Both methods take into account asymmetric loss; however, they differ in how they incorporate this asymmetry into the learning objective. Cost-sensitive learning typically introduces asymmetry through a predefined cost matrix or a cost-weighted prediction loss, where the relative costs of false positives and false negatives are specified a priori \citep{Ling2010}. In contrast, RSS-DNB directly optimizes the area under the net benefit curve, which summarizes decision utility across a range of thresholds rather than relying on a single predefined cost setting. Moreover, RSS-DNB aims to learn a sparse and interpretable scoring system whose model structure is directly optimized for decision utility.

The main contributions of this paper are as follows. First, we propose a novel decision-oriented risk scoring model --the Risk Scoring System for Decision Net Benefit (RSS-DNB)-- that directly maximizes the AUNBC during training, thereby aligning the learning objective with the goal of decision utility. The model is formulated as a sparse mixed-integer linear program, producing transparent, integer-coefficient scoring rules suitable for high-stakes applications. Second, we establish rigorous theoretical relationships between net benefit, discrimination, and calibration. Specifically, we prove that a high level of utility implies a correspondingly high level of discrimination (\Cref{thm:auroc_aunbc} and \Cref{cor:auroc_aunbc}), and that a model maximizing AUNBC can always be adjusted to achieve moderate calibration (\Cref{thm:improve_aunbc} and \Cref{cor:calibration}). These results suggest that, under our construction, optimizing net benefit does not necessarily come at the expense of the traditional evaluation criteria. Third, we characterize the learning capacity of the proposed integer scoring framework, showing that sufficiently large coefficient bounds allow the integer model to match the weighted net benefit of any real-valued linear classifier (\Cref{thm:learning_capacity} and \Cref{cor:learning_capacity}), and we derive finite-sample generalization bounds for the empirical-to-expected net benefit gap (\Cref{thm:generalization}). Fourth, we propose a simulated annealing algorithm (RSS-DNB-SA) that efficiently scales the approach to large datasets where exact mixed-integer programming becomes computationally prohibitive. Fifth, we evaluate the proposed method on eight public benchmark datasets and present a case study on a large-scale credit risk dataset, demonstrating competitive or superior performance relative to SLIM, RISKSLIM, logistic regression, LASSO, and decision trees across discrimination, calibration, utility, and sparsity.
\section{Method}
\label{sec:Method}
We start with a dataset of $N$ i.i.d. training samples $D_N=\{(\bm {x}_j,y_j)\}_{j=1}^N$ where $\bm{x}_j \in \mathcal{X} \subseteq \mathbb{R}^P$ denotes a set of predictors $\left[\bm{x}_{j1},\bm{x}_{j2},\dots,\bm{x}_{jP}\right]^\top$ and $y_j \in \mathcal{Y}=\{0,1\}$ denotes a class label. The sample with $y_j=1$ is called a positive sample, and the sample with $y_j=0$ is called a negative sample. We will focus on developing a risk scoring model $c:\mathcal{X} \to [0,1]$ that predicts the probability of a positive sample occurring ($y=1$) according to a set of predictors $\bm{x} \in \mathcal{X}$. We evaluate the binary classification performance of the risk scoring model at different thresholds. Let there be $M$ predefined thresholds, $0 < p_1 < p_2< \cdots < p_M < 1$. For each threshold $p_i$, we define $\mathrm{TP}_i$ and $\mathrm{FP}_i$ as the number of true positives and false positives, respectively, above $p_i$. Specifically, 
\begin{align}
    \mathrm{TP}_i = \sum_{j=1}^N {I}\left(c(\bm{x}_j) \ge p_i, y_j=1\right), \quad \mathrm{FP}_i=\sum_{j=1}^N {I}\left(c(\bm{x}_j) \ge p_i, y_j=0\right),
\end{align}
where ${I}(\cdot)$ denotes an indicator function. For convenience, we set $p_0=0$ and $p_{M+1}=1$, and accordingly $\mathrm{TP}_0$ is equal to the number of positive samples $N^+$, $\mathrm{FP}_0$ is equal to the number of negative samples $N^-$, and we define $\mathrm{TP}_{M+1}=\mathrm{FP}_{M+1}=0$.

The goal of this study is to establish a sparse linear risk scoring model with integer coefficients so that the model can achieve the maximum net benefit under given thresholds.  The net benefit is calculated over a range of threshold probabilities, defined as:
\begin{align}
    \text{Net Benefit at } p_i = \frac{\mathrm{TP}_i}{N}-\frac{\mathrm{FP}_i}{N}\cdot \frac{p_i}{1-p_i}, \quad i=0,1,\cdots,M.
\end{align}
We learn the values of the coefficients $\bm{\lambda}=[\lambda_1,\cdots,\lambda_P ]^\top \in \mathcal{L}\subseteq \mathbb{R}^P$ and a series of intercepts $\bm{T}=[T_0,T_1,\cdots,T_M]^\top \in \mathcal{B} \subseteq \mathbb{R}^{M+1}$ corresponding to thresholds $\bm{p}=\left[p_0,p_1,\cdots,p_M\right]$ from the training data by solving an optimization problem of the following form:
\begin{equation}\label{ilp:original}
\begin{aligned}
     \min_{\bm{\lambda},\bm{T}} \quad & \displaystyle -\frac{1}{N}\sum_{i=0}^M \omega_i\left({\mathrm{TP}_i} - {\mathrm{FP}_i}\cdot\frac{p_i}{1-p_i} \right) + C_0\|\bm{\lambda}\|_0 &  \\
     \text{s.t.}  \quad & \displaystyle \mathrm{TP}_i=\sum_{j=1}^N {I}\left(\bm{x}_j^\top\bm{\lambda}\ge T_i, y_j=1\right), & i=0,1,\cdots,M,\\
     \displaystyle  & \displaystyle \mathrm{FP}_i=\sum_{j=1}^N {I}\left(\bm{x}_j^\top\bm{\lambda}\ge T_i, y_j=0\right), & i=0,1,\cdots,M, \\
     \displaystyle  &\bm{\lambda}\in\mathcal{L},\bm{T}\in\mathcal{B},&
\end{aligned}    
\end{equation}
where $\omega_i$ is the weight of the net benefit under the threshold $p_i$ satisfying $0 \le \omega_i \le1$, $i=0,1,\cdots, M$, and $\sum_{i=0}^M \omega_i=1$; $C_0>0$ is the penalty factor associated with the $l_0$-norm of $\bm{\lambda}$; $\mathcal{L}$ and $\mathcal{B}$ are two finite sets of integers. 

In practice, once the coefficient set $\mathcal{L}$ is specified and the dataset is given, the range of values for the linear scores is determined. Then, the intercept set $\mathcal{B}$ can be taken as all integers between the minimum and maximum values of the linear combination. Therefore, the main design choice lies in selecting a suitable set of coefficient set $\mathcal{L}$. This choice involves a trade-off between interpretability and computational complexity. Typically, $\mathcal{L}$ is restriced to a small bounded set of integers (e.g., $\{-5,-4,\cdots,5\}$ or $\{-10,-9,\cdots,10\}$), which makes the scoring system simple and easy to use. When prior knowledge is available, additional structure can be imposed on $\mathcal{L}$. For example, if a feature is known to have a positive effect on the outcome, its coefficients can be set to a non-negative value. Furthermore, the range of $\mathcal{L}$ may be guided by the scale of the features, or it can be tuned by validation, i.e., selecting the minimum range that achieves satisfactory performance. The first term in the objective function of the problem (\ref{ilp:original}) represents the weighted sum of the negative net benefit under different thresholds; and the second term represents the $l_0$-norm penalty of the coefficients. Once we obtain the coefficients and intercepts, given a sample $\bm{x}\in \mathcal{X}$, the corresponding risk probability can be computed as:

\begin{align} \label{eq:risk prob}
    c(\bm{x}) = \left\{
    \begin{array}{cl}
    q_0,     &  \text{if }\bm{x}^\top\bm{\lambda}<T_1;  \\
    q_i,     &  \text{if } T_i \le \bm{x}^\top\bm{\lambda} < T_{i+1},\, i=1,2,\cdots,M-1;\\
    q_M,     &  \text{otherwise},
    \end{array}
    \right.
\end{align}
where $q_i \in [p_i,p_{i+1})$ can be given arbitrarily. We will further prove that we can choose the appropriate $q_i$ to ensure that the model is moderately calibrated on the training set if the weighted sum of net benefit is maximized. 

We show formally in Appendix \ref{app:NPhard} that optimization problem (\ref{ilp:original}) is NP-hard. Later in this section, we will use this fact to discuss computational considerations to solve the problem.

\subsection{Illustrative Application to the Give Me Some Credit Dataset} \label{subsec:method_example}
To illustrate how the proposed model operates in practice, we will first present an example using the publicly available Give Me Some Credit dataset \citep{give-me-some-credit-20210326}. This dataset contains information on 150,000 borrowers and is one of the most widely used benchmarks for modeling credit risk. The objective is to develop a risk scoring model that predicts whether a borrower will experience serious delinquency in the next two years. Accurate credit risk assessment is crucial for financial institutions because it helps identify high-risk borrowers and supports lending decisions.

The dataset comprises 11 variables, including a binary response variable and ten explanatory variables. The response variable indicates whether a borrower experience 90 days past due delinquency or worse. The explanatory variables describe characteristics of borrowers, such as age, monthly income, debt ratio, credit utilization, the number of credit accounts and loans, the number of dependents in the family, and several measures of past delinquency.

Prior to model training, the features were preprocessed to improve the data quality. Missing values and invalid observations were imputed using the median of the corresponding features. Extreme values of continuous variables were truncated to reduce the impact of outliers. Moreover, monthly income and debt ratio were log-transformed to improve numerical stability. The transformed monthly income was then rounded, while the transformed debt ratio and revolving credit utilization were rounded to one decimal place. Age, the numbers of open credit lines and loans, the number of real estate loans, and the number of dependents were converted into binary variables. Finally, the three delinquency-related variables (30--59, 60--89, and at least 90 days past due) were combined into a single binary feature indicating whether the borrower had any previous delinquency history.

RSS-DNB was then trained using the preprocessed features. In this application, we set $p_i=\frac{i}{10}$ ($i=0,\ldots,9$), $\mathcal{L}=\{-10,\ldots,10\}$, and $C_0=0$, and solved the optimization problem in (\ref{ilp:original}). The resulting model selected five predictors and produced a sparse additive scorecard with integer coefficients. Table~\ref{tab:system1} presents the point allocation for each selected feature, while Table~\ref{tab:system2} provides the estimated probability of serious delinquency corresponding to each total score. Each lender has an implicit risk preference, which can be interpreted as a decision threshold. For example, a threshold of 10\% means that a lender consider a borrower acceptable when the estimated probability of serious delinquency does not exceed 10\%. Using this scoring system to support lending decisions, a borrower whose total score meets or exceeds 10 points (the corresponding risk exceeds 10\%) would be flagged as a high-risk applicant and may require additional review, whereas borrowers with lower scores would be considered lower risk.

\begin{table}[htbp]
\centering
\caption{Integer-based risk scoring system for predicting serious delinquency}
\label{tab:system1}
\begin{tabular}{llc}
\toprule
\textbf{Predictor} & \textbf{Transformation} & \textbf{Coefficient} \\
\midrule
Age & $I(x \ge 52)$ & -2 \\
Revolving credit utilization & $\text{round}(\min\{x,1.5\},1)$ & 10 \\
Number of open credit lines and loans & $I(x \ge 8)$ & 1\\
Number of real estate loans & $I( x \ge 1 )$ & -1\\
Previous delinquency & $I(x \ge 1)$ & 9 \\
\bottomrule
\end{tabular}
\end{table}

\begin{table}[htbp]
    \centering
    \caption{Predicted probability of serious delinquency according to total risk score}
\begin{tabular}{rccccccc}
    \toprule
    \textbf{Total Score} & $\le$ 9 & 10--14 & 15--17 & 18--19 & 20 &21--24 & $\ge$ 25 \\
    \midrule
    \textbf{Predicted Risk}     & 2.8\% &14.0\% &26.8\% &37.1\% &48.9\% & 53.4\% & 90.0\% \\
    \bottomrule
    \end{tabular}   
    \label{tab:system2}
\end{table}

This illustrative application is intended to provide insight into the model-building process and the characteristics of the resulting scoring system, rather than to evaluate predictive performance or decision utility. A comprehensive evaluation is provided as a case study in Section \ref{sec:application}.

Next, we aim to clarify whether minimizing the negative weighted sum of net benefits across different thresholds can yield a model that exhibits a high level of both discrimination and calibration. To this end, we investigate how net benefit relates to both discrimination and calibration.
\subsection{Discrimination and Utility} \label{subsec:discrimination&utility}
We use AUROC to quantify model discrimination. It is a widely used metric for assessing the performance of binary classifiers, reflecting the trade-off between the true positive rate and the false positive rate across varying threshold values. A higher AUROC value indicates better discriminative ability.
\begin{definition}\label{def:auroc}
    The AUROC is given by
    \begin{align}
        \mathrm{AUROC} = \frac{1}{N^+N^-} \sum_{i=0}^M \left(\mathrm{FP}_i-\mathrm{FP}_{i+1}\right)\cdot \mathrm{TP}_i.
    \end{align}
\end{definition}
Similarly, we use AUNBC as a measure of decision utility, following the approach proposed by \citet{talluriUsingWeightedArea2016}. An intuitive interpretation of AUNBC and an illustrative example are provided in Appendix \ref{app:interpretation_aunbc}. 
\begin{definition}\label{def:aunbc}
    The AUNBC is given by
    \begin{align}
        \mathrm{AUNBC} = \frac{1}{N} \sum_{i=0}^M \left(p_{i+1} - p_i\right) \left(\mathrm{TP}_i - \mathrm{FP}_i\cdot \frac{p_i}{1-p_i}\right).
    \end{align}
\end{definition}
\begin{remark}
It can be found that AUNBC is a special case of the weighted sum of net benefits across different thresholds, where the weights are given by $\omega_i=p_{i+1}-p_i$, $i=0,1,\cdots,M$. 
\end{remark}

Our first two results show a fundamental relationship between AUROC and AUNBC. For a given set of thresholds, the upper bound of AUNBC is a monotonically increasing function of AUROC, while its lower bound is independent of AUROC.
\begin{theorem}\label{thm:auroc_aunbc}
    For given thresholds $0=p_0 < p_i < \cdots < p_{M+1}=1$, let $P_k=\sum_{i=1}^k \frac{(p_{i+1}-p_i)p_i}{1-p_i}$, $k=1,2,\cdots,M$, then
    \begin{align}
        a_0 p_1 - (1-a_0)P_M \le \mathrm{AUNBC} \le \max_{1\le k \le M} A_k(\mathrm{AUROC};a_0),
    \end{align}
    where $a_0=\frac{N^+}{N}$, and $A_k(\cdot;a_0)$ is a function defined in $[0,1]$, satisfying
    \begin{align} \label{eq:Ak}
        A_k(x;a_0)=\left\{\begin{array}{cc}
        -a_0(1-p_k)(1-x) + a_0 -(1-a_0)P_k,     &  \text{if } x \le 1-\frac{(1-a_0)P_k}{(1-p_k)a_0}; \\
        a_0 - 2\sqrt{P_k(1-p_k)a_0(1-a_0)(1-x)},     & \text{otherwise}.
        \end{array}\right.
    \end{align}
Moreover, these bounds are tight.
\end{theorem}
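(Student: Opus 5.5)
The plan is to work in normalized coordinates. Write $t_i=\mathrm{TP}_i/N^+$ and $f_i=\mathrm{FP}_i/N^-$. Both sequences are nonincreasing in $i$, with $t_0=f_0=1$ and $t_{M+1}=f_{M+1}=0$. Set $w_i=p_{i+1}-p_i$ and $r_i=p_i/(1-p_i)$, so that $r_0=0$ and $P_k=\sum_{i=1}^k w_ir_i$. Then Definitions~\ref{def:auroc} and \ref{def:aunbc} become
\begin{align*}
\mathrm{AUNBC}&=a_0p_1+\sum_{i=1}^M w_i\bigl(a_0t_i-(1-a_0)f_ir_i\bigr),\\
1-\mathrm{AUROC}&=\sum_{i=0}^M (f_i-f_{i+1})(1-t_i).
\end{align*}
Everything below is stated in terms of these monotone sequences in $[0,1]$.

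\textbf{Lower bound.} Use $t_i\ge 0$ and $f_i\le 1$ for $i\ge1$. This gives $\mathrm{AUNBC}\ge a_0p_1-(1-a_0)P_M$ immediately. Tightness comes from a scorer that ranks every negative above every positive, with all negatives above $T_M$ and all positives below $T_1$; then $t_i=0$ and $f_i=1$ for $i\ge 1$.

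\textbf{Extremal family for the upper bound.} Fix $k\in\{1,\dots,M\}$ and take $t_i=1$ for $i<k$, $t_i=t$ for $i\ge k$, $f_i=f$ for $1\le i\le k$, and $f_i=0$ for $i>k$. A direct computation gives $\mathrm{AUROC}=1-f(1-t)$ and
\[
\mathrm{AUNBC}=a_0-a_0(1-p_k)(1-t)-(1-a_0)fP_k .
\]
Now fix $x=\mathrm{AUROC}$, so $f=(1-x)/u$ with $u=1-t$. We minimize $a_0(1-p_k)u+(1-a_0)P_k(1-x)/u$ over $u\in[1-x,1]$, where the left endpoint comes from $f\le 1$. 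By AM--GM the unconstrained minimizer is $u^*=\sqrt{(1-a_0)P_k(1-x)/(a_0(1-p_k))}$, which produces the square-root branch of \eqref{eq:Ak}. The constraint $u^*\ge 1-x$ is equivalent to $x> 1-\frac{(1-a_0)P_k}{(1-p_k)a_0}$. When it fails, the optimum sits at $u=1-x$, $f=1$, and we get the linear branch. So $A_k(x;a_0)$ is exactly the best value within this family, which also proves tightness of the upper bound, up to realizability by scores. Each $A_k$ is nondecreasing in $x$, a fact needed later.

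\textbf{General sequences (the main obstacle).} A first easy step gives a pointwise bound for every $k$. Since $t_i\le t_k$ for $i\ge k$, we have $\sum_{i\ge k}w_i(1-t_i)\ge(1-p_k)(1-t_k)$. Since $f_i\ge f_k$ for $i\le k$, we have $\sum_{i\le k}w_if_ir_i\ge f_kP_k$. Together,
\[
\mathrm{AUNBC}\le B_k:=a_0-a_0(1-p_k)(1-t_k)-(1-a_0)f_kP_k .
\]
By the previous step, $B_k\le A_k(1-g_k)$ with $g_k=f_k(1-t_k)$. The difficulty is that $1-x=\sum_k(f_k-f_{k+1})(1-t_k)$ may exceed every single product $g_k$ when the ROC staircase has many steps. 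In that case no single $k$ directly yields $\max_kA_k(x)$.

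To close this gap I would run an extreme-point reduction. For fixed $f$, both $\mathrm{AUNBC}$ and the constraint $\mathrm{AUROC}=x$ are linear in $t$, and likewise linear in $f$ for fixed $t$. Maximizing $\mathrm{AUNBC}$ over monotone $t\in[0,1]^{M}$ at fixed $\mathrm{AUROC}$ is therefore a linear program. Its optimal vertices have $t$ taking values in $\{0,1\}$ except on at most one block of constant fractional value, and the same holds for $f$. Alternating these two LP steps never decreases $\mathrm{AUNBC}$, and should drive any configuration to a two-level configuration. I would then check that the two-level configurations reduce to the family above, with the single jump index $k$, possibly after merging a leftover $t_i=0$ or $f_i=1$ tail, which only worsens the value. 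Combined with monotonicity of $A_k$, this yields $\mathrm{AUNBC}\le\max_kA_k(\mathrm{AUROC};a_0)$.

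The delicate part will be showing that the alternating procedure terminates in this specific family rather than in a mixed two-block configuration. I would handle this by a direct comparison between the two candidate blocks, using the AM--GM form of $A_k$.
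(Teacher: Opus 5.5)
Your lower bound, its tightness example, and the value computation inside your one-parameter family are correct. The gap is the general upper bound, which is the real content of the theorem, and it is not proved. The pointwise bound $B_k$ discards the true-positive loss at indices $i<k$ and sees only the single rectangle $f_k(1-t_k)$ of the ROC deficit. The alternating-LP reduction does not close this. The $t$-step returns two jumps whenever a single jump would force $t<0$, and such configurations can be fixed points of both steps. For instance, take $M=2$, $(p_1,p_2)=(0.2,0.8)$ and $x=0.2$: then $t=(0.4,0)$ and $f=(1,0.5)$ are each the unique LP optimum given the other, yet they lie outside your family. So the "delicate part" you flag genuinely fails.

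The missing idea is an averaging choice of $K$ that keeps the \emph{whole} true-positive loss. With $\delta_i=t_{i-1}-t_i$ and $D=\sum_{i=1}^M\delta_i(1-p_i)$, summation by parts gives
\[
\mathrm{AUNBC}=a_0-a_0D-(1-a_0)\sum_{i=1}^M w_ir_if_i,\qquad 1-x=\sum_{i=1}^M\delta_i(1-p_i)\,\frac{f_i}{1-p_i}.
\]
The weights $\delta_i(1-p_i)$ sum to $D$, so some $K$ has $f_K/(1-p_K)\ge(1-x)/D$. Put $\phi=(1-p_K)(1-x)/D\le f_K\le 1$. Using $f_i\ge\phi$ for $i\le K$ and $f_i\ge 0$ otherwise,
\[
\mathrm{AUNBC}\le a_0-\frac{a_0(1-p_K)(1-x)}{\phi}-(1-a_0)P_K\,\phi\le A_K(x;a_0).
\]
The last step is your own one-variable optimization over $\phi\in(0,1]$. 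The cases $D=0$ or $x=1$ are trivial, since then $\mathrm{AUNBC}\le a_0$.

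This is exactly the paper's proof. Its two-level pair $(a',b')$ has $b_1'=(1-a_0)\phi$, keeps AUROC and the true-positive part of AUNBC fixed, and does not decrease AUNBC. No extreme-point analysis is needed, because this comparison pair is never required to be feasible.

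Separately, your claim that $A_k(x;a_0)$ is exactly the best value within your family is false. You imposed $u\ge 1-x$ but dropped $u\le 1$ (i.e.\ $t\ge 0$). When $u^*>1$, the family optimum sits at $u=1$, strictly below $A_k$. For example, with $M=1$, $p_1=\tfrac12$ and $a_0=0.1$, one has $A_1(x)=0.1-0.3\sqrt{1-x}$ on all of $[0,1]$. Yet at $x=0$ the only admissible pair is $t_1=0$, $f_1=1$, which gives $\mathrm{AUNBC}=-0.4<-0.2=A_1(0)$.

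So the upper curve is attained only where $u^*\le 1$. The paper's argument has the same blind spot: it checks $b_1'\le b_0$ but never $a_K'\ge 0$. The tightness assertion in the statement therefore needs this extra condition, and cannot be proved as stated.
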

\begin{proof}
    See the appendix.
\end{proof}
The next corollary shows that the lower bound of AUROC is a monotonically increasing function of AUNBC, whereas the upper bound of AUROC is independent of AUNBC.
\begin{corollary}\label{cor:auroc_aunbc}
    For given thresholds $0=p_0 < p_i < \cdots < p_{M+1}=1$, let $P_k=\sum_{i=0}^k \frac{(p_{i+1}-p_i)p_i}{1-p_i}$, $k=1,2,\cdots,M$, then
    \begin{align}\label{eq:auroc}
        1 \ge \mathrm{AUROC} \ge \max\left\{\min_{1\le k \le M} B_k(\mathrm{AUNBC};a_0), 0\right\},
    \end{align}
    where $a_0=\frac{N^+}{N}$, and $B_k(\cdot;a_0)$ is a function defined on $\left[a_0p_1-(1-a_0)P_M,a_0\right]$, satisfying 
    \begin{align} \label{eq:Bk}
        B_k(y;a_0) = \left\{
        \begin{array}{cc}
            1-\frac{a_0-y-(1-a_0)P_k}{a_0(1-p_k)},  & \text{if } y \le a_0-2(1-a_0)P_k; \\
            1-\frac{(a_0-y)^2}{4P_k(1-p_k)a_0(1-a_0)}, & \text{otherwise}.
        \end{array}
        \right.
    \end{align}    
Moreover, these bounds are tight.
\end{corollary}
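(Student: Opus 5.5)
The plan is to obtain the corollary by inverting the upper bound of \Cref{thm:auroc_aunbc}, taking that theorem as given. Its upper bound says $\mathrm{AUNBC} \le \max_{1\le k\le M} A_k(\mathrm{AUROC};a_0)$, so there is some index $k^*$ with $\mathrm{AUNBC} \le A_{k^*}(\mathrm{AUROC};a_0)$. The first step is to show that each $A_k(\cdot;a_0)$ is continuous and nondecreasing on $[0,1]$ and to compute its inverse, which should be exactly $B_k(\cdot;a_0)$.

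On the linear branch, $A_k(x) = a_0 - (1-a_0)P_k - a_0(1-p_k)(1-x)$ has positive slope. On the square-root branch, $a_0 - 2\sqrt{P_k(1-p_k)a_0(1-a_0)(1-x)}$ increases in $x$. The two branches meet at $x^* = 1-\frac{(1-a_0)P_k}{(1-p_k)a_0}$, where both equal $a_0 - 2(1-a_0)P_k$. Indeed, the square-root term gives $2\sqrt{(1-a_0)^2P_k^2} = 2(1-a_0)P_k$.

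Solving $y = A_k(x)$ for $x$ on each branch gives the two formulas for $B_k$:
\begin{itemize}
\item for $y \le a_0-2(1-a_0)P_k$ (the linear part), $x = 1-\frac{a_0-y-(1-a_0)P_k}{a_0(1-p_k)}$;
\item otherwise, squaring $a_0-y = 2\sqrt{\cdots}$ gives $x = 1-\frac{(a_0-y)^2}{4P_k(1-p_k)a_0(1-a_0)}$.
\end{itemize}
The branch switch point $y=a_0-2(1-a_0)P_k$ is exactly the image of $x^*$, which matches \eqref{eq:Bk}. Since $A_k$ is nondecreasing, $\mathrm{AUNBC}\le A_{k^*}(\mathrm{AUROC})$ gives $\mathrm{AUROC} \ge B_{k^*}(\mathrm{AUNBC})$, and therefore $\mathrm{AUROC} \ge \min_k B_k(\mathrm{AUNBC})$. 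Combined with the trivial bounds $0\le\mathrm{AUROC}\le 1$, which hold because $\mathrm{AUROC}$ is a normalized count of ordered pairs, this yields \eqref{eq:auroc}.

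The main obstacle is domain bookkeeping. The function $B_k$ is declared on $[a_0p_1-(1-a_0)P_M, a_0]$, while the range of $A_k$ on $[0,1]$ is $[a_0p_k-(1-a_0)P_k, a_0]$, which may not cover the whole interval. When $\mathrm{AUNBC}$ lies below $A_k(0)$, the formula for $B_k$ extends linearly and returns a negative value. The inequality $\mathrm{AUROC}\ge B_k$ then holds trivially, and this is exactly why the outer $\max\{\cdot,0\}$ is needed. I also need to handle a small index inconsistency between the theorem and the corollary in the definition of $P_k$ (sum from $i=0$ versus $i=1$). The $i=0$ term vanishes because $p_0=0$, so the two definitions agree.

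For tightness of the lower bound, I would reuse the extremal configurations from the tightness part of \Cref{thm:auroc_aunbc}. For a given AUNBC value, take the maximizing $k$ together with a configuration attaining $\mathrm{AUNBC}=A_k(\mathrm{AUROC})$; there, $\mathrm{AUROC} = B_k(\mathrm{AUNBC})$. Choosing the $k$ that minimizes $B_k$ corresponds to the $k$ achieving the max in the theorem, so equality is attained in the corollary. The upper bound $1$ is attained by a perfectly separating score with its positives placed so as to realize any admissible AUNBC, using the same constructions as in the theorem's lower-bound tightness argument. I would verify achievability up to the integrality of counts, as the theorem presumably does.
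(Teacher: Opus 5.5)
Your derivation of the inequality follows the paper's argument. Each $A_k(\cdot;a_0)$ is continuous and increasing on $[0,1]$, $B_k(\cdot;a_0)$ is its inverse, and you combine $\mathrm{AUROC}\ge B_k(\mathrm{AUNBC};a_0)$ with $0\le\mathrm{AUROC}\le 1$.

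Your version is actually the more careful one. The paper asserts $\mathrm{AUROC}\ge B_k(\mathrm{AUNBC};a_0)$ for every $k$, which does not follow from $\mathrm{AUNBC}\le\max_k A_k(\mathrm{AUROC};a_0)$. Your maximizing index $k^*$ (equivalently, $\max_k A_k(x)\ge y\iff x\ge\min_k B_k(y)$ for $x\in[0,1]$) is exactly what justifies the $\min$.

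There is one small slip in your domain bookkeeping. $A_k([0,1])=[a_0p_k-(1-a_0)P_k,\,a_0]$ holds only when the kink $x^*=1-\frac{(1-a_0)P_k}{(1-p_k)a_0}$ is nonnegative. If $x^*<0$, then $A_k(0)=a_0-2\sqrt{P_k(1-p_k)a_0(1-a_0)}$, and just below that value $B_k$ is still given by the square-root formula. It is negative there too, so your conclusion is unaffected.

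The tightness paragraph, however, does not go through; note that the paper's proof does not address tightness at all. First, $\mathrm{AUROC}=1$ cannot be paired with every admissible $\mathrm{AUNBC}$. Write $a_i=\mathrm{TP}_i/N$, $b_i=\mathrm{FP}_i/N$, $b_0=1-a_0$. Then $\mathrm{AUNBC}-\bigl(a_0p_1-(1-a_0)P_M\bigr)=\sum_{i=1}^M(p_{i+1}-p_i)\bigl(a_i+(b_0-b_i)\tfrac{p_i}{1-p_i}\bigr)$. Reaching the left endpoint of the domain therefore forces $a_i=0$ and $b_i=b_0$ for all $i\ge 1$, and then $\mathrm{AUROC}=0$. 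The configuration attaining the theorem's lower bound is the opposite of what you need. All you can claim is that the value $1$ is attained by some configurations.

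Second, reducing tightness of the AUROC lower bound to pointwise tightness of the theorem's upper bound is logically sound, but that premise is false in general. The extremal configuration in the theorem's proof needs $a_K'=a_0-\frac{(1-G)a_0b_0}{b_1'}\ge 0$, i.e.\ $b_1'\ge(1-G)b_0$. On the square-root branch, the AM--GM choice $b_1'=\sqrt{(1-p_K)a_0b_0(1-G)/P_K}$ violates this whenever $(1-G)b_0P_K>(1-p_K)a_0$.

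Here is a concrete counterexample. Take $M=1$, $p_1=\tfrac12$, $a_0=0.1$, so $P_1=\tfrac12$ and $B_1(y;a_0)=1-(0.1-y)^2/0.09$. At $\mathrm{AUNBC}=-0.2$ the corollary only gives $\mathrm{AUROC}\ge 0$. But in this setting $\mathrm{AUNBC}=0.05+0.5(a_1-b_1)$ and $\mathrm{AUROC}=1-b_1(0.1-a_1)/0.09$. With $b_1=a_1+0.5$ and $0\le a_1\le 0.1$, one has $b_1(0.1-a_1)\le 0.05$, hence $\mathrm{AUROC}\ge 4/9$. This value is attained, e.g., with $N=20$, $a_1=0$, $b_1=0.5$.

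So pointwise tightness holds only where the theorem's extremal configuration is feasible, for instance when $b_0P_K\le(1-p_K)a_0$ for the relevant $K$, or when AUROC is close to $1$. You need either to restrict the tightness claim to such a regime or to prove only that the bounds are attained for some values of AUNBC.
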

\begin{proof}
    See the appendix.
\end{proof}
We have now demonstrated that a high level of discrimination does not necessarily imply high utility -- as shown in previous studies \citep{albaDiscriminationCalibrationClinical2017,vickersSimpleStepbystepGuide2019} -- while high utility guarantees a correspondingly high level of discrimination. \Cref{fig:auroc_aunbc_1} visualizes the relationship between AUROC and AUNBC as stated in Theorem \ref{thm:auroc_aunbc}. To further illustrate this relationship, we consider the example in \Cref{subsec:method_example} and plot the curve under this setting in \Cref{fig:auroc_aunbc_2}. In addition to the risk scoring system (RSS-DNB) obtained in \Cref{subsec:method_example}, we also generate two types of synthetic predictions (Appendix \ref{app:additional}, \Cref{alg:synthetic_1,alg:synthetic_2}), and computed their AUROC and AUNBC values. The first type consists of random synthetic predictions. The resulting AUROC-AUNBC pairs all lie below the curve, which provides empirical support of \Cref{thm:auroc_aunbc}.
The second type consists of synthetic predictions constructed according to the proof of \Cref{thm:auroc_aunbc}. The resulting AUROC-AUNBC pairs all lie exactly on the curve, indicating that the bound established in the theorem is tight. Likewise, when AUNBC is plotted on the horizontal axis and AUROC on the vertical axis, the conclusion in the \Cref{cor:auroc_aunbc} can be verified.  

Next, we will explore the relationship between calibration and utility.

\begin{figure}[htbp]
\centering
\begin{subfigure}{0.45\textwidth}
  \centering
  \includegraphics[width=\linewidth]{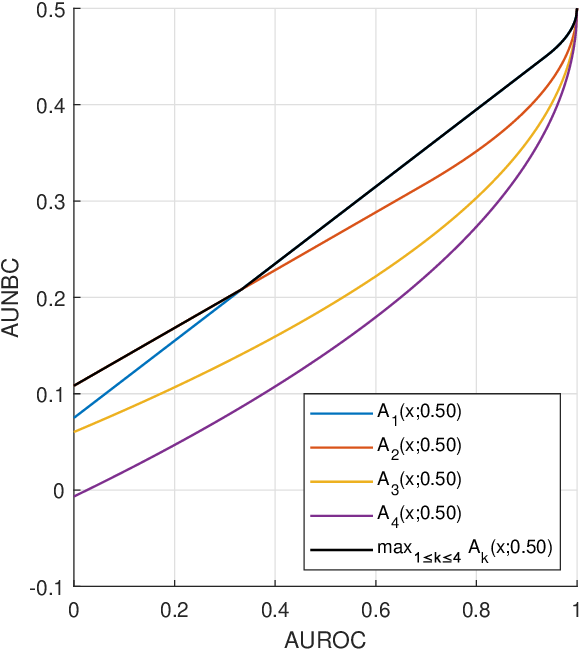}
  \caption{The theoretical relationship.}
  \label{fig:auroc_aunbc_1}
\end{subfigure}
\hfill
\begin{subfigure}{0.45\textwidth}
  \centering
  \includegraphics[width=\linewidth]{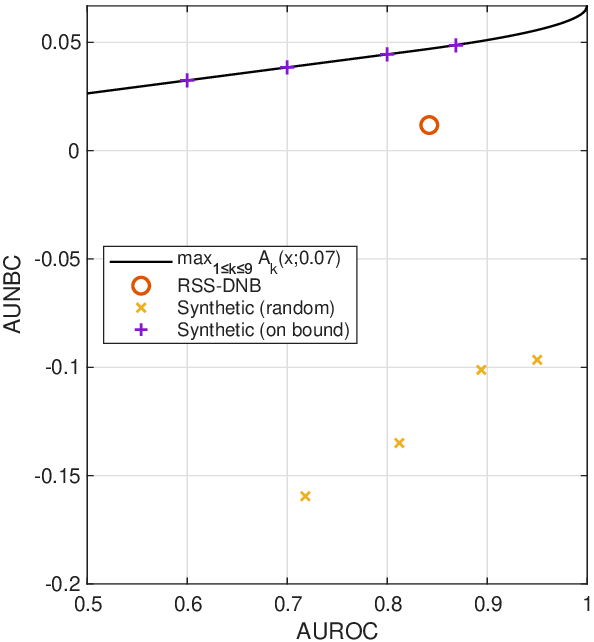}
  \caption{Example with RSS-DNB and synthetic predictions.}
  \label{fig:auroc_aunbc_2}
\end{subfigure}
\caption{Relationship between AUROC and AUNBC. (a) The theoretical relationship, generated with $M=4$, $a_0=0.5$, and $p_i=i/5$ for $i=0,1,\cdots,4$; (b) The relationship under the setting of \Cref{subsec:method_example} with RSS-DNB model and two types of synthetic predictions.}
\end{figure}
\subsection{Calibration and Utility} \label{subsec:calibration&utility}

According to the work of \citet{vancalsterCalibrationHierarchyRisk2016}, we know that a risk model is moderately calibrated if the observed event rate is equal to the predicted risk. For example, if a group of patients is assigned a 10\% risk of disease by a moderately calibrated model, then approximately 10\% of those patients will actually develop the disease. First, we will show that if the model is underestimated (predicted risk $<$ observed event rate) or overestimated (predicted risk $>$ observed event rate) in a subgroup of the model population, we can modify the model to obtain a better one with respect to AUNBC.
\begin{theorem} \label{thm:improve_aunbc}
Let $0 = p_0 < p_1 < \cdots < p_{M} < p_{M+1} = 1$ be a sequence of thresholds and $c:\mathcal{X}\to [0,1]$ be a risk model. Define the intervals $\mathcal{S}_i = [p_i,p_{i+1})$ for $i=0,1,\cdots,M-1$ and $\mathcal{S}_M=[p_M,1]$. For each $i=0,1,\cdots,M$, let $N_i$ and $O_i$ denote, respectively, the number of samples and the number of positive samples whose predicted risk by $c$ lies in the interval $\mathcal{S}_i$. If there exists some $0 \le k \le M$ such that $p_{k+1}N_k<O_k$, then the modified model  $c_k':\mathcal{X} \to [0,1]$ defined by
\begin{equation}
    \begin{aligned}
        c_k'(\bm{x}):=\left\{
        \begin{array}{cc}
        p_{k+1}, & \text{if }c(\bm {x}) \in \mathcal{S}_k;\\
        c(\bm {x}),    & \text{otherwise},
        \end{array}
        \right.
    \end{aligned}
\end{equation}
achieves a strictly higher AUNBC than $c$.

Similarly, if for some $0 \le k \le M$ we have $p_kN_k > O_k$, then the modified model $c_k'':\mathcal{X} \to [0,1]$ defined by
\begin{equation}
    \begin{aligned}
        c_k''(\bm{x}):=\left\{
        \begin{array}{cc}
        p_{k-1}, & \text{if }c(\bm {x}) \in {[p_k,p_{k+1})}\mathcal{S}_k;\\
        c(\bm {x}),    & \text{otherwise},
        \end{array}
        \right.
    \end{aligned}
\end{equation}
also achieves a strictly higher AUNBC than $c$.
\end{theorem}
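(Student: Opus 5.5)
The plan is to compute exactly how AUNBC changes under each modification, using the threshold-by-threshold form of Definition~\ref{def:aunbc}. Write
\[
\mathrm{AUNBC}(c)=\frac1N\sum_{i=0}^M (p_{i+1}-p_i)\Bigl(\mathrm{TP}_i-\mathrm{FP}_i\tfrac{p_i}{1-p_i}\Bigr),
\]
where $\mathrm{TP}_i$ and $\mathrm{FP}_i$ count the positive and negative samples with $c(\bm x_j)\ge p_i$. A sample whose risk lies in $\mathcal S_k$ contributes to exactly the thresholds $p_0,\dots,p_k$. So the whole argument reduces to tracking which threshold indicators change when the samples in $\mathcal S_k$ are reassigned. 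All other samples keep their predictions and therefore contribute identically before and after.

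\textbf{First modification.} I first check the boundary cases. If $k=M$, then $p_{k+1}=1$ and $O_M\le N_M$, so the hypothesis $p_{k+1}N_k<O_k$ cannot hold. Hence $0\le k\le M-1$, and the new value $p_{k+1}$ lies in $\mathcal S_{k+1}$.

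For $c_k'$, each sample formerly in $\mathcal S_k$ is now counted at $p_0,\dots,p_{k+1}$. The only changed quantities are therefore
\[
\mathrm{TP}_{k+1}\mapsto \mathrm{TP}_{k+1}+O_k,\qquad \mathrm{FP}_{k+1}\mapsto \mathrm{FP}_{k+1}+(N_k-O_k).
\]
The resulting change is
\[
\Delta'=\frac{p_{k+2}-p_{k+1}}{N}\Bigl(O_k-(N_k-O_k)\tfrac{p_{k+1}}{1-p_{k+1}}\Bigr)=\frac{p_{k+2}-p_{k+1}}{N(1-p_{k+1})}\bigl(O_k-p_{k+1}N_k\bigr).
\]
Since $k+1\le M$, we have $1-p_{k+1}>0$ and $p_{k+2}>p_{k+1}$. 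The hypothesis makes the last factor positive, so $\Delta'>0$.

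\textbf{Second modification.} Symmetrically, $p_kN_k>O_k\ge 0$ forces $p_k>0$, so $k\ge1$ and $p_{k-1}$ is well defined. (I read the defining condition of $c_k''$ as $c(\bm x)\in\mathcal S_k$; the statement contains a typographical duplication.)

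Reassigning these samples to $p_{k-1}\in\mathcal S_{k-1}$ removes them from threshold $p_k$ only. Thus $\mathrm{TP}_k$ drops by $O_k$ and $\mathrm{FP}_k$ drops by $N_k-O_k$, giving
\[
\Delta''=-\frac{p_{k+1}-p_k}{N}\Bigl(O_k-(N_k-O_k)\tfrac{p_k}{1-p_k}\Bigr)=\frac{p_{k+1}-p_k}{N(1-p_k)}\bigl(p_kN_k-O_k\bigr)>0.
\]

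\textbf{Main obstacle.} There is little genuine difficulty. The step needing care is the bookkeeping that exactly one threshold index is affected. This relies on the new value being at the left endpoint of the adjacent interval: $p_{k+1}\in\mathcal S_{k+1}$ and $p_{k-1}\in\mathcal S_{k-1}$. It also requires handling the endpoint cases $k=M$ and $k=0$, which, as shown above, are excluded automatically by the hypotheses.
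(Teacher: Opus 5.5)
Your proposal is correct and follows the same route as the paper's proof. Both observe that moving the samples of $\mathcal{S}_k$ to $p_{k+1}$ (resp.\ $p_{k-1}$) changes only $\mathrm{TP}_{k+1},\mathrm{FP}_{k+1}$ (resp.\ $\mathrm{TP}_k,\mathrm{FP}_k$), so the sign of the single changed term is the sign of $O_k-p_{k+1}N_k$ (resp.\ $p_kN_k-O_k$). Your version is slightly more careful than the paper's, since you keep the positive weights $p_{i+1}-p_i$ explicit and check that the hypotheses exclude $k=M$ and $k=0$, which the paper leaves implicit.
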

\begin{proof}
    See the appendix.
\end{proof}
\begin{remark}
    The modified model may collapse different predictions into the same level. Informally, the modified model can be further optimized to preserve the original ordering of predictions, for example, by adding a small fraction (e.g., 1\%) of the original scores. This adjustment maintains the order of the predictions at the cost of a negligible loss in calibration and model utility.
\end{remark}

Based on Theorem \ref{thm:improve_aunbc}, we can derive an algorithm to improve the AUNBC of any risk model (Algorithm \ref{alg:improve_aunbc}).
\IncMargin{1em}
\begin{algorithm}\SetKwInOut{Input}{Input}\SetKwInOut{Output}{Output }
    \caption{Improve the AUNBC}\label{alg:improve_aunbc}
        \KwData{$\{(\bm{x}_j,y_j)\}_{j=1}^N \subseteq \mathbb{R}\times \{0,1\}$}
        \Input{Thresholds $0=p_0 < p_1<\ldots<p_M<p_{M+1}=1$, and model $c:\mathcal{X}\to[0,1]$ }
        \Output{Model $c^*$ with improved AUNBC}
        $c^* \leftarrow c$\;
        \For{$i \leftarrow 0$ to $M-1$}{
            ${O}_i \gets \sum_{j=1}^N I\left(c^*(\bm x_j) \in \mathcal{S}_i, y_j=1 \right)$\; \tcp{We use the intervals $\mathcal{S}_i$ defined in \Cref{thm:improve_aunbc}}
            ${N}_i \gets \sum_{j=1}^N I\left(c^*(\bm x_j) \in \mathcal{S}_i \right)$\; 
            \If{$O_i > p_{i+1} N_i $}{$c^*(\bm x) \leftarrow c^*(\bm x) + I(c^*(\bm x_j) \in \mathcal{S}_i) \cdot (p_{i+1} - c^*(\bm x))$ \;}
        }
        \For{$i \leftarrow M$ to $1$}{
            ${O}_i \gets \sum_{j=1}^N I\left(c^*(\bm x_j) \in \mathcal{S}_i, y_j=1 \right)$\; 
            ${N}_i \gets \sum_{j=1}^N I\left(c^*(\bm x_j) \in \mathcal{S}_i \right)$\; 
            \If{$O_i < p_{i} N_i $}{$c^*(\bm x) \leftarrow c^*(\bm x) + I(c^*(\bm x) \in \mathcal{S}_i) \cdot (p_{i-1} - c^*(\bm x))$ \;}
        }
\end{algorithm}

The following proposition established a key property of the model obtained by \Cref{alg:improve_aunbc}.
\begin{proposition} \label{prop:property of the inproved model}
    Let $0 = p_0 < p_1 < \cdots < p_{M} < p_{M+1} = 1$ be a sequence of thresholds and $c:\mathcal{X}\to [0,1]$ be a risk model. Suppose that $c^*$ is obtained by \Cref{alg:improve_aunbc}. Define the intervals $\mathcal{S}_i = [p_i,p_{i+1})$ for $i=0,1,\cdots,M-1$ and $\mathcal{S}_M=[p_M,1]$. For each $i=0,1,\cdots,M$, let $N_i^*$ and $O_i^*$ denote, respectively, the number of samples and the number of positive samples whose predicted risk by $c^*$ lies in the interval $\mathcal{S}_i$. Then, for each $i=0,1,\cdots, M$,
    \begin{equation}
     N_i^* p_i \le O_i^* \le N_i^* p_{i+1}. 
    \end{equation}
\end{proposition}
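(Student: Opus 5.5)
The plan is to track two invariants through the two loops of \Cref{alg:improve_aunbc}, using only how the bin counts change. Write $N_i$ and $O_i$ for the current counts of the current model $c^*$ in $\mathcal{S}_i$.

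\textbf{Boundary bins.} Two inequalities hold trivially throughout. Since $p_0=0$, the lower bound $0\cdot N_0\le O_0$ always holds for $\mathcal{S}_0$. Since $p_{M+1}=1$, the upper bound $O_M\le N_M$ always holds for $\mathcal{S}_M$.

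\textbf{Forward loop.} At step $i$ ($0\le i\le M-1$) the only possible change is that every sample in $\mathcal{S}_i$ is reassigned the value $p_{i+1}\in\mathcal{S}_{i+1}$. This happens only when $O_i>p_{i+1}N_i$. After step $i$, bin $\mathcal{S}_i$ therefore satisfies $O_i\le p_{i+1}N_i$: either the test failed, or the bin is now empty and the bound reads $0\le 0$.

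Later forward steps $i'>i$ only move mass from $\mathcal{S}_{i'}$ into $\mathcal{S}_{i'+1}$, so they never touch $\mathcal{S}_i$. Hence at the end of the forward loop we have
\begin{equation*}
O_i\le p_{i+1}N_i \quad\text{for all } i=0,1,\dots,M.
\end{equation*}
For $i=M$ this is the trivial bound noted above.

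\textbf{Backward loop.} At step $i$ ($M\ge i\ge 1$) the whole of $\mathcal{S}_i$ is moved to $p_{i-1}\in\mathcal{S}_{i-1}$, and this happens only when $O_i<p_iN_i$. I need two facts.

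First, the lower bound becomes permanent. After step $i$, bin $\mathcal{S}_i$ satisfies $O_i\ge p_iN_i$, either because the test failed or because the bin is empty. Later steps $i'<i$ move mass from $\mathcal{S}_{i'}$ to $\mathcal{S}_{i'-1}$ and never touch $\mathcal{S}_i$. So at termination $p_iN_i^*\le O_i^*$ for $i\ge 1$, and for $i=0$ it is trivial.

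Second, the backward loop preserves the upper bounds. This is the step I expect to be the main obstacle, because merging could in principle spoil the upper bound of the receiving bin. Suppose that before step $i$ we have $O_{i-1}\le p_iN_{i-1}$, and that the moved group satisfies $O_i<p_iN_i$. Then the merged bin has
\begin{equation*}
O_{i-1}+O_i\le p_i\,(N_{i-1}+N_i),
\end{equation*}
which is exactly its upper bound. The emptied bin $\mathcal{S}_i$ satisfies $0\le 0$, and all other bins are unchanged. By induction on the backward steps, the upper bounds established by the forward loop survive to the end.

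Combining the two loops gives $N_i^*p_i\le O_i^*\le N_i^*p_{i+1}$ for every $i$. The only care needed is checking the two index-bookkeeping facts: each loop moves mass in a single direction, and the targets $p_{i+1}$ and $p_{i-1}$ lie in $\mathcal{S}_{i+1}$ and $\mathcal{S}_{i-1}$ respectively. In particular $p_M\in\mathcal{S}_M=[p_M,1]$, which covers the forward move at $i=M-1$.
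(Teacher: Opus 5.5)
Your proof is correct and follows the paper's argument: the forward loop fixes each upper bound $O_i\le p_{i+1}N_i$ because a processed bin is never touched again, the backward loop fixes each lower bound $O_i\ge p_iN_i$ in the same way, and an induction over backward steps shows that a merge gives $O_{i-1}+O_i\le p_i(N_{i-1}+N_i)$, so the receiving bin keeps its upper bound. Your version states the invariant (all upper bounds hold throughout the backward loop) more cleanly than the paper, whose ``otherwise'' case, where no merge occurs, is muddled, and you also check explicitly that $p_0=0$, $p_{M+1}=1$ and $p_M\in\mathcal{S}_M$ handle the boundary bins.
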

\begin{proof}
    See the appendix. 
\end{proof}
The next corollary shows that any model satisfying the property established in \Cref{prop:property of the inproved model}, moderate calibration on the training set can be achieved by assigning appropriate prediction probabilities. Since every model maximizing the AUNBC must satisfy the property in \Cref{prop:property of the inproved model}; otherwise, \Cref{thm:improve_aunbc} would imply the existence of another model with strictly higher AUNBC, a contradiction. Therefore, the corollary applies in particular to AUNBC-maximizing model.

\begin{corollary} \label{cor:calibration}
Let $0 = p_0 < p_1 < \cdots < p_{M} < p_{M+1} = 1$ be a sequence of thresholds and $c^*:\mathcal{X} \to [0,1]$ be a risk model. Define the intervals $\mathcal{S}_i = [p_i,p_{i+1})$ for $i=0,1,\cdots,M-1$ and $\mathcal{S}_M=[p_M,1]$. For each $i=0,1,\cdots,M$, let $N_i^*$ and $O_i^*$ denote, respectively, the number of samples and the number of positive samples whose predicted risk by $c^*$ lies in the interval $\mathcal{S}_k$. Assume that for each $i=0,1,\cdots,M$, $N_i^* p_i \le O_i^* \le N_i^* p_{i+1}$. Then at least one of the following two statements holds:
\begin{itemize}
    \item[(1)] There exist real numbers $q_0,\dots,q_M$ such that $q_i \in \mathcal{S}_i$ satisfying $q_iN_i^*= O_i^*$, $i=0,1,\cdots,M$.
    \item[(2)] There exists another model $c'$ with the same $\mathrm{AUNBC}$ as $c^*$, in which case conclusion (1) holds.
\end{itemize}

\end{corollary}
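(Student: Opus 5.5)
The plan is a case analysis on each bin $i$, built on the assumed inequality $N_i^*p_i \le O_i^* \le N_i^*p_{i+1}$. When $N_i^*>0$ we set $q_i := O_i^*/N_i^*$, which gives $q_i N_i^* = O_i^*$ and $q_i\in[p_i,p_{i+1}]$. When $N_i^*=0$ we also have $O_i^*=0$, so any $q_i\in\mathcal{S}_i$ (for instance $q_i=p_i$) satisfies $q_iN_i^*=O_i^*$. Statement (1) can therefore fail only when $N_i^*>0$ and $O_i^*=N_i^*p_{i+1}$ for some $i\le M-1$, because then $q_i=p_{i+1}\notin\mathcal{S}_i=[p_i,p_{i+1})$. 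For $i=M$ the interval $\mathcal{S}_M=[p_M,1]$ is closed, so no problem arises there. The whole proof is about handling these boundary bins.

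For such an offending bin $k$, define $c'$ by moving every sample whose $c^*$-prediction lies in $\mathcal{S}_k$ up to the value $p_{k+1}$, so that bin $k$ merges into bin $k+1$. This is the same construction as $c_k'$ in \Cref{thm:improve_aunbc}, but now in the equality case. We then compute the change in AUNBC from Definition~\ref{def:aunbc}. The only threshold whose counts change is $p_{k+1}$: both $\mathrm{TP}_{k+1}$ and $\mathrm{FP}_{k+1}$ increase, by $O_k^*$ and $N_k^*-O_k^*$ respectively. The change is therefore
\[
\frac{p_{k+2}-p_{k+1}}{N}\left(O_k^* - (N_k^*-O_k^*)\frac{p_{k+1}}{1-p_{k+1}}\right)=\frac{p_{k+2}-p_{k+1}}{N(1-p_{k+1})}\left(O_k^*-p_{k+1}N_k^*\right)=0 .
\]
Hence $c'$ has the same AUNBC as $c^*$. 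This is exactly the computation in the proof of \Cref{thm:improve_aunbc}, with the strict inequality replaced by an equality.

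Next we check that the hypothesis survives the merge. The new bin $k+1$ has counts $N_k^*+N_{k+1}^*$ and $O_k^*+O_{k+1}^*$, and bin $k$ becomes empty. For the upper bound, $O_k^*+O_{k+1}^* \le p_{k+1}N_k^*+p_{k+2}N_{k+1}^*\le p_{k+2}(N_k^*+N_{k+1}^*)$. For the lower bound, $O_k^*=p_{k+1}N_k^*$ and $O_{k+1}^*\ge p_{k+1}N_{k+1}^*$, which sum to the required inequality. So $c'$ satisfies the same bin inequalities.

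The main obstacle is that the merged bin $k+1$ may itself be tight at its upper end, $O=p_{k+2}N$. In that case the merge has to be repeated. To guarantee this ends, we process bins from $k=0$ upward. Each merge either removes an offending bin or pushes the problem to a strictly higher index. After at most $M$ steps every remaining offending bin has been pushed to bin $M$, where the closed interval allows $q_M=O_M/N_M\le 1$. The resulting model has the same AUNBC as $c^*$ and satisfies (1) by the first paragraph, which gives statement (2).
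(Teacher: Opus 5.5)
Your proof is correct and follows essentially the same route as the paper's. You take $q_i=O_i^*/N_i^*$, and for any nonempty bin $k\le M-1$ with $O_k^*=p_{k+1}N_k^*$ you move its samples to $p_{k+1}$ (the equality case of the $c_k'$ construction in \Cref{thm:improve_aunbc}), which leaves the AUNBC unchanged, and you repeat.

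Your version is more careful than the paper's, which does not check that the enlarged bin $k+1$ still satisfies the hypothesis or that the repetition terminates. Your cascade concern also never arises: since $N_k^*>0$ and $p_{k+1}<p_{k+2}$, your second upper-bound inequality is strict, so a merged bin can never be tight at its upper end.
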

\begin{proof}
    See the appendix.
\end{proof}
Corollary \ref{cor:calibration} implies that, once thresholds $0 = p_0 < p_1 < \cdots < p_{M} < p_{M+1} = 1$ are specified, any risk model that maximizes AUNBC can be adjusted to achieve moderate calibration. \Cref{alg:moderate_calibration} provides an explicit procedure for obtaining such a calibrated model. 

\begin{algorithm}\SetKwInOut{Input}{Input}\SetKwInOut{Output}{Output }
    \caption{Achieve Moderate Calibration}\label{alg:moderate_calibration}       
        \KwData{$\{(\bm{x}_j,y_j)\}_{j=1}^N \subseteq \mathbb{R}\times \{0,1\}$}
        \Input{Thresholds $0=p_0 < p_1<\ldots<p_M<p_{M+1}=1$, and model $c:\mathcal{X}\to[0,1]$ }
        \Output{Moderate calibrated model $c^*$ with improved AUNBC}
        Apply \Cref{alg:improve_aunbc} to obtain $c^*$ \;
        \For{$i \gets 0$ to $M$}{
            ${O}_i^* \gets \sum_{j=1}^N I\left(c^*(\bm x_j) \in \mathcal{S}_i, y_j=1 \right)$\; \tcp{We use the intervals $\mathcal{S}_i$ defined in \Cref{thm:improve_aunbc}}
            ${N}_i^* \gets \sum_{j=1}^N I\left(c^*(\bm x_j) \in \mathcal{S}_i \right)$\;
            \eIf{$O_i^* = p_{i+1}N_i^*$}{
            $c^*(\bm x) \gets c^*(\bm x) + I(c^*(\bm x) \in \mathcal{S}_i) \cdot (p_{i+1} - c^*(\bm x))$\;
            $q_i \gets p_i$\; \tcp{The value assigned to $q_i$ here is not used by the algorithm, but it is made for the convenience of the subsequent analysis.}
            }
            {
            $q_i \gets O_i^* / N_i^*$ \;
            $c^*(\bm x) \gets c^*(\bm x) + I(c^*(\bm x) \in \mathcal{S}_i) \cdot (q_i - c^*(\bm x))$
            }
        }    
\end{algorithm}

Consequently, the Hosmer--Lemeshow (HL) statistic or the expected calibration error (ECE) \citep{pakdamannaeiniObtainingWellCalibrated2015} of the model will be zero {under the grouping $[0,p_1),[p_1,p_2),\cdots,[p_M,1]$.} Specifically, the HL statistic is defined as:
\begin{equation}
HL = \sum_{i=0}^{M} \frac{(O_i - E_i)^2}{E_i(1-E_i/N_i)},
\end{equation}
where $E_i$ is the expected number of events equals to $q_iN_i$. Under moderate calibration, the observed number of events $O_i$ matches the expected number of events $E_i$ in each group. Similarly, the ECE is defined as
\begin{equation}
ECE = \sum_{\substack{i=0 \\ N_i \not = 0}}^M \frac{N_i}{N} \left| \frac{O_i}{N_i} - e_i \right|,
\end{equation}
where $e_i$ is the mean of the probabilities for the instance in group $i$, which in this case, is equal to the probability $q_i$. Under moderate calibration, the observed event rate equal to the predicted probability in each group, leading to $ECE = 0$. Furthermore, even if the grouping setting is changed and then re-evaluate HL statistic and ECE on the training set, these two values remain zero.

We will use ECE as a measure of calibration in this work, as it provides a direct quantification of calibration performance. In contrast, the HL statistic is known to be sensitive to sample size, which may lead to misleading assessments of calibration.

\begin{figure}
    \centering
    \includegraphics[width=0.8\linewidth]{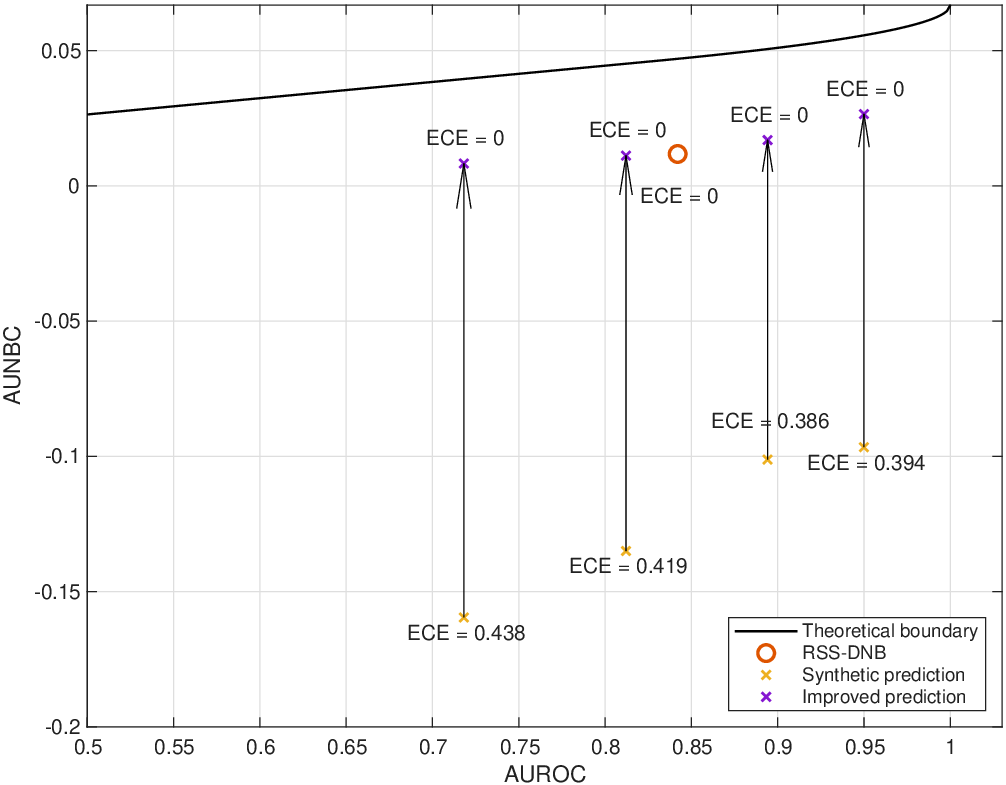}
    \caption{AUROC-AUNBC pairs for the random synthetic predictions in \Cref{subsec:method_example} before and after applying \Cref{alg:moderate_calibration}. Arrows indicate the direction of improvement. The solid curve represents the theoretical boundary and the circle denotes the proposed model. }
    \label{fig:improved_aunbc}
\end{figure}

\begin{proposition}\label{prop:calibration}
    Let $0 = p_0 < p_1 < \cdots < p_{M} < p_{M+1} = 1$ be a sequence of thresholds and $c:\mathcal{X}\to [0,1]$ be a risk model. Let $c^*$ denoted the calibrated model obtained from \Cref{alg:moderate_calibration}. Suppose that $\delta > 0$ is sufficiently small, and define $\tilde c(\bm x) = \min\{c^*(\bm x)+\delta \cdot c(\bm x), 1\}$. Then
    \begin{equation}
        \left | ECE(\tilde c) - ECE(c^*) \right | \le \delta,\\
    \end{equation}
    \begin{equation}
        AUNBC(\tilde c) = AUNBC(c^*),
    \end{equation}
    and for any $1 \le j_1 \not = j_2\le N$, if $\tilde c(\bm x_{j_1})<1$, $\tilde c(\bm x_{j_2}) < 1$, 
    \begin{equation}
        c(\bm x_{j_1})< c(\bm x_{j_2}) \iff\tilde c(\bm x_{j_1})<\tilde c(\bm x_{j_2}).
    \end{equation}
\end{proposition}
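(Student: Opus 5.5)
The proof rests on three structural facts about the output $c^*$ of \Cref{alg:moderate_calibration}. Once these are in place, each of the three claims follows from a short direct argument.

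\textbf{(a) Finitely many levels, each strictly below its upper threshold.} After \Cref{alg:moderate_calibration} terminates, $c^*$ takes only finitely many values. Every occupied interval $\mathcal{S}_i$ carries a single value $q_i=O_i^*/N_i^*$, and $q_i\in\mathcal{S}_i$ by \Cref{prop:property of the inproved model} and \Cref{cor:calibration}. Groups with $O_i^*=p_{i+1}N_i^*$ are not an exception: they are pushed to $p_{i+1}$, merged into $\mathcal{S}_{i+1}$, and then receive the pooled value of that group. For $i<M$ this gives $q_i<p_{i+1}$. For $i=M$ the interval is $[p_M,1]$ and $q_M\in[p_M,1]$.

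\textbf{(b) Monotonicity.} The map $c\mapsto c^*$ is nondecreasing in $c(\bm x)$. Each operation in \Cref{alg:improve_aunbc} and \Cref{alg:moderate_calibration} sends a whole interval to a single point. That point is either the neighbouring threshold ($p_{i+1}$ or $p_{i-1}$, which lies on the correct side of the other values) or a point of the same interval. So no operation reverses the order of two samples; it can only create ties.

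\textbf{(c) Choice of $\delta$.} We take
\[
0<\delta<\min\{p_{i+1}-q_i:\ i<M,\ \mathcal{S}_i\text{ occupied}\},
\]
which is positive by (a). Since $0\le c\le 1$, we have $c^*(\bm x)\le\tilde c(\bm x)\le c^*(\bm x)+\delta$. Hence every sample with $c^*(\bm x)\in\mathcal{S}_i$ still satisfies $\tilde c(\bm x)\in\mathcal{S}_i$. For $i<M$ this uses $q_i+\delta<p_{i+1}$; for $i=M$ it uses the truncation at $1$.

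\textbf{AUNBC.} Because each sample stays in its interval, group membership is unchanged. Therefore $\mathrm{TP}_i$ and $\mathrm{FP}_i$ are unchanged at every threshold $p_i$, and by \Cref{def:aunbc}, $\mathrm{AUNBC}(\tilde c)=\mathrm{AUNBC}(c^*)$.

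\textbf{ECE.} We use the grouping $\mathcal{S}_0,\dots,\mathcal{S}_M$. Since groups are unchanged, $N_i$ and $O_i$ are the same for both models. The group mean $e_i$ of $\tilde c$ differs from that of $c^*$ by at most $\delta$, because each individual prediction moves by at most $\delta$. By the reverse triangle inequality,
\[
|ECE(\tilde c)-ECE(c^*)|\le\sum_i \frac{N_i}{N}\,|\tilde e_i-e_i|\le\delta .
\]

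\textbf{Order preservation.} Take $j_1\ne j_2$ with both $\tilde c$ values below $1$, so that $\tilde c(\bm x_j)=c^*(\bm x_j)+\delta c(\bm x_j)$ for both.
\begin{itemize}
\item ($\Rightarrow$) Suppose $c(\bm x_{j_1})<c(\bm x_{j_2})$. By (b), $c^*(\bm x_{j_1})\le c^*(\bm x_{j_2})$. Adding the strictly larger term $\delta c(\bm x_{j_2})$ to the second value gives $\tilde c(\bm x_{j_1})<\tilde c(\bm x_{j_2})$.
\item ($\Leftarrow$) Argue by contrapositive. If $c(\bm x_{j_1})=c(\bm x_{j_2})$, both $c^*$ and $\tilde c$ coincide on the two samples. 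If $c(\bm x_{j_1})>c(\bm x_{j_2})$, then (b) gives $c^*(\bm x_{j_1})\ge c^*(\bm x_{j_2})$, and hence $\tilde c(\bm x_{j_1})>\tilde c(\bm x_{j_2})$. In neither case is $\tilde c(\bm x_{j_1})<\tilde c(\bm x_{j_2})$.
\end{itemize}

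\textbf{Main obstacle.} The delicate step is establishing (a) and (b) rigorously from the sequential, in-place updates of the two algorithms. The approach is an induction over the loop iterations with the invariant: $c^*$ is a nondecreasing function of $c$, and after the loops each occupied interval is a single level lying strictly inside it. This must account for groups being merged in the upward pass, possibly merged again in the downward pass, and then re-pooled in \Cref{alg:moderate_calibration}.
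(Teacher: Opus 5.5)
Your architecture matches the paper's proof. You pick $\delta<\min_i(p_{i+1}-q_i)$ so that no sample leaves its interval $\mathcal{S}_i$. This freezes every $\mathrm{TP}_i,\mathrm{FP}_i$ (hence the AUNBC) and every $N_i,O_i$, and moves each group mean by at most $\delta$. Order preservation then comes from monotonicity of $c\mapsto c^*$ plus the strictly increasing perturbation $\delta c$. Your reverse-triangle-inequality step for the ECE is a harmless variant of the paper's direct bound via $ECE(c^*)=0$.

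\textbf{The gap is in (b).} The paper only asserts monotonicity. You try to justify it, and the justification is false.

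Pushing $\mathcal{S}_i$ up to $p_{i+1}$ is harmless, since $p_{i+1}$ is the left endpoint of $\mathcal{S}_{i+1}$. But the second loop of \Cref{alg:improve_aunbc} pushes $\mathcal{S}_i$ down to $p_{i-1}$, the \emph{left} endpoint of $\mathcal{S}_{i-1}$. That point lies below the untouched values in $\mathcal{S}_{i-1}$, although the moved samples were above them.

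Concretely, take $M=2$, $p_1=0.3$, $p_2=0.6$ and samples $(c,y)=(0.35,1),(0.5,0),(0.7,0)$.
\begin{itemize}
\item The first loop does nothing, since $O_1=1\le 1.2=p_2N_1$.
\item The second loop moves the third sample to $0.3$, since $O_2=0<0.6=p_2N_2$.
\item Then $O_1=1\ge 0.9=p_1N_1$, so nothing else moves.
\end{itemize}
The output of \Cref{alg:improve_aunbc} is $0.35,\,0.5,\,0.3$. This strictly reverses the order of the samples with $c=0.5$ and $c=0.7$.

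So the invariant you propose for your induction ($c^*$ nondecreasing in $c$ after each iteration) is false, and the induction breaks. Statement (b) is still true for the final output, because \Cref{alg:moderate_calibration} pools the interval into a single value ($1/3$ here).

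\textbf{The fix is to carry the invariant on interval indices instead of values.} Let $g(\bm x)$ be the index $i$ with $c^*(\bm x)\in\mathcal{S}_i$.
\begin{itemize}
\item Initially $g$ is a nondecreasing function of $c$.
\item Every operation in both algorithms either leaves $g$ unchanged (pooling to $q_i\in\mathcal{S}_i$, or pushing $\mathcal{S}_M$ to $1$), or sends an entire level set $\{g=i\}$ to an adjacent index $i\pm1$. Such moves preserve weak monotonicity of $g$ in $c$.
\item After \Cref{alg:moderate_calibration}, $c^*$ is constant on each occupied interval. By your (a), these constants lie in disjoint, ordered intervals.
\end{itemize}
Hence $c^*$ is a nondecreasing function of the final $g$, and therefore of $c$. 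With (b) established this way, the rest of your proof goes through as written.
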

\begin{proof}
    See the appendix.
\end{proof}

We continue to use example in \Cref{subsec:method_example}. For the randomly generated prediction, we apply \Cref{alg:improve_aunbc} (maintain the order of the predictions) to obtain improved prediction, and plot corresponding AUROC-AUNBC pairs in \Cref{fig:improved_aunbc}. As can be observed, the proposed algorithm significantly improves AUNBC while maintaining AUROC and achieves near-perfect calibration (with ECE reduced to approximately zero).

The above two conclusions indicate that constructing a risk model with the objective of maximizing AUNBC enables us to obtain a well-calibrated model. Moreover, if a model is not well-calibrated, its AUNBC can be improved through simple modifications. Then, we will introduce how to solve problem (\ref{ilp:original}).
\subsection{Integer Programming Formulation}
\begin{equation}\label{ilp:formulation}
\begin{aligned}
\min_{\bm{\lambda},\bm{\varphi},\bm{\alpha},\bm{T}} \quad & -\frac{1}{N}\sum_{i=0}^M\sum_{j\in \mathcal{J}^+} \omega_i\varphi_{i,j}+\frac{1}{N}\sum_{i=0}^M \sum_{j\in \mathcal{J}^-}\frac{\omega_i p_i}{1-p_i}\varphi_{i,j} + C_0 \sum_{k=1}^P \alpha_k & \\
\text{s.t.} \quad & H_{j} (1-\varphi_{i,j}) \geq T_i - \sum_{k=1}^{P} x_{j,k} \lambda_k, && i = 0, \ldots, M,\, j \in \mathcal{J}^+, \\
& H_{j} \varphi_{i,j} \geq \gamma + \sum_{k=1}^{P} x_{j,k} \lambda_k - T_i, && i = 0, \ldots, M,\ j \in \mathcal{J}^{-}, \\ 
& -\Lambda_k \alpha_k \leq \lambda_k \leq \Lambda_k \alpha_k, && k = 1, \ldots, P,\\
& T_i \leq T_{i+1}, && i = 0, \ldots, M-1, \\
& \lambda_k \in \mathcal{L}_k, && k = 1, \ldots, P, \\
& \varphi_{i,j} \in \{0,1\}, && i = 0, \ldots, M, \,j = 1,\cdots,N, \\
& \alpha_k \in \{0,1\}, && k = 1, \ldots, P, \\
& T_i \in \mathcal{B}_0, && i = 0, \ldots, M.
\end{aligned}   
\end{equation}

In formulation (\ref{ilp:formulation}), $\varphi_{i,j}$ denotes a binary decision variable, where $\varphi_{i,j}=1$ if the $j$-th sample is predicted as positive under the threshold $p_i$, and $\varphi_{i,j}=0$ otherwise. Specifically, for positive samples, the first constraint forces $\varphi_{i,j}=0$ whenever $\bm x_{j}^{\top}\bm \lambda<T_i$. When $\bm x_{j}^{\top}\bm \lambda \ge T_i$, both $\varphi_{i,j}=0$ and $\varphi_{i,j}=1$ are feasible; however, minimizing the objective function implies that $\varphi_{i,j}=1$ is optimal. For negative samples, the second constraint forces $\varphi_{i,j}=1$ whenever $\bm x_{j}^{\top}\bm \lambda \ge T_i$. When $\bm x_{j}^{\top}\bm \lambda + \gamma \le T_i$, $\gamma$ is a sufficiently small positive number, then both $\varphi_{i,j}=0$ and $\varphi_{i,j}=1$ are feasible; however, minimizing the objective function implies that $\varphi_{i,j}=0$ is optimal. Therefore, $\varphi_{i,j} = I(\bm x_j^\top\bm\lambda \ge T_i)$ for both positive and negative samples. $J^+$ and $J^-$ denote the index sets of all positive and negative samples, $J^+=\left\{j:y_j=1\right\}$ and $J^-=\left\{j:y_j=0\right\}$, respectively. Here, $H_{j}$ is a large positive constant, which can be set as $H_{j} = \max_{\bm\lambda \in \mathcal{L},T_i\in \mathcal{B}_0} \left\{\gamma + \sum_{k=1}^{P} x_{j,k} \lambda_k - T_i\right\}$. We used $T_i \le T_{i+1}$ to preserve the ordering of score interval. Strict inequalities are not necessary. If $T_i = T_{i+1}$, the corresponding interval will be empty and contains no samples. This does not affect validity of the formulation.
The binary decision variable $\alpha_k$ indicates whether the coefficient $\lambda_k$ is nonzero, specifically,
$\alpha_k = 1$ if $\lambda_k \neq 0$, and $\alpha_k = 0$ otherwise. $\Lambda_k$ is the maximum value that $|\lambda_k|$ can reach. We use $\mathcal L_k:= \{\lambda \in \mathbb{Z} : -\Lambda_k \le \lambda \le \Lambda_k\}$ to represent the set of all possible values of $\lambda_k$, and $\mathcal{B}_0:= \{T \in \mathbb{Z} : -T_{\mathrm{max}} \le T \le T_{\mathrm{max}}\}$ to represent the set of all possible values of $T_i$.

In Appendix \ref{app:NPhard}, we have shown that our model belongs to a family of NP-hard problems, and the size of the model increases with $N$. Therefore, in addition to solving (\ref{ilp:formulation}) using the solver Gurobi, we propose a simulated annealing algorithm to efficiently search for high-quality solutions by directly optimizing the original formulation (\ref{ilp:original}). The detailed outline of the approach is presented as \Cref{alg:sa} in the appendix. There, \Cref{alg:find_opt_T} provides a subroutine invoked within \Cref{alg:sa}. For predictive models whose outputs are not in the interval $[0,1]$, \Cref{alg:find_opt_T} can also be used to select the appropriate operating points to maximize the AUNBC of the model.

\subsection{Learning Capacity and Generalization}
In Problem (\ref{ilp:original}), we restrict the coefficients and intercepts of the linear model to finite sets of integers to simplify the complexity of the model. We will demonstrate that, under appropriate conditions, the proposed model attains a learning capacity comparable to that of general linear models, and we will further derive its generalization bound. A similar conclusion can be found in the work of \citet{ustunSupersparseLinearInteger2016} on SLIM. Following their approach, we adapt these results to our model.
\begin{theorem}[Learning Capacity]\label{thm:learning_capacity}
    Let $\bm{\rho}=\left[\rho_{1}, \cdots, \rho_{P}\right]^\top \in \mathbb{R}^{P}$ denote the coefficients of the baseline linear classifier trained using data $\mathcal{D}_{N}=\left\{\left(\bm{x}_{j}, y_{j}\right)\right\}_{j=1}^{N}$ and $\bm{t}=\left[t_{0}, t_{1}, \ldots, t_{M}\right]^\top \in \mathbb{R}^{M+1}$ satisfy $-\max _{1 \leq j \leq N}\left|\bm{x}_{j}^\top \bm{\rho}\right| \leq t_{0} \leq t_{1} \leq \cdots \leq t_{M} \leq \max _{1 \leq j \leq N}\left|\bm{x}_{j} ^\top \bm{\rho}\right|$ denote intercepts at different risk thresholds $0= p_{0}<p_{1}<\cdots<p_{M}<1$. Let  $\|X\|_{\infty}:=\max _{1 \leq j \leq N}\left\|\bm{x}_{j}\right\|_{1}$ and $\gamma_{\mathrm {min }}:=\frac{\min _{i, j}\left|\bm{x}_{j} ^\top\bm{\rho} -t_{i}\right|}{\|\bm{\rho}\|_{\infty}}$. Consider training a linear classifier with coefficient $\bm{\lambda}= \left[\lambda_{1}, \ldots, \lambda_{P}\right]^\top \in \mathcal{L}=\{-\Lambda,-\Lambda+1, \ldots, \Lambda-1, \Lambda\}^{P}$ and intercept $\bm{T}=\left[T_{0}, T_{1}, \ldots, T_{M}\right]^\top \in \mathcal{B}=\left\{-T_{\max }, -T_{\max }+1,\cdots, T_{\max }-1,T_{\max }\right\}^{M+1}$ at thresholds $p_{0}, p_{1}, \cdots, p_{M}$. If  $\gamma_{\min }>0$, $\Lambda>\frac{\|X\|_{\infty}+1}{2 \gamma_{\min }}$, and $T_{\max } \geq\left\lceil\Lambda\|X\|_{\infty}\right\rceil$, then there exist $\bm\lambda \in \mathcal{L}$ and $\bm{T} \in \mathcal{B}$ such that
\begin{equation}
\begin{aligned}
    &\sum_{i=0}^{M}\left(\omega_{i} \sum_{j=1}^{N} I\left( \bm{x}_{j}^\top\bm{\lambda}\ge T_{i},y_j=1\right)
    -\frac{\omega_{i}p_i}{1-p_{i}} \sum_{j=1}^{N} I\left(\bm{x}_{j}^\top\bm{\lambda} \geq T_{i}, y_{j}=0\right)\right) \\
    \ge &\sum_{i=0}^{M}\left(\omega_{i} \sum_{j=1}^{N} I\left( \bm{x}_{j}^\top\bm{\rho}\ge t_{i},y_j=1\right)
    -\frac{\omega_{i}p_i}{1-p_{i}} \sum_{j=1}^{N} I\left( \bm{x}_{j} ^\top\bm{\rho} \geq t_{i}, y_{j}=0\right)\right).
\end{aligned}
\end{equation}
\end{theorem}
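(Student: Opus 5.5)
The plan is to follow the rounding argument of \citet{ustunSupersparseLinearInteger2016} for SLIM. We will show that for every sample $j$ and every threshold index $i$,
\[
I(\bm x_j^\top\bm\lambda\ge T_i)=I(\bm x_j^\top\bm\rho\ge t_i),
\]
where $(\bm\lambda,\bm T)$ is a suitable integer rounding of a rescaled copy of $(\bm\rho,\bm t)$. Once these indicators agree, all counts $\mathrm{TP}_i$ and $\mathrm{FP}_i$ coincide, so the two weighted net benefits in the statement are equal, which in particular gives the inequality.

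\textbf{Construction.} Set $s=\Lambda/\|\bm\rho\|_\infty$. Take $\lambda_k=\lceil s\rho_k\rfloor$, rounding to the nearest integer. Since $|s\rho_k|\le\Lambda$, we have $\lambda_k\in\{-\Lambda,\dots,\Lambda\}$. For the intercepts we use a half-integer shift so that ties are harmless. Because $\bm x_j^\top\bm\lambda$ need not be an integer, put $T_i=\lceil s t_i\rceil$. If $\gamma_{\min}>0$, then $\bm\rho\ne 0$, so $s$ is well defined. Monotonicity of $\lceil\cdot\rceil$ preserves $T_0\le\cdots\le T_M$. The bound $|t_i|\le\max_j|\bm x_j^\top\bm\rho|\le\|X\|_\infty\|\bm\rho\|_\infty$ gives $|T_i|\le\lceil\Lambda\|X\|_\infty\rceil\le T_{\max}$, so $\bm T\in\mathcal B$.

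\textbf{Key estimate.} We have $|\bm x_j^\top\bm\lambda-s\bm x_j^\top\bm\rho|\le\frac12\|\bm x_j\|_1\le\frac12\|X\|_\infty$ and $|T_i-st_i|\le 1$. Hence
\[
\bigl|(\bm x_j^\top\bm\lambda-T_i)-s(\bm x_j^\top\bm\rho-t_i)\bigr|\le\tfrac12\|X\|_\infty+1 .
\]
Meanwhile $|s(\bm x_j^\top\bm\rho-t_i)|\ge s\,\gamma_{\min}\|\bm\rho\|_\infty=\Lambda\gamma_{\min}$. The hypothesis $\Lambda>(\|X\|_\infty+1)/(2\gamma_{\min})$ gives $\Lambda\gamma_{\min}>\frac12\|X\|_\infty+\frac12$. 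This is not quite enough against an intercept error of $1$.

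\textbf{Main obstacle: the intercept rounding error.} The fix is to use the one-sided structure of $\lceil\cdot\rceil$ rather than a symmetric error bound. We treat the two cases separately.

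If $\bm x_j^\top\bm\rho\ge t_i$, then $s\bm x_j^\top\bm\rho-st_i\ge\Lambda\gamma_{\min}$, and so
\[
\bm x_j^\top\bm\lambda\ \ge\ st_i+\Lambda\gamma_{\min}-\tfrac12\|X\|_\infty\ >\ st_i+\tfrac12 .
\]
We still need this to be $\ge T_i=\lceil st_i\rceil$, which can be as large as $st_i+1$. To handle this, round the intercept to the nearest integer instead, so that $|T_i-st_i|\le\frac12$. The total error is then at most $\frac12(\|X\|_\infty+1)<\Lambda\gamma_{\min}$. This yields strict sign agreement in both cases: $\bm x_j^\top\bm\lambda-T_i>0$ when $\bm x_j^\top\bm\rho-t_i>0$, and $<0$ when it is negative. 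The case $\bm x_j^\top\bm\rho=t_i$ cannot occur because $\gamma_{\min}>0$. Nearest-integer rounding of a nondecreasing sequence stays nondecreasing, and $|T_i|\le\lceil\Lambda\|X\|_\infty\rceil$ still holds.

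With sign agreement established for every pair $(i,j)$, the indicators match. Summing with weights $\omega_i$ and $\omega_ip_i/(1-p_i)$ gives equality of the two sides, and hence the claimed inequality.
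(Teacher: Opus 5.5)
Your final argument is correct and is essentially the paper's proof. You round $\Lambda\rho_k/\|\bm\rho\|_\infty$ and $\Lambda t_i/\|\bm\rho\|_\infty$ to nearest integers, bound the perturbation of $\bm x_j^\top\bm\lambda-T_i$ by $\tfrac12(\|X\|_\infty+1)<\Lambda\gamma_{\min}$ to get sign agreement for every $(i,j)$, and conclude that all true- and false-positive counts coincide. In a write-up, drop the ceiling-rounding detour (which, as you noticed, fails), and keep your direct bound $|T_i|\le\lceil\Lambda\|X\|_\infty\rceil$ obtained from $|t_i|\le\max_j|\bm x_j^\top\bm\rho|$, which is cleaner than the paper's version of that step.
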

\begin{proof}
    See the appendix.
\end{proof}

These findings suggest that, as long as $\Lambda$ and $T_{\max}$ are sufficiently large, the coefficients and intercepts of any linear model can be converted to integers without reducing the weighted sum of net benefits. It should be noted that the bound given in \Cref{thm:learning_capacity} is a sufficient condition, but not a necessary one. Consequently, the theorem is intended to provide a qualitative expressiveness result rather than a practical guideline for selecting coefficient bounds. Furthermore, \Cref{cor:learning_capacity} indicates that if we are willing to sacrifice a small amount of net benefit, it is possible to choose relatively smaller values for $\Lambda$ and $T_{\max}$.
\begin{corollary}\label{cor:learning_capacity}
    Let $\bm{\rho} = [\rho_1, \ldots, \rho_p]^\top \in \mathbb{R}^p$ denote the coefficients of a baseline linear classifier trained using data $\mathcal{D}_N = \{(x_j, y_j)\}_{j=1}^N$ and $\bm{t} = [t_0, t_1, \ldots, t_M]^\top \in \mathbb{R}^{M+1}$ satisfy $-\max_{1 \leq j \leq N} |\bm x_j ^\top\bm \rho| \leq t_0 \leq t_1 \leq \cdots \leq t_M \leq \max_{1 \leq j \leq N} |\bm x_j^\top \bm \rho|$ denote intercepts at different risk thresholds $0 = p_0 < p_1 < \cdots < p_M < 1$. Let $\gamma_{(k)}$ denote the $k$-th smallest value in $\left\{\min_{0\le i \le M}\frac{|\bm x_j ^\top\bm \rho - t_i|}{\| \bm \rho \|_\infty}\right\}_{1 \le j \le N}$, $\mathcal{J}_{(k)} := \left\{ j: \min_{0\le i \le M}\frac{|\bm x_j ^\top\bm \rho - t_i|}{\| \bm \rho \|_\infty}  \ge \gamma_{(k)},\ j = 1, 2, \ldots, N \right\}$, $\| \bm x \|_{(k),\infty} := \max_{j \in \mathcal{J}_{(k)}} \| \bm x_j \|_1$. Consider training a linear classifier with coefficients $\bm \lambda = [\lambda_1, \ldots, \lambda_p]^\top \in \mathcal L = \{ -\Lambda_{(k)}, -\Lambda_{(k)}+1, \ldots, \Lambda_{(k)}-1, \Lambda_{(k)} \}^P$ and intercepts $\bm T = [T_0, T_1, \ldots, T_M]^\top \in \mathcal{B} = \left\{-T_{(k) }, -T_{(k)) }+1,\cdots, T_{(k) }-1,T_{(k) }\right\}^{M+1}$ at risk thresholds $p_0, p_1, \ldots, p_M$. If $\gamma_{(k)} > 0$, $\Lambda_{(k)} > \frac{\| x \|_{(k),\infty} + 1}{2\gamma_{(k)}}$, and $T_{(k)} \geq \left\lceil \Lambda_{(k)} \| \bm x \|_{(k),\infty} \right\rceil$, then there exist $\bm \lambda \in \mathcal L$ and intercepts $\bm T \in \mathcal{B}$ such that
\begin{equation}
\begin{aligned}
    &\sum_{i=0}^{M}\left(\omega_{i} \sum_{j=1}^{N} I\left( \bm{x}_{j}^\top\bm{\lambda}\ge T_{i},y_j=1\right)
    -\frac{\omega_{i}p_i}{1-p_{i}} \sum_{j=1}^{N} I\left(\bm{x}_{j}^\top\bm{\lambda} \geq T_{i}, y_{j}=0\right)\right) \\
    \ge &\sum_{i=0}^{M}\left(\omega_{i} \sum_{j=1}^{N} I\left( \bm{x}_{j}^\top\bm{\rho}\ge t_{i},y_j=1\right)
    -\frac{\omega_{i}p_i}{1-p_{i}} \sum_{j=1}^{N} I\left( \bm{x}_{j} ^\top\bm{\rho} \geq t_{i}, y_{j}=0\right)\right) -  (k-1)\sum_{i=0}^M\frac{\omega_i}{1-p_i}.
\end{aligned}
\end{equation}
\end{corollary}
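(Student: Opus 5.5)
The plan is to reduce Corollary~\ref{cor:learning_capacity} to Theorem~\ref{thm:learning_capacity}, applied to a subset of the data, and to absorb the discarded samples into the additive penalty. Fix $k$ and set $\mathcal{J}_{(k)}$ as in the statement. By the definition of $\gamma_{(k)}$ as the $k$-th smallest margin, at most $k-1$ indices lie outside $\mathcal{J}_{(k)}$. On the reduced dataset $\mathcal{D}' = \{(\bm x_j,y_j)\}_{j\in\mathcal{J}_{(k)}}$ the relevant quantities become:
\begin{itemize}
\item the minimal normalized margin $\min_{i}\min_{j\in\mathcal{J}_{(k)}}|\bm x_j^\top\bm\rho - t_i|/\|\bm\rho\|_\infty$, which is at least $\gamma_{(k)}>0$;
\item the quantity $\max_{j\in\mathcal{J}_{(k)}}\|\bm x_j\|_1$, which is exactly $\|\bm x\|_{(k),\infty}$.
\end{itemize}

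One technicality concerns the hypothesis on $\bm t$. The range $-\max_j|\bm x_j^\top\bm\rho|\le t_0\le\cdots\le t_M\le \max_j|\bm x_j^\top\bm\rho|$ is stated with the maximum over the full data, whereas Theorem~\ref{thm:learning_capacity} applied to $\mathcal{D}'$ asks for the maximum over $\mathcal{J}_{(k)}$. I would either clip the $t_i$ into the smaller range, or observe that the theorem's proof only uses this range to bound $|T_i|$. Clipping leaves every indicator $I(\bm x_j^\top\bm\rho\ge t_i)$ for $j\in\mathcal{J}_{(k)}$ unchanged, and it keeps the margins $\ge\gamma_{(k)}$ provided the clipped value is not attained at a data point. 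If it is, I would instead re-run the rounding argument of Theorem~\ref{thm:learning_capacity} directly. That argument scales $\bm\rho$ by $\Lambda/\|\bm\rho\|_\infty$, rounds to integers, and sets $T_i$ to a rounded scaled $t_i$, with $|T_i|\le\lceil\Lambda\|\bm x\|_{(k),\infty}\rceil$. This is the step I expect to need the most care, though it should be routine.

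With the hypotheses verified and $\Lambda_{(k)}$, $T_{(k)}$ as given, Theorem~\ref{thm:learning_capacity} yields integer $\bm\lambda,\bm T$ whose indicators agree with, or improve upon, those of $(\bm\rho,\bm t)$ in weighted net benefit on $\mathcal{D}'$. In fact the construction makes the indicators identical, since every sign $\bm x_j^\top\bm\lambda-T_i$ is preserved for $j\in\mathcal{J}_{(k)}$.

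It remains to bound the contribution of each of the at most $k-1$ excluded samples $j\notin\mathcal{J}_{(k)}$. Take the difference between the integer model's term and the baseline's term for sample $j$, summed over $i$. If $y_j=1$, that difference is at least $-\sum_i\omega_i$. If $y_j=0$, it is at least $-\sum_i\omega_ip_i/(1-p_i)$. Both are bounded below by $-\sum_i \omega_i/(1-p_i)$, because $1\le 1/(1-p_i)$ and $p_i/(1-p_i)\le 1/(1-p_i)$. Summing the exact agreement on $\mathcal{J}_{(k)}$ with the losses on the at most $k-1$ excluded samples gives the claimed inequality with penalty $(k-1)\sum_{i=0}^M\omega_i/(1-p_i)$.
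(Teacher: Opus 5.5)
Your route is the paper's: restrict to $\mathcal{J}_{(k)}$, apply \Cref{thm:learning_capacity} there, and charge the at most $k-1$ discarded samples. The accounting differs only cosmetically. The paper bounds the integer model's discarded contribution below by $-(k-1)\sum_i\omega_ip_i/(1-p_i)$ and the baseline's above by $k-1$, then adds. Your per-sample differences give the slightly smaller $(k-1)\max\{\sum_i\omega_i,\sum_i\omega_ip_i/(1-p_i)\}$. Your ``at most $k-1$'' is also more accurate than the paper's $|\mathcal{J}_{(k)}|=N-(k-1)$, which fails under ties.

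The hypothesis transfer you flag is a real hole; the paper's proof applies \Cref{thm:learning_capacity} to $\mathcal{D}_{(k)}$ without checking it. But your patch does not close it.
\begin{itemize}
\item Clipping to $\pm\max_{j\in\mathcal{J}_{(k)}}|\bm x_j^\top\bm\rho|$ can leave $t_i$ closer than $\gamma_{(k)}\|\bm\rho\|_\infty$ to a retained score, even when the clipped value is not itself a score.
\item Re-running the rounding does not give $|T_i|\le\lceil\Lambda\|\bm x\|_{(k),\infty}\rceil$. That bound comes from $|t_i|\le\|\bm\rho\|_\infty\|\bm x\|_{(k),\infty}$, which is exactly what fails.
\end{itemize}
The rounding construction genuinely breaks. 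Take $P=1$, $\rho=1$, many points at $x\in\{0,1\}$, one point at $x=10$, and $\bm t=(-10,0.5,9.8)$. The $x=10$ point has the unique smallest margin $0.2$, so $\gamma_{(2)}=1/2$ and $\|\bm x\|_{(2),\infty}=1$. Hence $\Lambda_{(2)}=3$ and $T_{(2)}=3$ are admissible, and rounding gives $\lambda=3$. Every admissible $T_2\le 3$ then flags the $x=1$ points at $p_2$, while the baseline does not. So sign preservation on $\mathcal{J}_{(2)}$ fails, and with $\omega_2>0$ and enough negatives at $x=1$, the loss exceeds the slack $(k-1)\sum_i\omega_i/(1-p_i)$. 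The statement survives in this example, since $\lambda=2$, $\bm T=(-3,1,3)$ reproduces the baseline exactly, but not via \Cref{thm:learning_capacity}.

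What does work is a case split on where $t_i$ sits relative to the retained scores:
\begin{itemize}
\item If $t_i$ lies strictly between the smallest and largest score on $\mathcal{J}_{(k)}$, then $|t_i|\le\|\bm\rho\|_\infty\|\bm x\|_{(k),\infty}$ and the rounding applies.
\item If $t_i$ lies below all of them, $T_i=-T_{(k)}$ flags all of $\mathcal{J}_{(k)}$, because $\bm x_j^\top\bm\lambda\ge-\Lambda_{(k)}\|\bm x_j\|_1\ge -T_{(k)}$.
\item If $t_i$ lies above every retained score, you need an integer $T_i\le T_{(k)}$ exceeding $\max_{j\in\mathcal{J}_{(k)}}\bm x_j^\top\bm\lambda$, which is at most $\Lambda_{(k)}\|\bm x\|_{(k),\infty}$. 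The choice $T_i=T_{(k)}$ works except when $\max_{j}\bm x_j^\top\bm\lambda=\Lambda_{(k)}\|\bm x\|_{(k),\infty}=T_{(k)}$.
\end{itemize}
To close the gap, pick one of three repairs:
\begin{itemize}
\item strengthen the hypothesis to $T_{(k)}>\Lambda_{(k)}\|\bm x\|_{(k),\infty}$;
\item impose the range condition on $\bm t$ over $\mathcal{J}_{(k)}$ only;
\item give a separate argument for a smaller $\bm\lambda$ in that edge case.
\end{itemize}
Your treatment of the discarded samples then finishes the proof.
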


\begin{proof}
    See the appendix.
\end{proof}
Theorem \ref{thm:learning_capacity} and Corollary \ref{cor:learning_capacity} characterize the learning capacity of the proposed model on the training set. Next, we will demonstrate its generalization, that is, the expected performance on all possible values of $(x,y) \in \mathcal{X}\times\mathcal{Y}$. We use $R_N(\bm \lambda,\bm T)$ and $R(\bm \lambda,\bm T)$ to represent the weighted sum of empirical and expected negative net benefit,respectively. Formally,
\begin{align}
    R_N(\bm \lambda,\bm T) = - \frac{1}{N}\sum_{i=0}^M \sum_{j=1}^N \omega_i {I}(\bm x_j^\top \bm \lambda - T_i \ge 0,y_i=1) + \frac{1}{N}\sum_{i=0}^M \sum_{j=1}^N \frac{\omega_i p_i}{1-p_i} {I}(\bm x_j^\top \bm \lambda - T_i \ge 0,y_i=0),
\end{align}
\begin{align}
    R(\bm \lambda,\bm T) = - \sum_{i=0}^M \omega_i \mathbb{E}_{\mathcal X, \mathcal Y} \left[ {I}(\bm x_j^\top \bm \lambda - T_i \ge 0,y_i=1)\right]  + \sum_{i=0}^M \frac{\omega_i p_i}{1-p_i} \mathbb{E}_{\mathcal X, \mathcal Y} \left[ {I}(\bm x_j^\top \bm \lambda - T_i \ge 0,y_i=0)\right] .
\end{align}
Theorem \ref{thm:generalization} shows an upper bound on the difference between $R(\bm \lambda,\bm T)$ and $R_N(\bm \lambda,\bm T)$. 
\begin{theorem}\label{thm:generalization}
Let $\Lambda_k \in \mathbb{Z}_+$, $k=1,2,\ldots,P$, and let $T_{\max}\in \mathbb{Z}_+$. $\mathcal{L}=\{\bm \lambda \in \mathbb{Z}^P:-\Lambda_k \le \lambda_k \le \Lambda_k,\,k=1,2,\ldots,P\}$, $\mathcal{B}_0=\{T \in \mathbb{Z}:-T_{\max} \le T \le T_{\max}\}$, $\mathcal{B}=\{\bm T \in \mathbb{Z}^{M+1}:T_i \in \mathcal{B}_0,\,i=0,1,\ldots,M\}$. Then, for any $\bm\lambda \in \mathcal{L}$, $\bm T \in {\mathcal{B}}$, given a small $\delta > 0$, with probability at least $1 - 2(M + 1)\delta$, we have
\begin{align}
\sup_{\bm \lambda \in \mathcal{L},\bm T \in \mathcal{B}} \left\{ R(\bm \lambda,\bm T) - R_{N}(\bm \lambda,\bm T) \right\} \le \sum_{i=0}^{M} \frac{ \omega_i}{1 - p_i} \sqrt{\frac{\ln|\mathcal{L}|+\ln|{\mathcal{B}}_0| - \ln \delta}{2N}} ,
\end{align}
where $|\mathcal{L}| = \prod_{k=1}^P (2\,\Lambda_k + 1)$, and $|{\mathcal{B}}_0|=2\,T_{\max}+1$.
\end{theorem}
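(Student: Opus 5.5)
Our plan is to decompose the risk by threshold, apply Hoeffding's inequality with a union bound to each threshold term separately, and then combine the per-threshold bounds with a final union bound over the $M+1$ thresholds. The failure probability $2(M+1)\delta$ suggests exactly this structure: one $\delta$ for each of $2(M+1)$ events.

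\textbf{Per-threshold loss.} For each $i$ and each pair $(\bm\lambda,T)\in\mathcal{L}\times\mathcal{B}_0$, define
\begin{equation*}
\ell_i(\bm x,y;\bm\lambda,T) = -\omega_i I(\bm x^\top\bm\lambda\ge T, y=1) + \frac{\omega_i p_i}{1-p_i} I(\bm x^\top\bm\lambda\ge T, y=0).
\end{equation*}
Then $R(\bm\lambda,\bm T)-R_N(\bm\lambda,\bm T)=\sum_{i=0}^M\bigl(\mathbb{E}[\ell_i(\cdot;\bm\lambda,T_i)]-\frac1N\sum_j \ell_i(\bm x_j,y_j;\bm\lambda,T_i)\bigr)$. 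The key observation is that the $i$-th summand depends on $\bm T$ only through $T_i$. Hence
\begin{equation*}
\sup_{\bm\lambda,\bm T}\{R-R_N\}\le\sum_{i=0}^M \sup_{(\bm\lambda,T)\in\mathcal{L}\times\mathcal{B}_0}\Delta_i(\bm\lambda,T),
\end{equation*}
where $\Delta_i$ denotes the $i$-th gap. This is why only $|\mathcal{B}_0|$ appears in the bound, rather than $|\mathcal{B}|=|\mathcal{B}_0|^{M+1}$.

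\textbf{Hoeffding for a fixed pair.} Each $\ell_i$ takes values in $[-\omega_i,\ \omega_i p_i/(1-p_i)]$, an interval of length $\omega_i+\omega_i p_i/(1-p_i)=\omega_i/(1-p_i)$. For a fixed pair $(\bm\lambda,T)$, the samples are i.i.d., so the one-sided Hoeffding inequality gives
\begin{equation*}
\Pr(\Delta_i>\epsilon)\le\exp\!\left(-\frac{2N\epsilon^2(1-p_i)^2}{\omega_i^2}\right).
\end{equation*}

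\textbf{Union bounds.} A union bound over the $|\mathcal{L}||\mathcal{B}_0|$ pairs, with
\begin{equation*}
\epsilon_i=\frac{\omega_i}{1-p_i}\sqrt{\frac{\ln|\mathcal{L}|+\ln|\mathcal{B}_0|-\ln\delta}{2N}},
\end{equation*}
makes the failure probability for threshold $i$ at most $\delta$. A second union bound over $i=0,\dots,M$ gives total failure probability at most $(M+1)\delta$, which is within the stated $1-2(M+1)\delta$. The factor $2$ presumably comes from the authors bounding the positive-class and negative-class terms separately, each with its own Hoeffding step. If one follows that route, each term has range $\omega_i$ or $\omega_i p_i/(1-p_i)$, and the two resulting bounds sum to the same $\omega_i/(1-p_i)$ coefficient. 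Either route proves the claim, and I would present the split version to match the constant exactly.

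\textbf{Main obstacle.} The only delicate step is the decoupling in the first paragraph. A union bound over the full product $\mathcal{L}\times\mathcal{B}$ would yield $(M+1)\ln|\mathcal{B}_0|$ inside the square root, which is too large. Passing the supremum inside the sum over $i$ avoids this. That inequality holds deterministically, since a supremum of a sum is at most the sum of suprema, so no independence between thresholds is required. Finally, $|\mathcal{L}|=\prod_k(2\Lambda_k+1)$ and $|\mathcal{B}_0|=2T_{\max}+1$ follow by direct counting.
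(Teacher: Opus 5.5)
Your proposal is correct and matches the paper's proof. The paper bounds the positive-class and negative-class indicator averages separately for each threshold (ranges $1$ and $p_i/(1-p_i)$, then weighted by $\omega_i$), union-bounds over $\mathcal{L}\times\mathcal{B}_0$, and then union-bounds over the resulting $2(M+1)$ events, which is exactly your split version. Your combined-loss variant with range $\omega_i/(1-p_i)$ is also valid: it gives the same bound with the slightly better confidence $1-(M+1)\delta$, and it avoids the degenerate $p_0=0$ exponent in the negative-class Hoeffding step.
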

\begin{proof}
    See the appendix.
\end{proof}

\Cref{thm:generalization} provides a generalization bound for the proposed model over the finite parameter sets $\mathcal{L}$ and $\mathcal{B}$. It shows that the expected negative net benefit $R(\bm \lambda,\bm T)$ can be controlled by the empirical version $R_{N}(\bm \lambda,\bm T)$, plus a term that depends logarithmically on the size of $\mathcal{L}$ and $\mathcal{B}_0$, and decreases at the rate $\mathcal{O}(1/\sqrt{N})$. This shows that the finiteness of $\mathcal{L}$ and $\mathcal{B}$ not only enhances the interpretability of the model but also provides explicit control over generalization. 
\section{Experiments} \label{sec:experiments}
In this section, we present numerical experiments based on publicly available datasets to compare the predictive performance, decision utility, and model sparsity of RSS-DNB with other baseline models. The purpose of this section is to demonstrate that RSS-DNB, while explicitly optimizing AUNBC, can achieve comparable discrimination, calibration, and sparsity.
\subsection{Experimental setup}
We ran experiments on eight datasets from the UCI Machine Learning Repository \citep{kellyUCIMachineLearning}. Following the preprocessing procedure of \citet{ustunSupersparseLinearInteger2016}, we binarized all category variables and some continuous variables, removed all samples with missing values, and partitioned each dataset into ten folds for cross-validation. Table \ref{tab:dataset} 

\begin{table}[htbp]
    \caption{The processed datasets used in the study.}
    \begin{tabularx}{\textwidth}{lrrrX}
    \toprule
    Dataset & $N$ & $P$ & $N^+/N$ & Task\\
    \midrule
    adult \citep{beckerAdult1996} & 32,561 & 36 & 24.1\% & Predict whether annual income of an individual exceeds \$50,000\\
    bankruptcy \citep{kimDiscoveryExpertsDecision2003} & 250 & 6 & 57.2\% & Bankruptcy prediction based on qualitative parameters provided by experts\\
    breastcancer \citep{wolbergMultisurfaceMethodPattern1990} & 683 & 9 & 35.0\% & Predict whether a breast tumor is malignant based on cytological characteristics\\
    haberman \citep{habermanGeneralizedResidualsLogLinear1976} & 306 & 3 & 73.5\% & Predicting the survival of breast cancer surgery patients \\
    heart \citep{detranoInternationalApplicationNew1989} & 303 & 32 & 45.9\% & Predicting whether a patient has a high risk of coronary artery disease \\
    mammo \citep{elterPredictionBreastCancer2007} & 961 & 14 & 46.3\% & Predict whether a mammographic mass is malignant \\ 
    mushroom \citep{schlimmerMushroom1987} & 8,124 & 113 & 48.2\% &  Determine whether a mushroom is poisonous\\
    spambase \citep{hopkinsSpambase1999} & 4,601 & 57 & 39.4\% & Determine whether an email is spam\\
    \bottomrule
    \end{tabularx}    
    \label{tab:dataset}
\end{table}

For each dataset, the models were trained on nine folds and evaluated on the remaining fold. Performance metrics were averaged over the 10 folds. We report the mean AUROC, AUNBC, and Expected Calibration Error (ECE) on both the training and test sets, together with their standard deviations, as well as the average model size and its range. The AUROC, AUNBC, and ECE are used to evaluate models' discrimination, utility, and calibration, respectively, while the size serves as a measure of models' sparsity. The AUNBC and ECE is computed over a set of predefined decision thresholds $p_i = \frac{i}{10}$, $i=0,1,\ldots,9$. The model size is defined as the number of nonzero coefficients for linear models (excluding the intercept) and as the number of nodes for decision tree models. 

We considered a range of models in our experiments, including sparse linear scoring models (RSS-DNB, RSS-DNB with simulated annealing (RSS-DNB-SA), SLIM, and RISKSLIM), baseline interpretable models (logistic regression, LASSO-regularized logistic regression, and decision tree), and the gradient boosting model XGBoost. All methods were trained and evaluated using the same data preprocessing procedure, the same cross-validation folds, and the same evaluation protocol.

For the RSS-DNB model, the $\ell_0$-penalty parameter was set as $C_0 = 0.1\cdot N^+/NP$, and all coefficients were restricted to $\{-10, \ldots, 10\}$. The RSS-DNB-SA model was trained using Algorithm~\ref{alg:sa}. The $\ell_0$-penalty parameter and the coefficient range were the same as those for RSS-DNB. The initial temperature was set as $10^{-3}$ with a cooling rate of $10^{-6}$, and the minimum temperature was set as 0. At each temperature, the number of iterations was set as 10. \Cref{alg:moderate_calibration} was applied after training RSS-DNB and RSS-DNB-SA. We also conducted ablation studies to evaluate the effects of the sparsity penalty $C_0$ and \Cref{alg:moderate_calibration}. The detailed experimental results are provided in Appendix \ref{app:ablation}. For the SLIM model, we adopted the settings recommended in the original paper \citep{ustunSupersparseLinearInteger2016}. Specifically, the $\ell_0$-penalty parameter was set as $C_0 = 0.9 / N P$ and the $\ell_1$-penalty parameter was set as $\epsilon = C_0 / 10$, where $N$ and $P$ denote the sample size and the number of features, respectively. The coefficients were restricted to $\{-10, \ldots, 10\}$, and the intercept was restricted to $\{-100, \ldots, 100\}$. The RISKSLIM model was solved using the initialization procedure and the LCAP algorithm proposed in the original work \citep{ustunLearningOptimizedRisk2019}. Following the paper, the regularization parameter was set as $C_0 = 10^{-6}$, and the intercept term was constrained to $\{-100, \ldots, 100\}$. To align the coefficient scale with the other models, we allowed the coefficients to take values in $\{-10, \ldots, 10\}$, although the original implementation restricted them to $\{-5, \ldots, 5\}$. For the XGBoost model, we used a maximum tree depth of 4, a learning rate of 0.1, and 100 boosting iterations. To obtain calibrated probability estimates, Platt scaling was applied to the predicted scores after training.

For both RSS-DNB and RSS-DNB-SA, the optimization objective was constructed using uniformly spaced thresholds $p_i=\frac{i}{10}$, $i=0,1,\ldots,9$, with equal weights $\omega_i=\frac{1}{10}$. Unless otherwise specified, this setting was adopted throughout the experiments. To evaluate the sensitivity of the proposed methods to the threshold gridding and weighting scheme, we additionally considered three threshold setting during the training: (i) a finer uniform grid ($p_i = \frac{i}{20}$); (ii) a coarser uniform grid ($p_i = \frac{i}{5}$); and (iii) a randomly generated threshold setting. For the randomly generated threshold grid, we further considered two weighting schemes: equal weights and interval-length weights ($\omega_i = p_{i+1} - p_i$). The corresponding results are reported in Appendix \ref{app: sensitivity_analysis}.

The optimization problems underlying the RSS-DNB, SLIM, and RISKSLIM models were solved using Gurobi Optimizer 11.0.3 via the MATLAB (R2021a) interface. All problems were free from additional constraints, such as the number of non-zero coefficients, and the solution time for each optimization problem was limited to 10 minutes. Logistic regression, LASSO-regularized logistic regression, and decision tree were trained using the corresponding built-in MATLAB functions. The XGBoost model and Platt scaling were implemented in Python 3.12.13 using the 'xgboost' and 'scikit-learn' libraries. Statistical significance analysis were performed in R (version 4.5.2) using the 'PMCMRplus' package, where the Friedman test followed by Nemenyi test was conducted.
\subsection{Results}
We summarize the results in \Cref{tab:performance} and show the ROC curves, calibration plots, and decision curves of all models on each dataset in \Cref{fig:roc_part1,fig:roc_part2,fig:calibration_part1,fig:calibration_part2,fig:dca_part1,fig:dca_part2}, respectively. All reported curves are generated using out-of-fold predictions from 10-fold cross-validation. In addition, since RSS-DNB, SLIM, and RISKSLIM are mix-integer linear programming (MILP)-based models, we report their computational statistics in Appendix \ref{app:additional} (\Cref{tab:computational_statistics}), including runtime, optimality gap, solver termination status, and number of solved instances.

As shown in \Cref{tab:performance} and \Cref{fig:roc_part1,fig:roc_part2}, the proposed RSS-DNB models achieved competitive AUROC across the eight test sets. Importantly, optimizing net benefit did not result in a substantial loss of discrimination, indicating that the proposed approach maintains strong ranking performance while targeting decision-oriented objectives. 

The two RSS-DNB models consistently achieved perfect calibration on the training sets (ECE = 0), which is in accordance with \Cref{cor:calibration}. This result confirms that the proposed optimization framework explicitly enforces calibration under the specified conditions. On the test sets, the RSS-DNB models generally demonstrated improved calibration compared with baseline methods, as reflected in both lower ECE and visually better alignment in calibration plots (\Cref{fig:calibration_part1,fig:calibration_part2}). These findings suggest that the calibration advantages are not limited to the training data but also translate into improved out-of-sample reliability.

In terms of utility performance, the RSS-DNB models achieved net benefit levels that were comparable to baseline models across a broad range of thresholds (\Cref{fig:dca_part1,fig:dca_part2}). While no single model dominated across all datasets and threshold ranges, the proposed approach consistently provided competitive or superior net benefit within the target decision region.

To assess whether the differences among competing methods are statistically significant, we performed the Friedman test followed by Nemenyi test across all datasets. The analysis was conducted separately for AUROC, AUNBC, ECE, and model size. For model size, only linear models (RSS-DNB, RSS-DNB-SA, logistic, LASSO, SLIM, and RISKSLIM) were included because their complexity is measured consistently by the number of non-zero coefficients. For each metric, the Friedman test rejected the null hypothesis that all methods perform equivalently, with significant differences observed for AUROC ($\chi^2=29.00$, $df=7$, $p<0.001$), AUNBC ($\chi^2=18.36$, $df=7$, $p=0.010$), ECE ($\chi^2=25.93$, $df=7$, $p<0.001$), and model size ($\chi^2=17.46$, $df=5$, $p=0.004$). However, the subsequent Nemenyi test showed that statistical differences depended on the metrics. Specifically, the proposed methods did not show statistically significant differences compare to baseline models in term of AUROC and AUNBC. However, RSS-DNB achieved significantly better calibration than LASSO and XGBoost in terms of ECE ($p=0.020$ for both comparisons), and both RSS-DNB and RSS-DNB-SA produced significantly smaller model sizes than logistic regression ($p=0.039$ and $p=0.001$, respectively). These results show that the proposed methods maintain competitive predictive and decision utility performance while providing advantages in calibration and sparsity for specific comparisons. Detailed results of the Friedman tests and Nemenyi tests are provided in Appendix \ref{app:statistical_analysis}.

Notably, despite being trained with a decision-oriented objective, the RSS-DNB models did not sacrifice discrimination or calibration while enforcing sparsity constraint to achieve utility optimization. Although the proposed method does not uniformly outperform all alternatives on every metric, the framework provides a principled mechanism to directly align model training with downstream decision-making, ensuring that utility considerations are formally incorporated rather than treated as a post hoc evaluation criterion.

\begin{figure}[htbp]
\centering

\begin{subfigure}{0.45\textwidth}
  \centering
  \includegraphics[width=\linewidth]{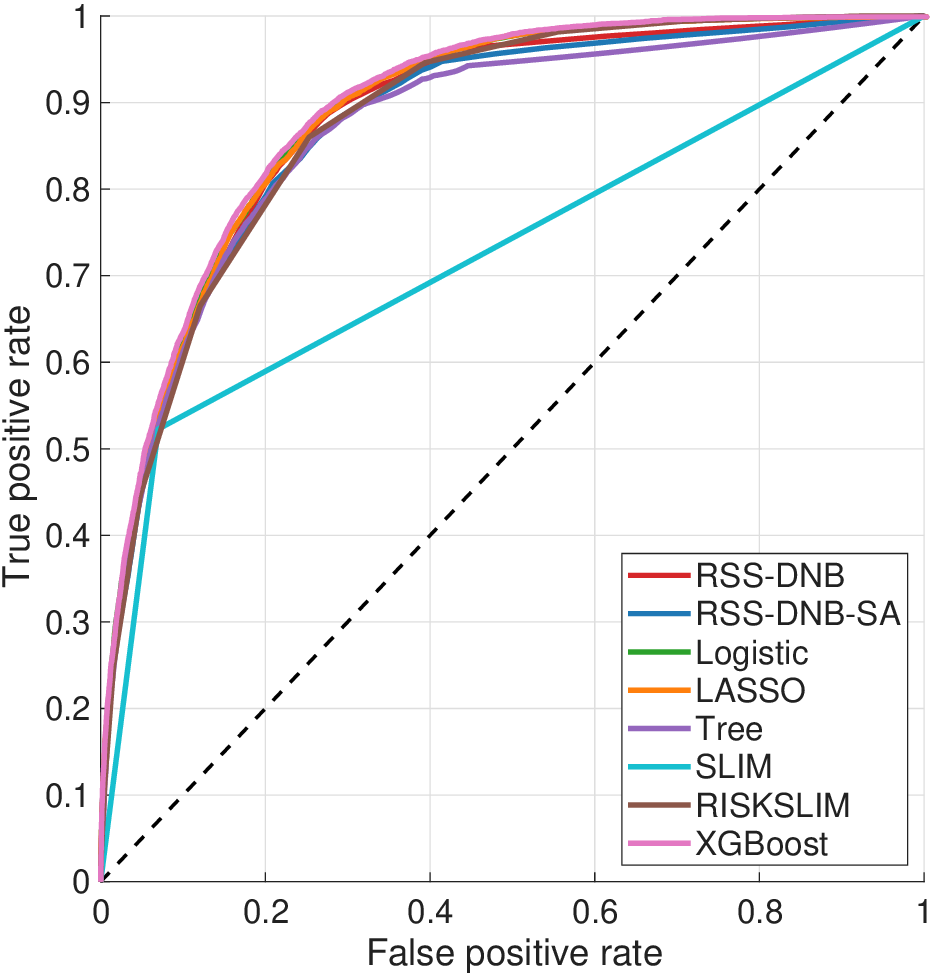}
  \caption{adult}
\end{subfigure}
\hfill
\begin{subfigure}{0.45\textwidth}
  \centering
  \includegraphics[width=\linewidth]{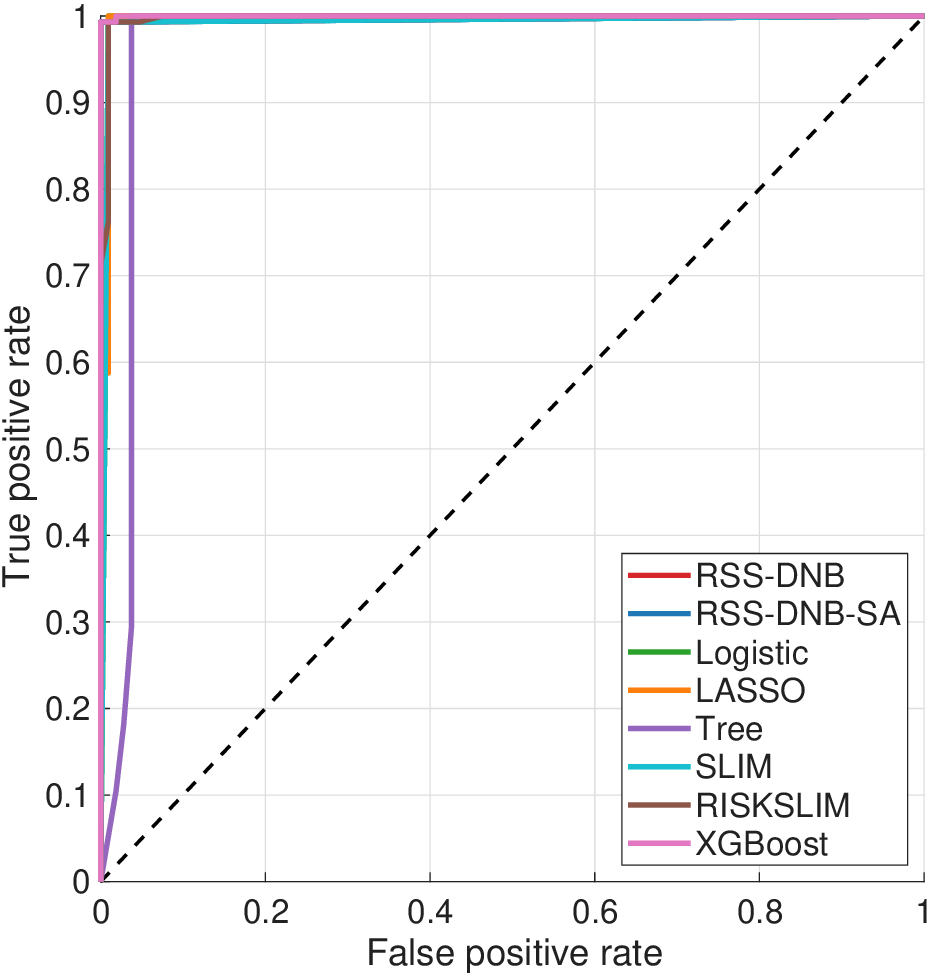}
  \caption{bankruptcy}
\end{subfigure}

\vspace{0.5cm}

\begin{subfigure}{0.45\textwidth}
  \centering
  \includegraphics[width=\linewidth]{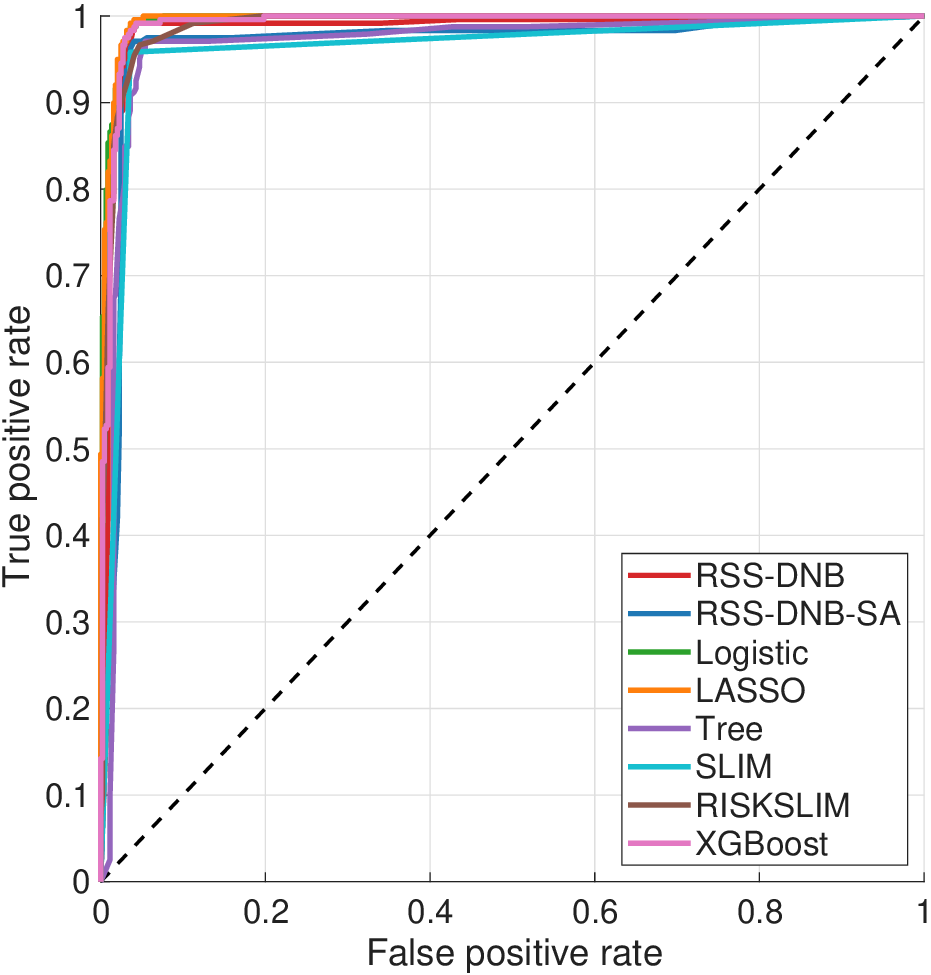}
  \caption{breastcancer}
\end{subfigure}
\hfill
\begin{subfigure}{0.45\textwidth}
  \centering
  \includegraphics[width=\linewidth]{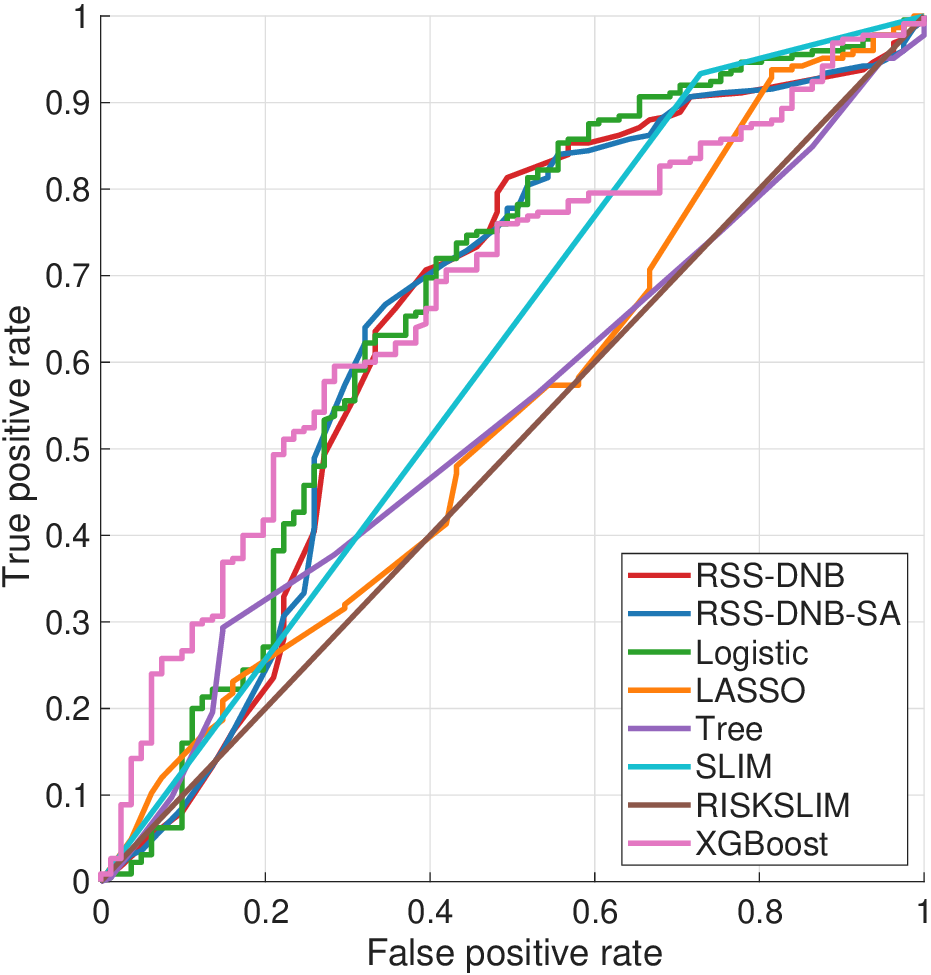}
  \caption{haberman}
\end{subfigure}

\caption{ROC curves on the first four datasets. Each panel shows the ROC curves of all models evaluated on the corresponding dataset.}
\label{fig:roc_part1}

\end{figure}

\begin{figure}[htbp]
\centering

\begin{subfigure}{0.45\textwidth}
  \centering
  \includegraphics[width=\linewidth]{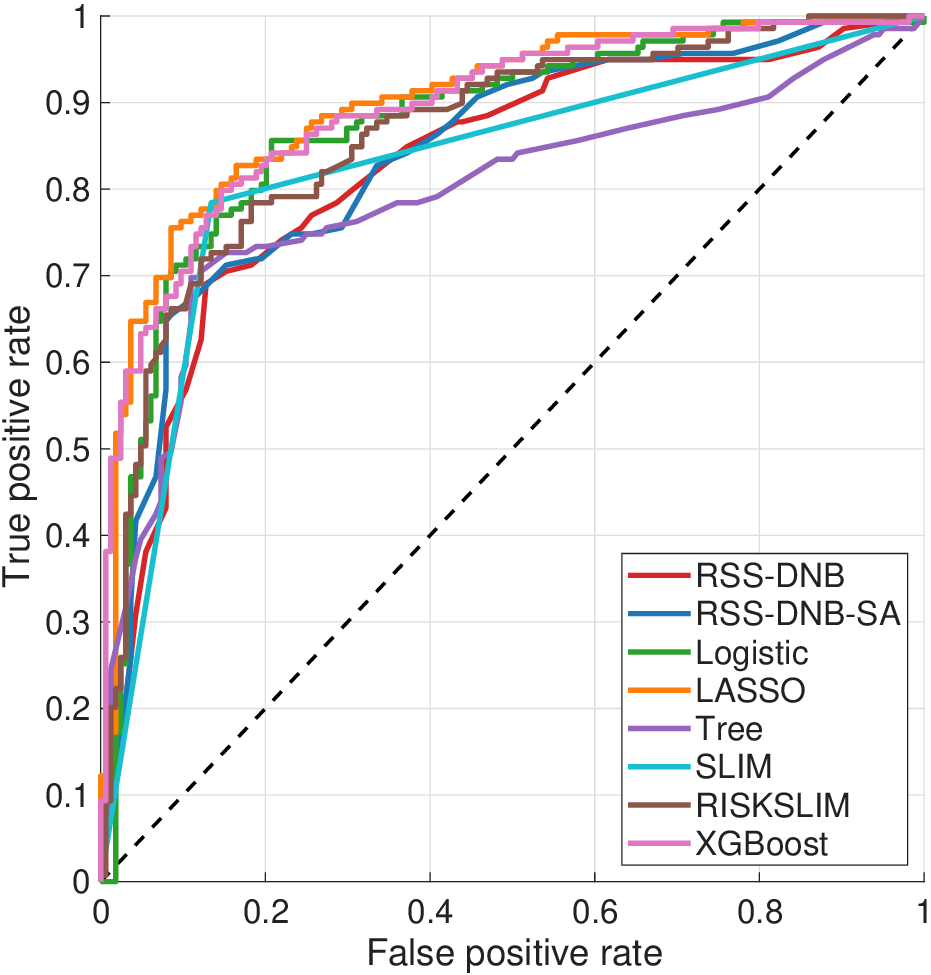}
  \caption{heart}
\end{subfigure}
\hfill
\begin{subfigure}{0.45\textwidth}
  \centering
  \includegraphics[width=\linewidth]{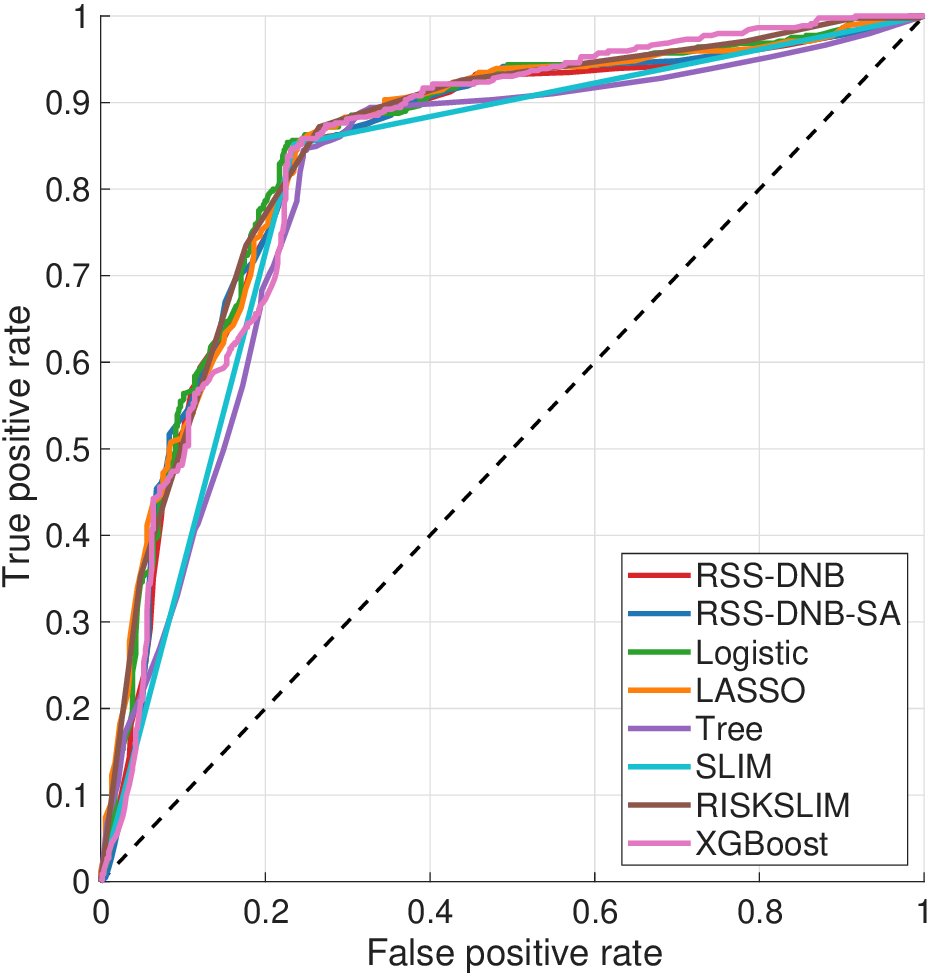}
  \caption{mammo}
\end{subfigure}

\vspace{0.5cm}

\begin{subfigure}{0.45\textwidth}
  \centering
  \includegraphics[width=\linewidth]{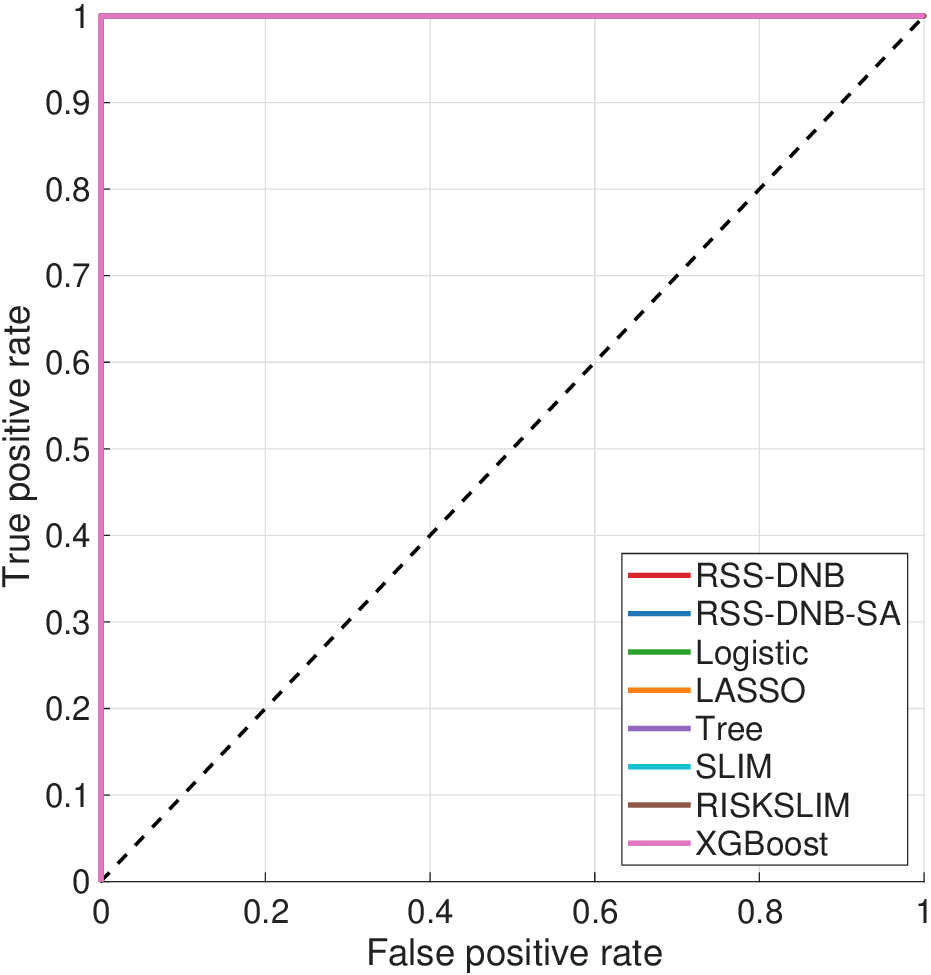}
  \caption{mushroom}
\end{subfigure}
\hfill
\begin{subfigure}{0.45\textwidth}
  \centering
  \includegraphics[width=\linewidth]{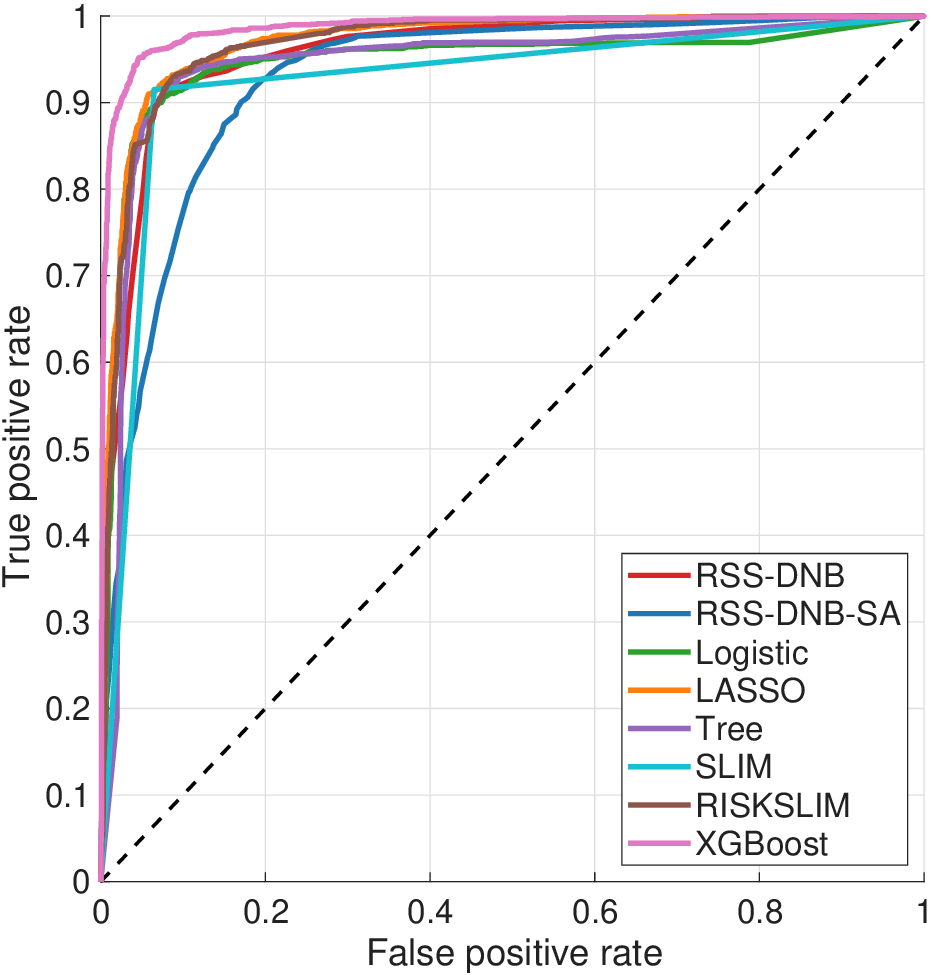}
  \caption{spambase}
\end{subfigure}

\caption{ROC curves on the remaining four datasets.}
\label{fig:roc_part2}

\end{figure}

\begin{figure}[htbp]
\centering

\begin{subfigure}{0.45\textwidth}
  \centering
  \includegraphics[width=\linewidth]{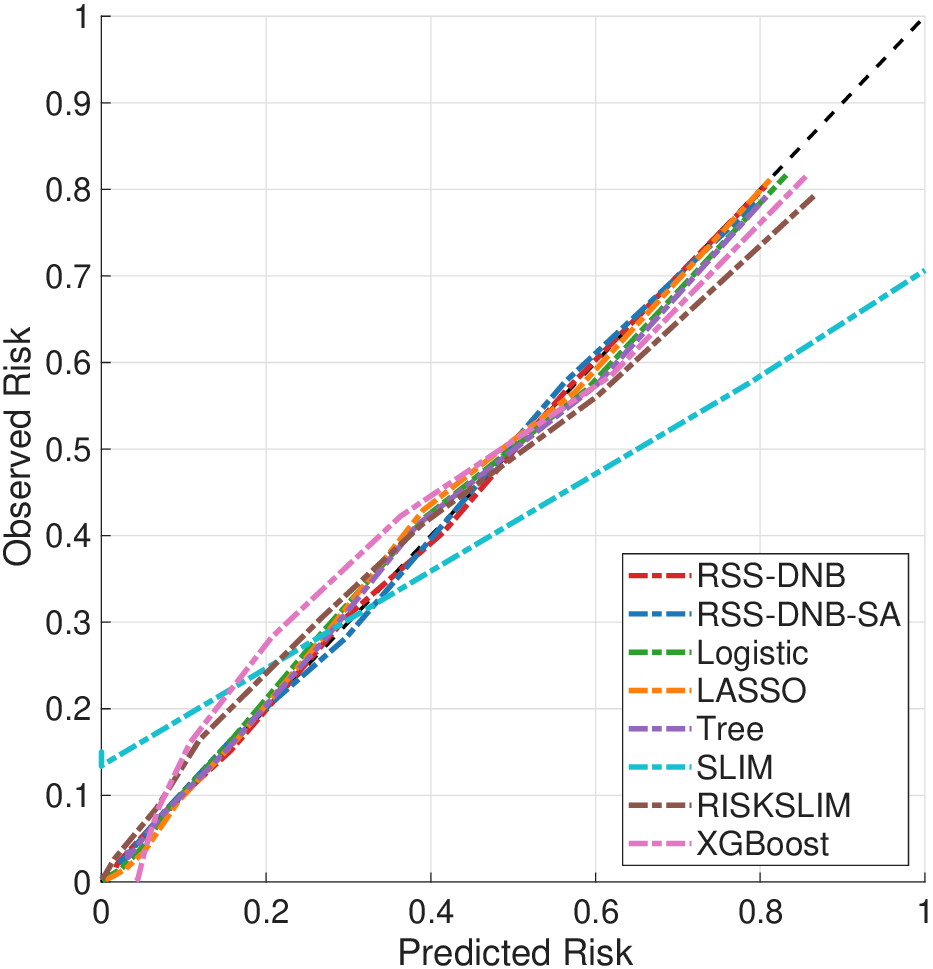}
  \caption{adult}
\end{subfigure}
\hfill
\begin{subfigure}{0.45\textwidth}
  \centering
  \includegraphics[width=\linewidth]{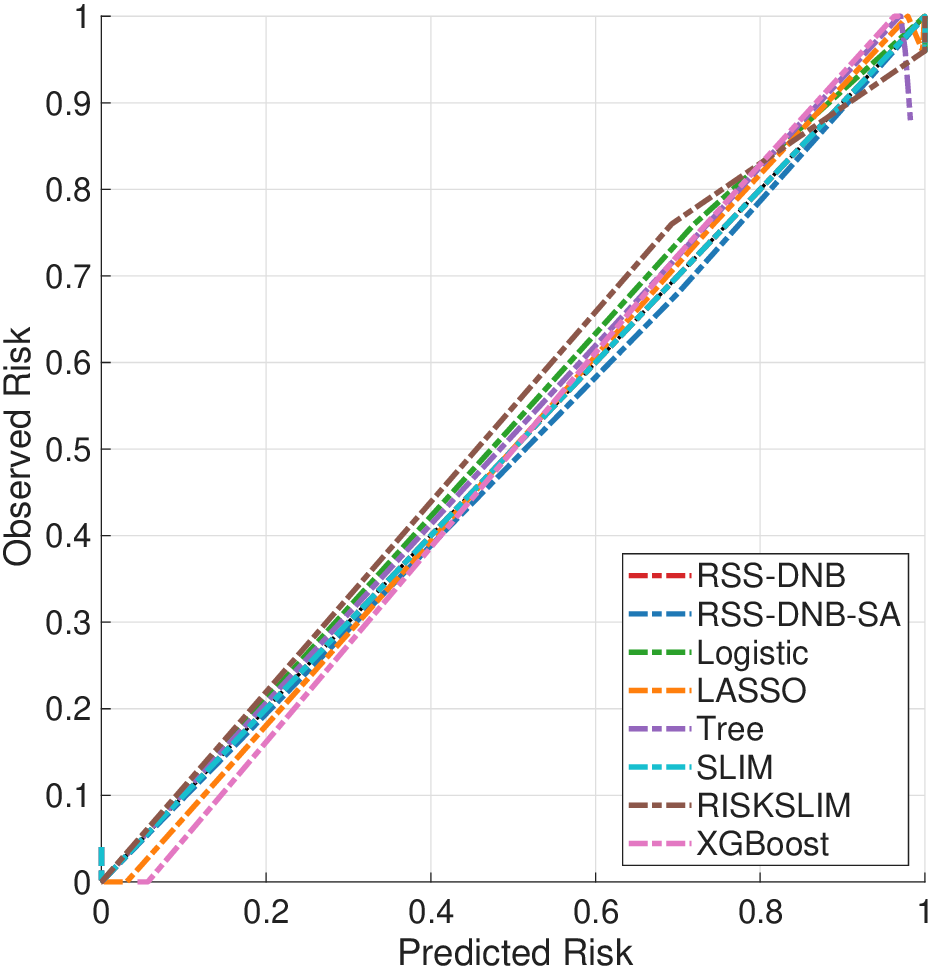}
  \caption{bankruptcy}
\end{subfigure}

\vspace{0.5cm}

\begin{subfigure}{0.45\textwidth}
  \centering
  \includegraphics[width=\linewidth]{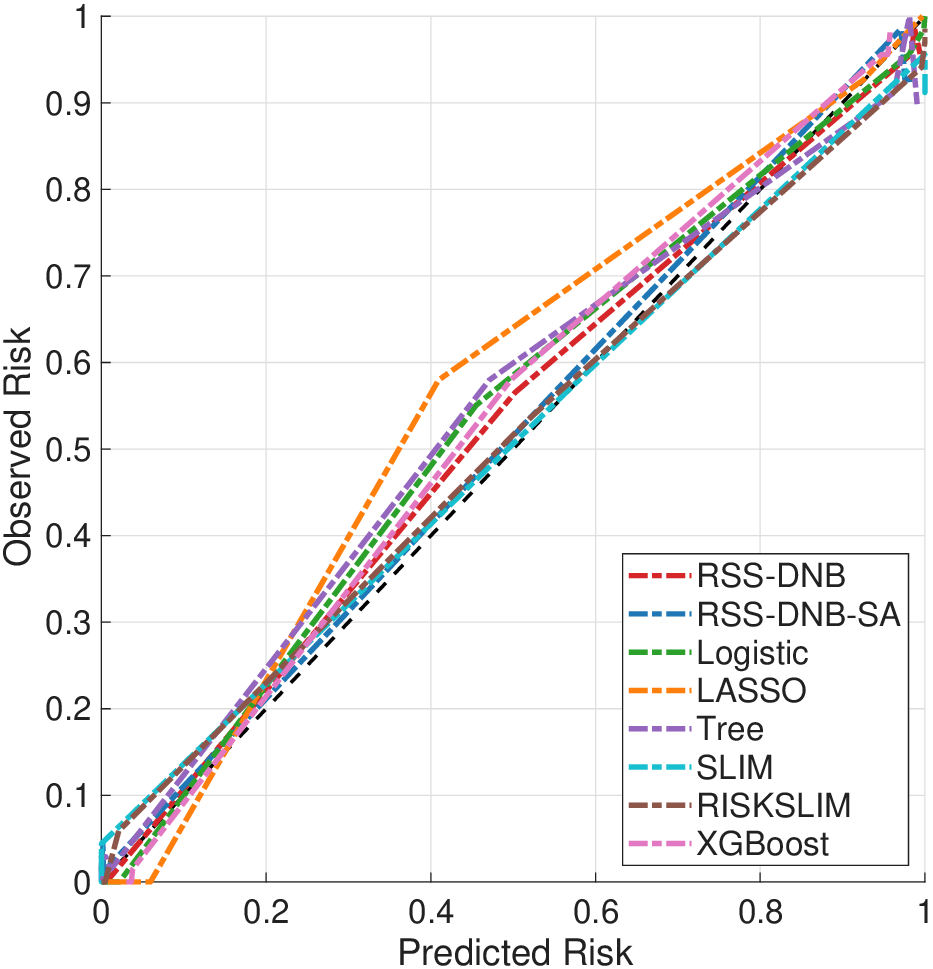}
  \caption{breastcancer}
\end{subfigure}
\hfill
\begin{subfigure}{0.45\textwidth}
  \centering
  \includegraphics[width=\linewidth]{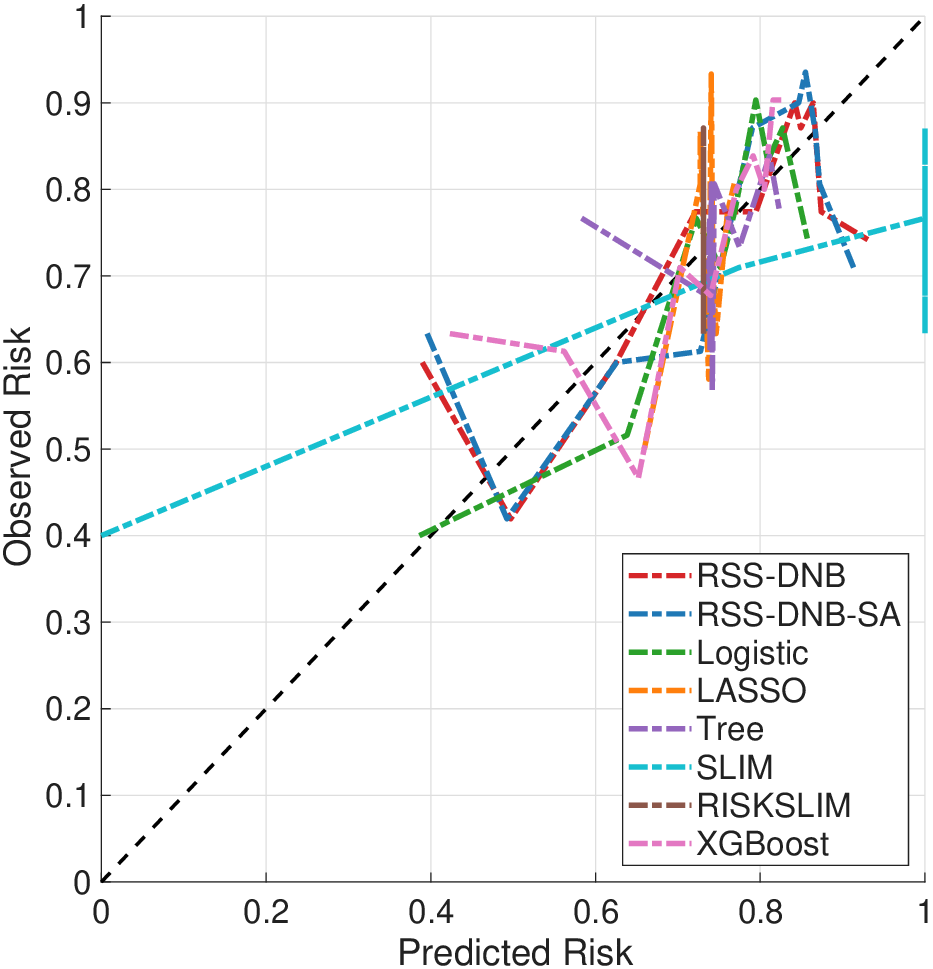}
  \caption{haberman}
\end{subfigure}

\caption{Calibration curves on the first four datasets. Each panel shows the Calibration curves of all models evaluated on the corresponding dataset.}
\label{fig:calibration_part1}

\end{figure}

\begin{figure}[htbp]
\centering

\begin{subfigure}{0.45\textwidth}
  \centering
  \includegraphics[width=\linewidth]{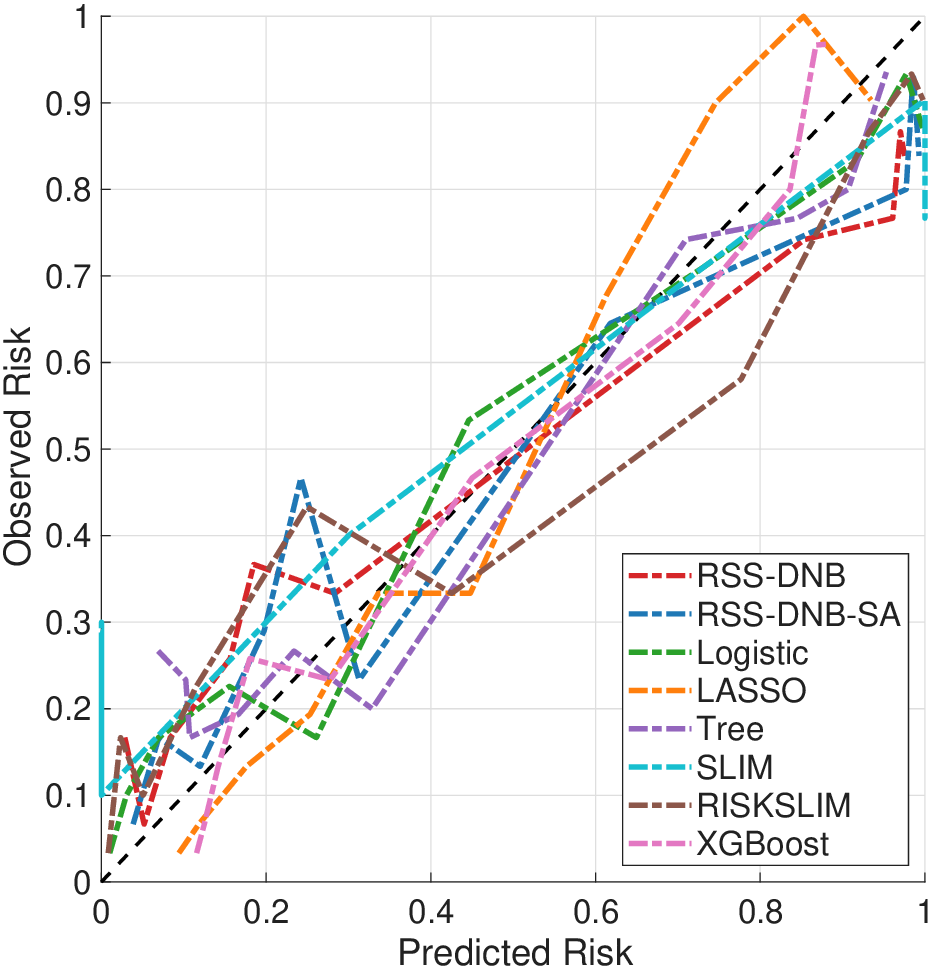}
  \caption{heart}
\end{subfigure}
\hfill
\begin{subfigure}{0.45\textwidth}
  \centering
  \includegraphics[width=\linewidth]{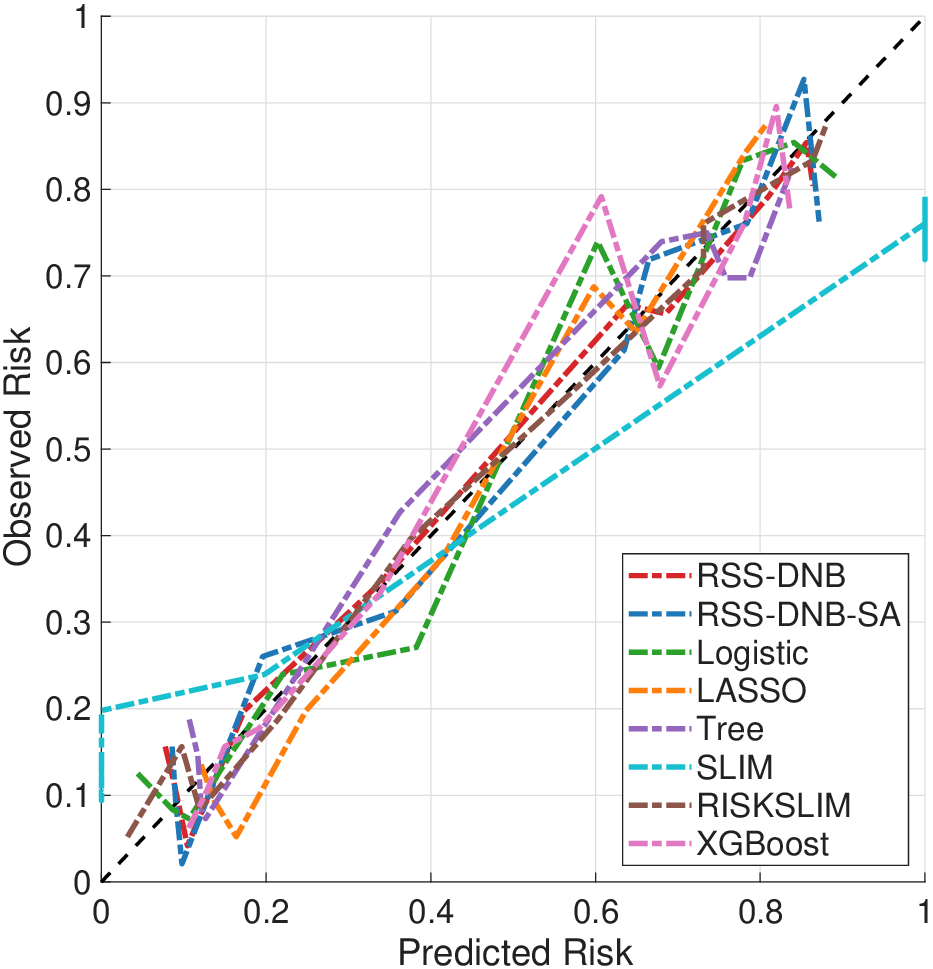}
  \caption{mammo}
\end{subfigure}

\vspace{0.5cm}

\begin{subfigure}{0.45\textwidth}
  \centering
  \includegraphics[width=\linewidth]{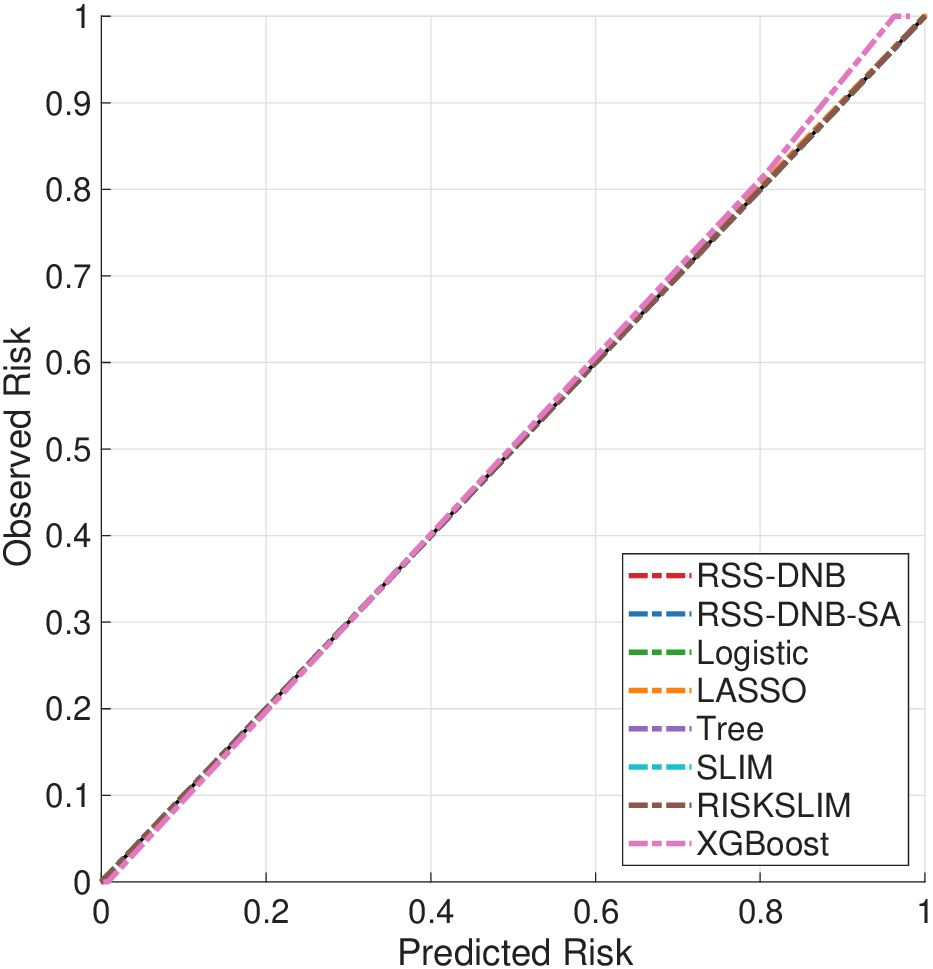}
  \caption{mushroom}
\end{subfigure}
\hfill
\begin{subfigure}{0.45\textwidth}
  \centering
  \includegraphics[width=\linewidth]{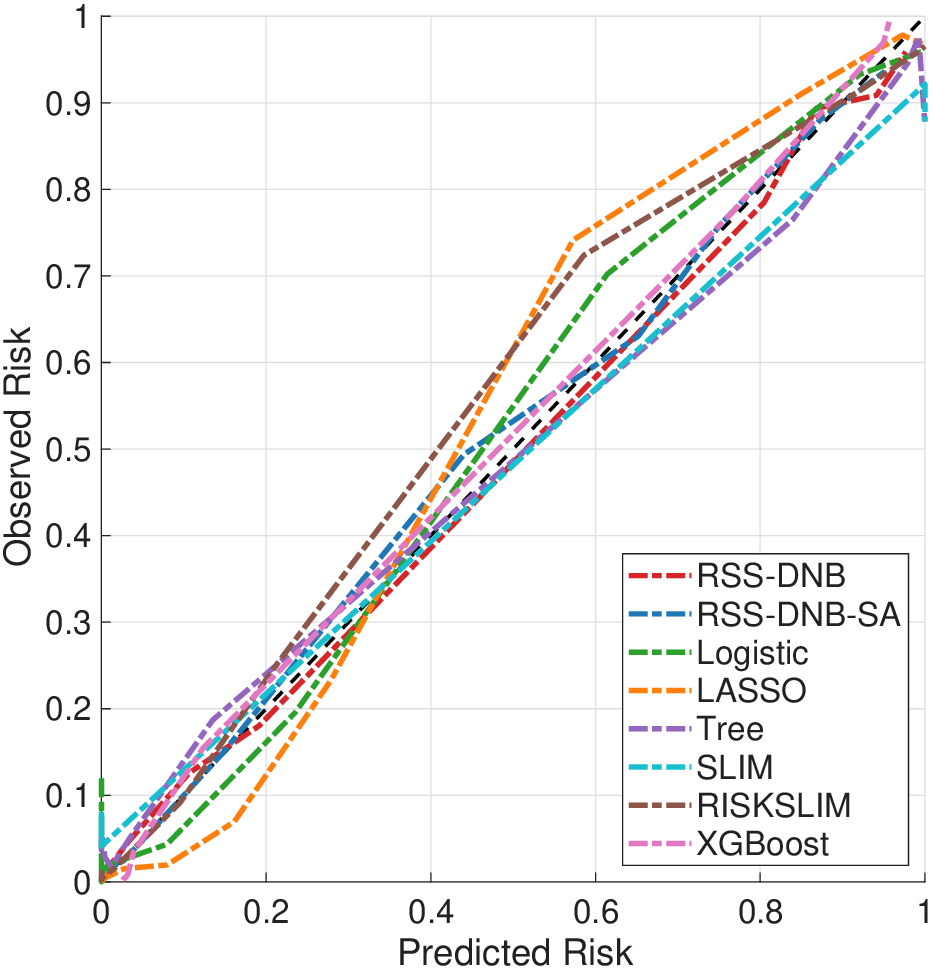}
  \caption{spambase}
\end{subfigure}

\caption{Calibration curves on the remaining four datasets.}
\label{fig:calibration_part2}

\end{figure}

\begin{figure}[htbp]
\centering

\begin{subfigure}{0.45\textwidth}
  \centering
  \includegraphics[width=\linewidth]{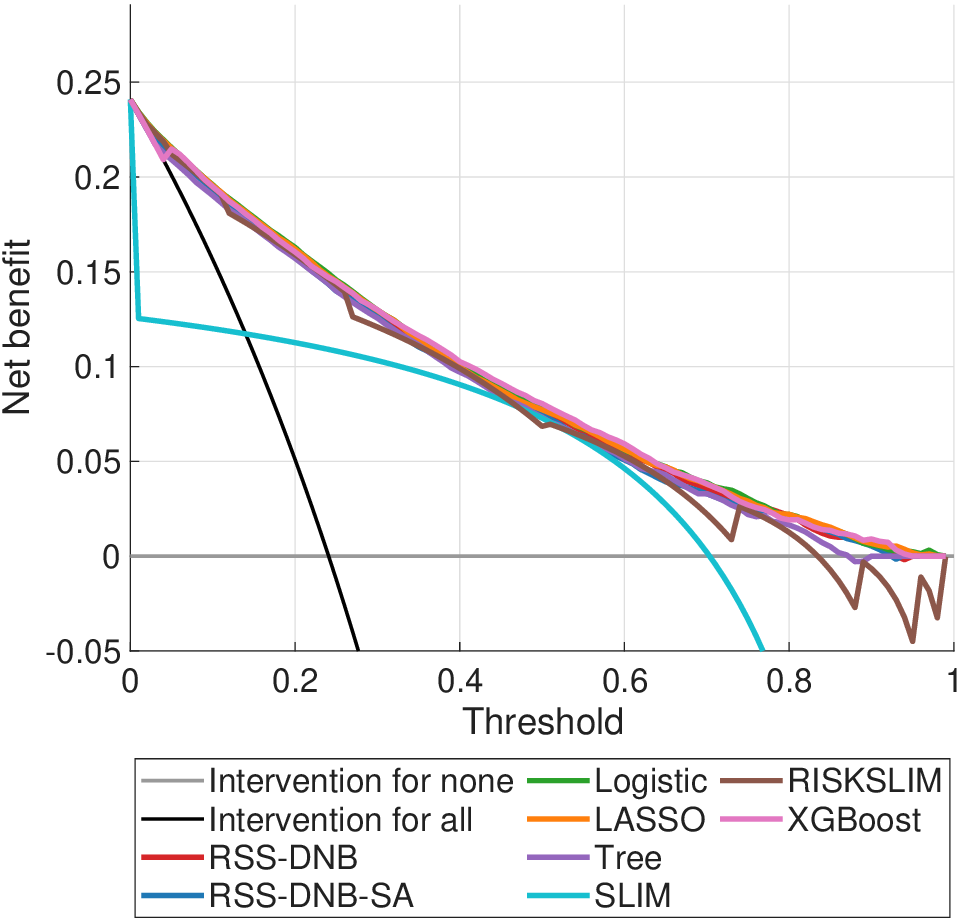}
  \caption{adult}
\end{subfigure}
\hfill
\begin{subfigure}{0.45\textwidth}
  \centering
  \includegraphics[width=\linewidth]{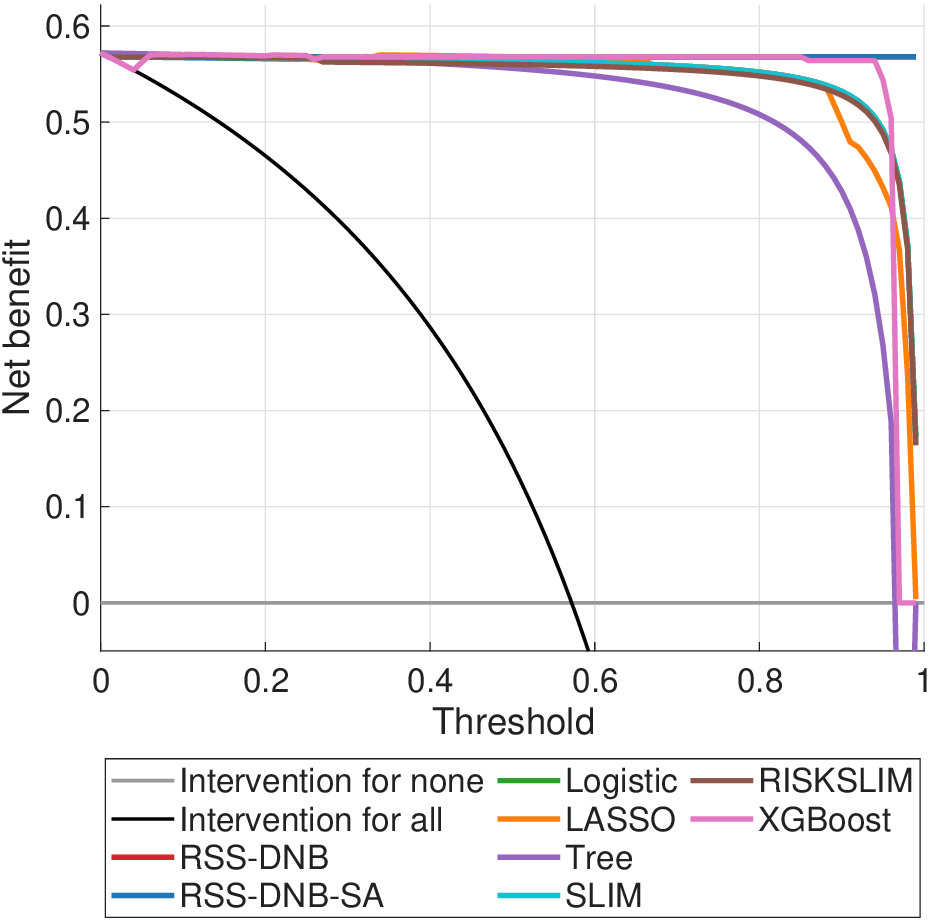}
  \caption{bankruptcy}
\end{subfigure}

\vspace{0.5cm}

\begin{subfigure}{0.45\textwidth}
  \centering
  \includegraphics[width=\linewidth]{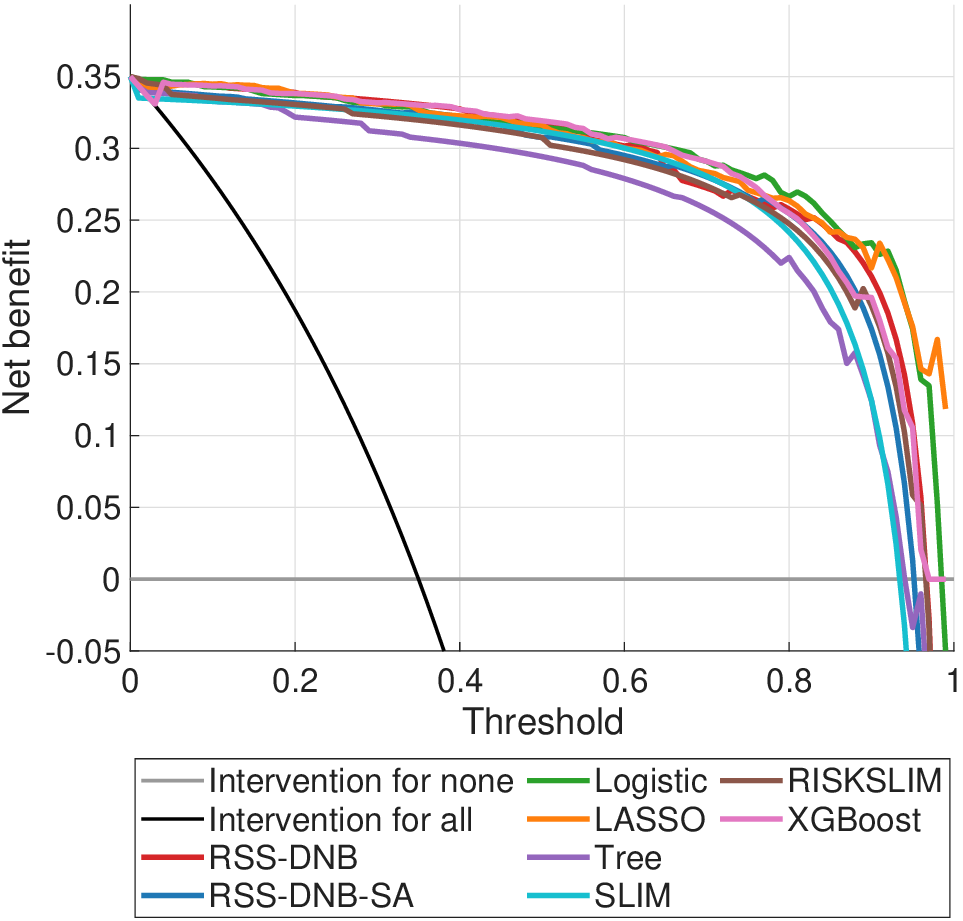}
  \caption{breastcancer}
\end{subfigure}
\hfill
\begin{subfigure}{0.45\textwidth}
  \centering
  \includegraphics[width=\linewidth]{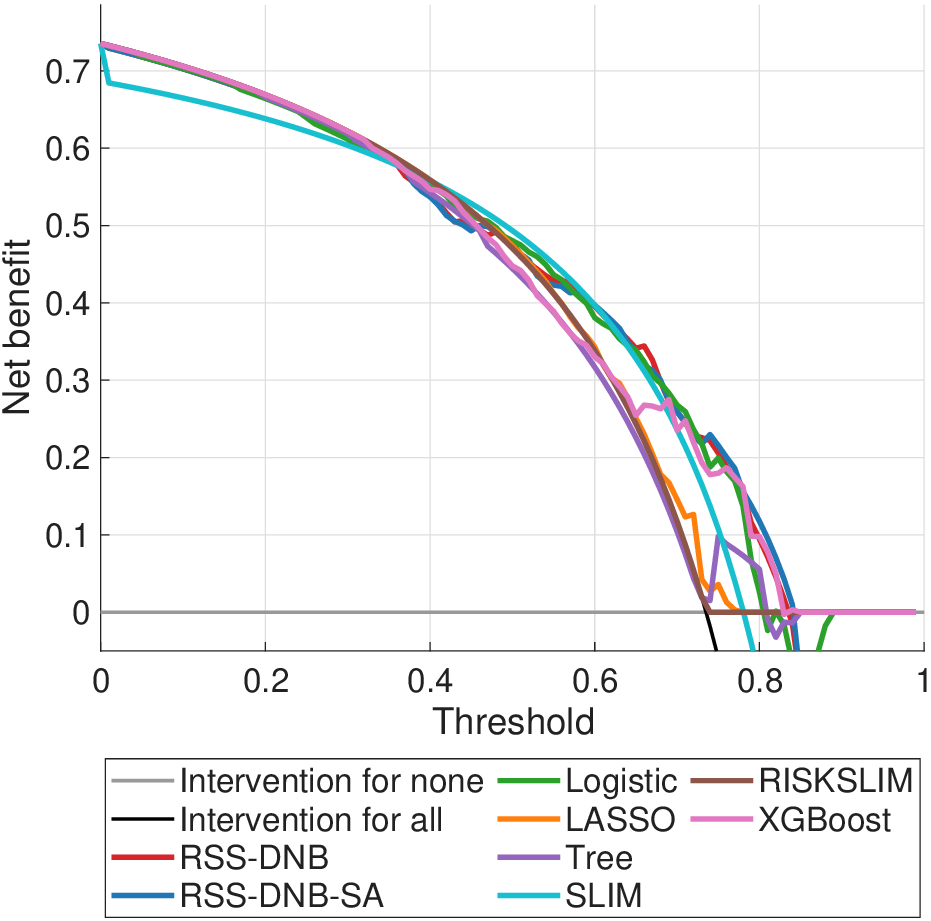}
  \caption{haberman}
\end{subfigure}

\caption{Decision curves on the first four datasets. Each panel shows the decision curves of all models evaluated on the corresponding dataset.}
\label{fig:dca_part1}

\end{figure}

\begin{figure}[htbp]
\centering

\begin{subfigure}{0.45\textwidth}
  \centering
  \includegraphics[width=\linewidth]{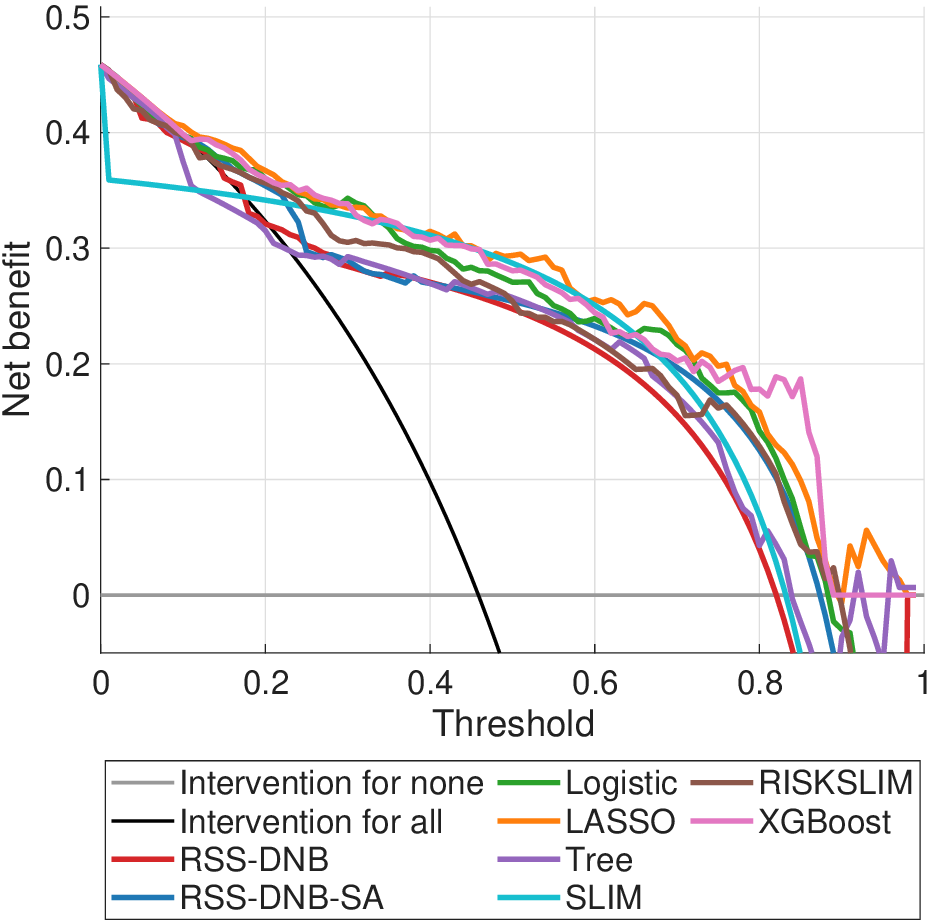}
  \caption{heart}
\end{subfigure}
\hfill
\begin{subfigure}{0.45\textwidth}
  \centering
  \includegraphics[width=\linewidth]{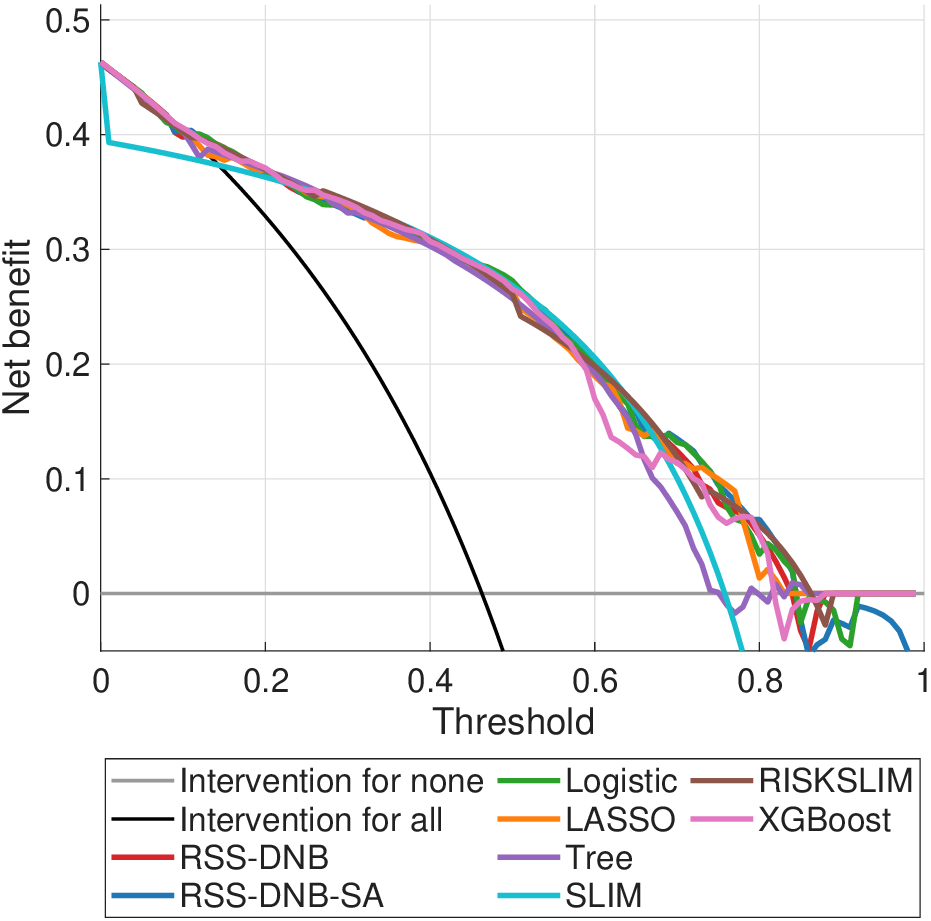}
  \caption{mammo}
\end{subfigure}

\vspace{0.5cm}

\begin{subfigure}{0.45\textwidth}
  \centering
  \includegraphics[width=\linewidth]{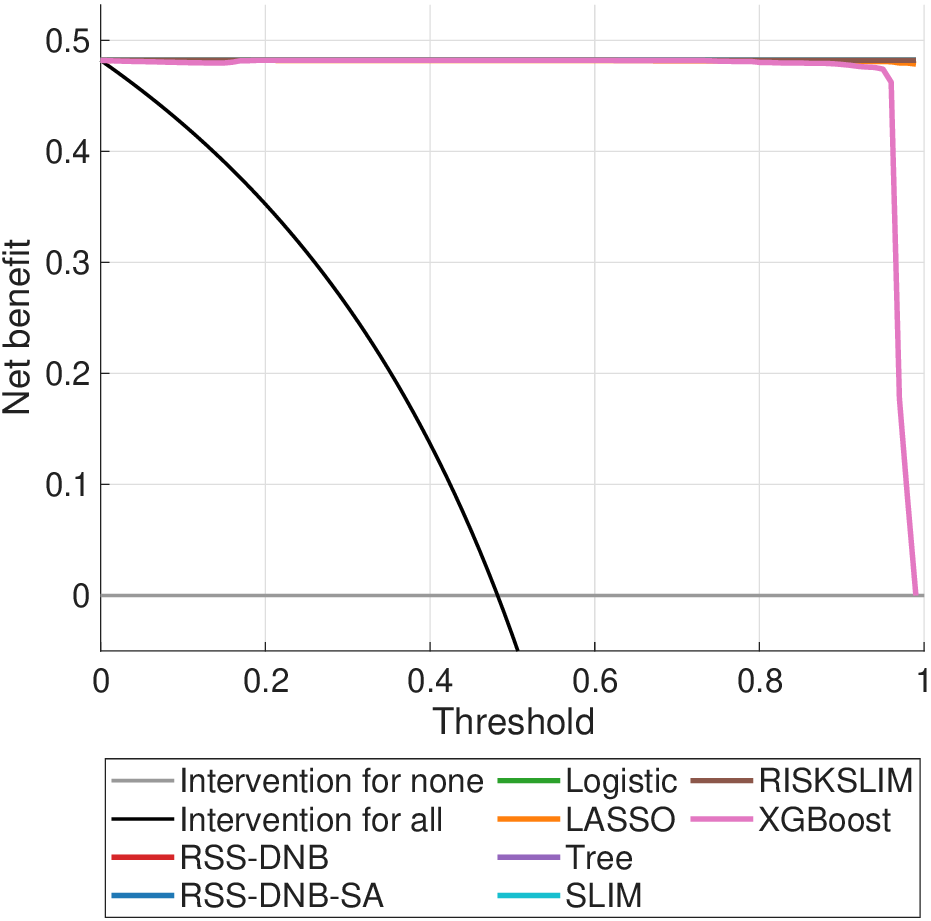}
  \caption{mushroom}
\end{subfigure}
\hfill
\begin{subfigure}{0.45\textwidth}
  \centering
  \includegraphics[width=\linewidth]{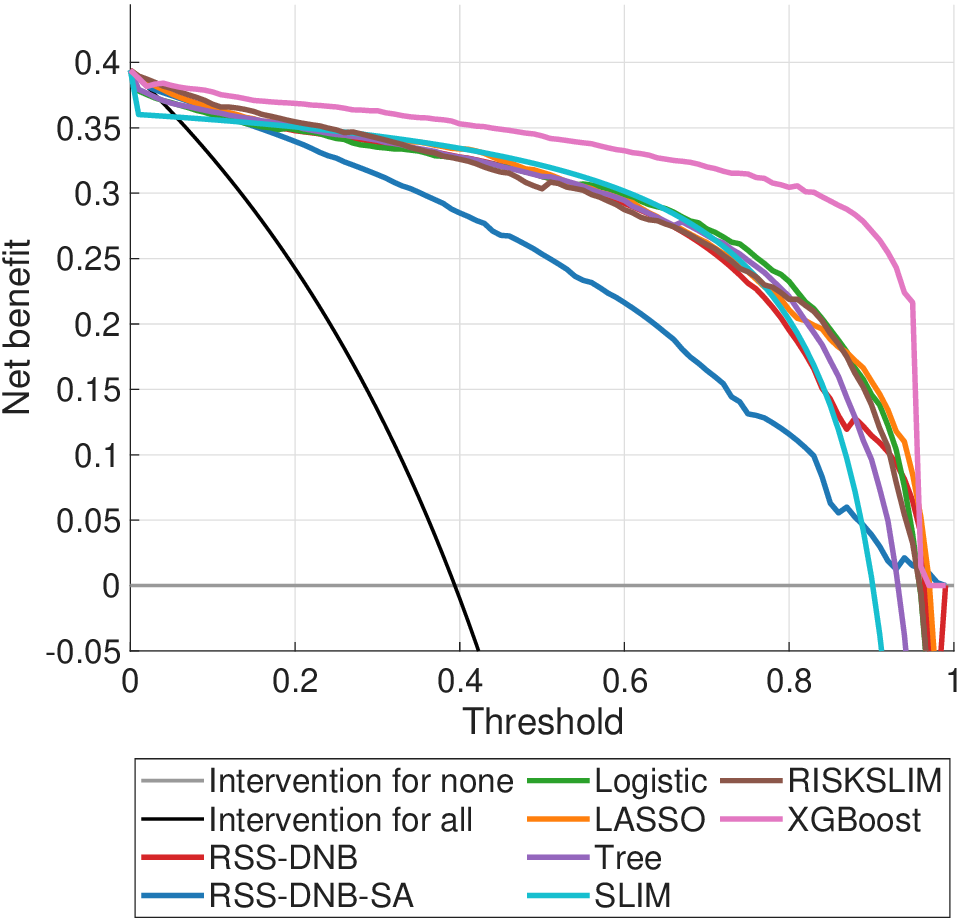}
  \caption{spambase}
\end{subfigure}

\caption{Decision curves on the remaining four datasets.}
\label{fig:dca_part2}

\end{figure}

\begin{landscape}
\footnotesize
\setlength{\tabcolsep}{4pt}
\renewcommand{\arraystretch}{1.1}
\begin{longtable}{crcccccccc}
\multicolumn{10}{l}{\textbf{Table \thetable\ :}  Performance of all models in term of  discrimination, calibration, utility, and model size on across eight datasets.} \label{tab:performance} \\
\toprule
Dataset & Metric & RSS-DNB & RSS-DNB-SA & Logistic & LASSO & Decision tree  & SLIM & RISKSLIM & XGBoost\\
\midrule
\endfirsthead

\multicolumn{10}{l}{\textbf{Table \thetable\ :}  continued} \\
\toprule
Dataset & Metric & RSS-DNB & RSS-DNB-SA & Logistic & LASSO & Decision tree  & SLIM & RISKSLIM & XGBoost \\
\midrule
\endhead

\bottomrule
\multicolumn{10}{l}{\footnotesize Notes: All values are reported as mean $\pm$ standard deviation over 10-fold cross-validation. Size is reported as mean (minimum--maximum) across the 10 folds. The best } \\
\multicolumn{10}{l}{\footnotesize and second-best results are highlighted in bold and underline, respectively.}\\
\endlastfoot
adult&Train AUROC&0.881 $\pm$ 0.002 & 0.872 $\pm$ 0.003 & \underline{0.891 $\pm$ 0.001} & 0.890 $\pm$ 0.001 & 0.873 $\pm$ 0.002 & 0.732 $\pm$ 0.007 & 0.882 $\pm$ 0.003 & \textbf{0.900 $\pm$ 0.001}\\
&Test AUROC&0.880 $\pm$ 0.007 & 0.872 $\pm$ 0.008 & \underline{0.891 $\pm$ 0.006} & 0.889 $\pm$ 0.006 & 0.869 $\pm$ 0.006 & 0.726 $\pm$ 0.011 & 0.881 $\pm$ 0.007 & \textbf{0.894 $\pm$ 0.005}\\
&Train ECE&\textbf{0.000 $\pm$ 0.000} & \textbf{0.000 $\pm$ 0.000} & 0.010 $\pm$ 0.001 & 0.012 $\pm$ 0.002 & \textbf{0.000 $\pm$ 0.000} & 0.113 $\pm$ 0.005 & 0.030 $\pm$ 0.007 & 0.033 $\pm$ 0.001\\
&Test ECE&\textbf{0.013 $\pm$ 0.003} & \underline{0.014 $\pm$ 0.004} & 0.016 $\pm$ 0.004 & 0.017 $\pm$ 0.004 & 0.016 $\pm$ 0.004 & 0.115 $\pm$ 0.007 & 0.029 $\pm$ 0.007 & 0.036 $\pm$ 0.005\\
&Train AUNBC&0.102 $\pm$ 0.000 & 0.101 $\pm$ 0.001 & \underline{0.104 $\pm$ 0.000} & 0.103 $\pm$ 0.000 & 0.101 $\pm$ 0.001 & 0.040 $\pm$ 0.007 & 0.097 $\pm$ 0.001 & \textbf{0.107 $\pm$ 0.000}\\
&Test AUNBC&0.102 $\pm$ 0.003 & 0.100 $\pm$ 0.003 & \underline{0.103 $\pm$ 0.003} & 0.103 $\pm$ 0.003 & 0.098 $\pm$ 0.003 & 0.035 $\pm$ 0.011 & 0.096 $\pm$ 0.004 & \textbf{0.103 $\pm$ 0.003}\\
&Size&22.2 (19-24) & \textbf{18.6 (15-22)} & 32.0 (32-32) & 26.3 (23-30) & 156.0 (113-199) & \underline{22.0 (12-28)} & 23.0 (14-28) & -\\
\midrule
bankruptcy&Train AUROC&\textbf{1.000 $\pm$ 0.000} & \underline{1.000 $\pm$ 0.000} & \textbf{1.000 $\pm$ 0.000} & \textbf{1.000 $\pm$ 0.000} & 0.981 $\pm$ 0.004 & \textbf{1.000 $\pm$ 0.000} & \textbf{1.000 $\pm$ 0.000} & \textbf{1.000 $\pm$ 0.000}\\
&Test AUROC&0.997 $\pm$ 0.011 & 0.997 $\pm$ 0.011 & 0.990 $\pm$ 0.032 & \underline{0.998 $\pm$ 0.006} & 0.981 $\pm$ 0.034 & 0.987 $\pm$ 0.032 & 0.996 $\pm$ 0.011 & \textbf{1.000 $\pm$ 0.000}\\
&Train ECE&\textbf{0.000 $\pm$ 0.000} & \textbf{0.000 $\pm$ 0.000} & \textbf{0.000 $\pm$ 0.000} & 0.015 $\pm$ 0.007 & \textbf{0.000 $\pm$ 0.000} & \textbf{0.000 $\pm$ 0.000} & 0.001 $\pm$ 0.001 & 0.024 $\pm$ 0.001\\
&Test ECE&0.004 $\pm$ 0.013 & 0.006 $\pm$ 0.013 & \underline{0.004 $\pm$ 0.013} & 0.016 $\pm$ 0.012 & \textbf{0.000 $\pm$ 0.000} & 0.004 $\pm$ 0.013 & 0.007 $\pm$ 0.015 & 0.027 $\pm$ 0.015\\
&Train AUNBC&\textbf{0.572 $\pm$ 0.016} & 0.569 $\pm$ 0.016 & \textbf{0.572 $\pm$ 0.016} & 0.567 $\pm$ 0.014 & 0.541 $\pm$ 0.017 & \textbf{0.572 $\pm$ 0.016} & \underline{0.572 $\pm$ 0.016} & 0.570 $\pm$ 0.016\\
&Test AUNBC&\textbf{0.568 $\pm$ 0.147} & \textbf{0.568 $\pm$ 0.147} & 0.561 $\pm$ 0.135 & 0.559 $\pm$ 0.137 & 0.541 $\pm$ 0.157 & 0.561 $\pm$ 0.135 & 0.558 $\pm$ 0.139 & \underline{0.568 $\pm$ 0.150}\\
&Size&\underline{2.9 (2-3)} & \textbf{2.1 (2-3)} & 6.0 (6-6) & 4.1 (4-5) & 3.0 (3-3) & \underline{2.9 (2-3)} & 5.2 (5-6) & -\\
\midrule
breastcancer&Train AUROC&0.992 $\pm$ 0.002 & 0.990 $\pm$ 0.002 & \underline{0.996 $\pm$ 0.001} & 0.996 $\pm$ 0.001 & 0.989 $\pm$ 0.006 & 0.985 $\pm$ 0.002 & 0.993 $\pm$ 0.002 & \textbf{1.000 $\pm$ 0.000}\\
&Test AUROC&0.985 $\pm$ 0.015 & 0.972 $\pm$ 0.030 & \textbf{0.995 $\pm$ 0.005} & \underline{0.995 $\pm$ 0.006} & 0.964 $\pm$ 0.025 & 0.960 $\pm$ 0.031 & 0.990 $\pm$ 0.009 & 0.992 $\pm$ 0.010\\
&Train ECE&\textbf{0.000 $\pm$ 0.000} & \textbf{0.000 $\pm$ 0.000} & 0.016 $\pm$ 0.002 & 0.033 $\pm$ 0.007 & \textbf{0.000 $\pm$ 0.000} & 0.003 $\pm$ 0.001 & 0.013 $\pm$ 0.004 & 0.034 $\pm$ 0.004\\
&Test ECE&\underline{0.022 $\pm$ 0.010} & 0.022 $\pm$ 0.013 & 0.032 $\pm$ 0.010 & 0.048 $\pm$ 0.015 & 0.028 $\pm$ 0.016 & \textbf{0.015 $\pm$ 0.017} & 0.031 $\pm$ 0.013 & 0.041 $\pm$ 0.011\\
&Train AUNBC&\underline{0.325 $\pm$ 0.006} & 0.320 $\pm$ 0.007 & 0.318 $\pm$ 0.006 & 0.314 $\pm$ 0.006 & 0.317 $\pm$ 0.012 & 0.321 $\pm$ 0.005 & 0.306 $\pm$ 0.007 & \textbf{0.342 $\pm$ 0.006}\\
&Test AUNBC&0.305 $\pm$ 0.057 & 0.298 $\pm$ 0.066 & \textbf{0.310 $\pm$ 0.054} & \underline{0.307 $\pm$ 0.053} & 0.280 $\pm$ 0.054 & 0.292 $\pm$ 0.064 & 0.297 $\pm$ 0.061 & 0.306 $\pm$ 0.057\\
&Size&6.5 (5-7) & \textbf{3.5 (3-5)} & 9.0 (9-9) & 8.5 (8-9) & 18.2 (11-29) & 6.3 (5-8) & \underline{3.8 (3-6)} & -\\
\midrule
haberman&Train AUROC&\underline{0.733 $\pm$ 0.015} & 0.728 $\pm$ 0.011 & 0.701 $\pm$ 0.009 & 0.584 $\pm$ 0.108 & 0.554 $\pm$ 0.087 & 0.637 $\pm$ 0.015 & 0.500 $\pm$ 0.000 & \textbf{0.930 $\pm$ 0.035}\\
&Test AUROC&0.662 $\pm$ 0.119 & \underline{0.667 $\pm$ 0.138} & \textbf{0.684 $\pm$ 0.113} & 0.571 $\pm$ 0.094 & 0.516 $\pm$ 0.091 & 0.600 $\pm$ 0.093 & 0.500 $\pm$ 0.000 & 0.666 $\pm$ 0.123\\
&Train ECE&\textbf{0.000 $\pm$ 0.000} & \textbf{0.000 $\pm$ 0.000} & 0.054 $\pm$ 0.011 & 0.026 $\pm$ 0.034 & \textbf{0.000 $\pm$ 0.000} & 0.047 $\pm$ 0.006 & 0.008 $\pm$ 0.004 & 0.177 $\pm$ 0.020\\
&Test ECE&0.118 $\pm$ 0.061 & 0.123 $\pm$ 0.049 & 0.127 $\pm$ 0.043 & 0.079 $\pm$ 0.038 & 0.073 $\pm$ 0.044 & \textbf{0.049 $\pm$ 0.035} & \underline{0.061 $\pm$ 0.037} & 0.154 $\pm$ 0.029\\
&Train AUNBC&\underline{0.476 $\pm$ 0.012} & 0.474 $\pm$ 0.011 & 0.452 $\pm$ 0.014 & 0.427 $\pm$ 0.013 & 0.435 $\pm$ 0.014 & 0.355 $\pm$ 0.018 & 0.422 $\pm$ 0.012 & \textbf{0.519 $\pm$ 0.023}\\
&Test AUNBC&0.440 $\pm$ 0.107 & \underline{0.442 $\pm$ 0.111} & \textbf{0.442 $\pm$ 0.100} & 0.425 $\pm$ 0.105 & 0.418 $\pm$ 0.110 & 0.319 $\pm$ 0.171 & 0.421 $\pm$ 0.106 & 0.439 $\pm$ 0.104\\
&Size&2.4 (2-3) & 1.9 (1-2) & 3.0 (3-3) & \underline{0.4 (0-1)} & 2.6 (1-11) & 3.0 (3-3) & \textbf{0.0 (0-0)} & -\\
\midrule
heartdisease&Train AUROC&0.926 $\pm$ 0.008 & 0.916 $\pm$ 0.010 & \underline{0.938 $\pm$ 0.005} & 0.924 $\pm$ 0.006 & 0.887 $\pm$ 0.030 & 0.919 $\pm$ 0.008 & 0.930 $\pm$ 0.007 & \textbf{1.000 $\pm$ 0.000}\\
&Test AUROC&0.819 $\pm$ 0.086 & 0.841 $\pm$ 0.077 & 0.870 $\pm$ 0.067 & \textbf{0.897 $\pm$ 0.069} & 0.785 $\pm$ 0.109 & 0.823 $\pm$ 0.076 & 0.853 $\pm$ 0.066 & \underline{0.892 $\pm$ 0.055}\\
&Train ECE&\textbf{0.000 $\pm$ 0.000} & \textbf{0.000 $\pm$ 0.000} & 0.031 $\pm$ 0.007 & 0.088 $\pm$ 0.014 & \textbf{0.000 $\pm$ 0.000} & 0.050 $\pm$ 0.005 & 0.037 $\pm$ 0.009 & 0.172 $\pm$ 0.006\\
&Test ECE&\underline{0.079 $\pm$ 0.033} & \textbf{0.079 $\pm$ 0.052} & 0.137 $\pm$ 0.055 & 0.158 $\pm$ 0.030 & 0.120 $\pm$ 0.043 & 0.099 $\pm$ 0.031 & 0.132 $\pm$ 0.048 & 0.130 $\pm$ 0.038\\
&Train AUNBC&\underline{0.359 $\pm$ 0.011} & 0.352 $\pm$ 0.008 & 0.332 $\pm$ 0.010 & 0.305 $\pm$ 0.009 & 0.301 $\pm$ 0.021 & 0.359 $\pm$ 0.016 & 0.326 $\pm$ 0.012 & \textbf{0.378 $\pm$ 0.009}\\
&Test AUNBC&0.206 $\pm$ 0.145 & 0.248 $\pm$ 0.128 & 0.269 $\pm$ 0.089 & \textbf{0.280 $\pm$ 0.079} & 0.236 $\pm$ 0.133 & 0.228 $\pm$ 0.163 & 0.257 $\pm$ 0.106 & \underline{0.276 $\pm$ 0.089}\\
&Size&16.3 (12-22) & 23.5 (20-26) & 24.9 (24-25) & \textbf{11.0 (9-13)} & \underline{13.2 (5-31)} & 15.5 (13-17) & 20.2 (14-29) & -\\
\midrule
mammo&Train AUROC&0.859 $\pm$ 0.003 & 0.852 $\pm$ 0.006 & \underline{0.860 $\pm$ 0.004} & 0.852 $\pm$ 0.005 & 0.827 $\pm$ 0.017 & 0.820 $\pm$ 0.003 & 0.852 $\pm$ 0.004 & \textbf{0.877 $\pm$ 0.004}\\
&Test AUROC&0.845 $\pm$ 0.031 & 0.847 $\pm$ 0.033 & \textbf{0.851 $\pm$ 0.035} & 0.848 $\pm$ 0.039 & 0.812 $\pm$ 0.038 & 0.809 $\pm$ 0.029 & \underline{0.848 $\pm$ 0.034} & 0.845 $\pm$ 0.037\\
&Train ECE&\textbf{0.000 $\pm$ 0.000} & \textbf{0.000 $\pm$ 0.000} & 0.024 $\pm$ 0.004 & 0.056 $\pm$ 0.014 & \textbf{0.000 $\pm$ 0.000} & 0.062 $\pm$ 0.002 & 0.026 $\pm$ 0.005 & 0.040 $\pm$ 0.003\\
&Test ECE&0.078 $\pm$ 0.019 & 0.078 $\pm$ 0.020 & 0.095 $\pm$ 0.025 & 0.086 $\pm$ 0.023 & \textbf{0.060 $\pm$ 0.029} & \underline{0.069 $\pm$ 0.023} & 0.078 $\pm$ 0.021 & 0.097 $\pm$ 0.034\\
&Train AUNBC&\underline{0.262 $\pm$ 0.006} & 0.257 $\pm$ 0.006 & 0.256 $\pm$ 0.006 & 0.248 $\pm$ 0.007 & 0.247 $\pm$ 0.008 & 0.173 $\pm$ 0.009 & 0.254 $\pm$ 0.006 & \textbf{0.266 $\pm$ 0.005}\\
&Test AUNBC&\underline{0.252 $\pm$ 0.052} & 0.251 $\pm$ 0.052 & 0.248 $\pm$ 0.053 & 0.246 $\pm$ 0.048 & 0.239 $\pm$ 0.051 & 0.158 $\pm$ 0.088 & \textbf{0.253 $\pm$ 0.055} & 0.249 $\pm$ 0.051\\
&Size&6.5 (6-7) & \textbf{4.6 (4-6)} & 11.0 (11-11) & \underline{4.8 (3-6)} & 9.4 (5-19) & 9.5 (9-11) & 5.9 (5-9) & -\\
\midrule
mushroom&Train AUROC&\textbf{1.000 $\pm$ 0.000} & \textbf{1.000 $\pm$ 0.000} & \textbf{1.000 $\pm$ 0.000} & \textbf{1.000 $\pm$ 0.000} & \textbf{1.000 $\pm$ 0.000} & \textbf{1.000 $\pm$ 0.000} & \textbf{1.000 $\pm$ 0.000} & \textbf{1.000 $\pm$ 0.000}\\
&Test AUROC&\textbf{1.000 $\pm$ 0.000} & \textbf{1.000 $\pm$ 0.000} & \textbf{1.000 $\pm$ 0.000} & \textbf{1.000 $\pm$ 0.000} & \textbf{1.000 $\pm$ 0.000} & \textbf{1.000 $\pm$ 0.000} & \textbf{1.000 $\pm$ 0.000} & \textbf{1.000 $\pm$ 0.000}\\
&Train ECE&\textbf{0.000 $\pm$ 0.000} & \textbf{0.000 $\pm$ 0.000} & \textbf{0.000 $\pm$ 0.000} & 0.001 $\pm$ 0.000 & \textbf{0.000 $\pm$ 0.000} & \textbf{0.000 $\pm$ 0.000} & 0.000 $\pm$ 0.000 & 0.005 $\pm$ 0.001\\
&Test ECE&\textbf{0.000 $\pm$ 0.000} & \textbf{0.000 $\pm$ 0.000} & \textbf{0.000 $\pm$ 0.000} & 0.001 $\pm$ 0.001 & \textbf{0.000 $\pm$ 0.000} & \textbf{0.000 $\pm$ 0.000} & 0.000 $\pm$ 0.000 & 0.005 $\pm$ 0.001\\
&Train AUNBC&\textbf{0.482 $\pm$ 0.002} & \textbf{0.482 $\pm$ 0.002} & \textbf{0.482 $\pm$ 0.002} & \underline{0.482 $\pm$ 0.002} & \textbf{0.482 $\pm$ 0.002} & \textbf{0.482 $\pm$ 0.002} & \textbf{0.482 $\pm$ 0.002} & 0.481 $\pm$ 0.002\\
&Test AUNBC&\textbf{0.482 $\pm$ 0.017} & \textbf{0.482 $\pm$ 0.017} & \textbf{0.482 $\pm$ 0.017} & \underline{0.482 $\pm$ 0.017} & \textbf{0.482 $\pm$ 0.017} & \textbf{0.482 $\pm$ 0.017} & \textbf{0.482 $\pm$ 0.017} & 0.481 $\pm$ 0.017\\
&Size&\underline{22.3 (19-43)} & 35.3 (30-43) & 39.2 (35-45) & 25.8 (22-27) & 24.6 (23-27) & \textbf{8.8 (8-10)} & 44.5 (41-48) & -\\
\midrule
spambase&Train AUROC&0.954 $\pm$ 0.010 & 0.917 $\pm$ 0.037 & 0.965 $\pm$ 0.039 & 0.974 $\pm$ 0.001 & \underline{0.989 $\pm$ 0.008} & 0.943 $\pm$ 0.004 & 0.973 $\pm$ 0.002 & \textbf{0.994 $\pm$ 0.000}\\
&Test AUROC&0.951 $\pm$ 0.017 & 0.919 $\pm$ 0.038 & 0.959 $\pm$ 0.045 & \underline{0.970 $\pm$ 0.009} & 0.945 $\pm$ 0.015 & 0.925 $\pm$ 0.012 & 0.969 $\pm$ 0.009 & \textbf{0.987 $\pm$ 0.003}\\
&Train ECE&\textbf{0.000 $\pm$ 0.000} & \textbf{0.000 $\pm$ 0.000} & 0.030 $\pm$ 0.025 & 0.046 $\pm$ 0.004 & \textbf{0.000 $\pm$ 0.000} & 0.026 $\pm$ 0.004 & 0.027 $\pm$ 0.008 & 0.019 $\pm$ 0.002\\
&Test ECE&\textbf{0.011 $\pm$ 0.008} & 0.028 $\pm$ 0.009 & 0.041 $\pm$ 0.029 & 0.050 $\pm$ 0.012 & 0.029 $\pm$ 0.009 & 0.033 $\pm$ 0.007 & 0.039 $\pm$ 0.012 & \underline{0.027 $\pm$ 0.006}\\
&Train AUNBC&0.298 $\pm$ 0.008 & 0.249 $\pm$ 0.037 & 0.313 $\pm$ 0.016 & 0.307 $\pm$ 0.004 & \underline{0.356 $\pm$ 0.016} & 0.315 $\pm$ 0.005 & 0.306 $\pm$ 0.004 & \textbf{0.361 $\pm$ 0.002}\\
&Test AUNBC&0.295 $\pm$ 0.020 & 0.249 $\pm$ 0.044 & 0.303 $\pm$ 0.026 & \underline{0.303 $\pm$ 0.019} & 0.297 $\pm$ 0.023 & 0.288 $\pm$ 0.028 & 0.299 $\pm$ 0.020 & \textbf{0.343 $\pm$ 0.016}\\
&Size&\underline{33.2 (28-37)} & \textbf{32.1 (23-39)} & 57.0 (57-57) & 51.3 (47-54) & 219.2 (113-329) & 39.2 (35-44) & 35.5 (32-38) & -\\
\end{longtable}
\end{landscape}
\clearpage
\section{Case Study: Give Me Some Credit} \label{sec:application}
The illustrative application in \Cref{subsec:method_example} demonstrated how RSS-DNB constructs an interpretable risk scoring system and how the resulting scorecard can be used to support decision making. In this section, we further evaluate the empirical performance of the proposed approach on the Give Me Some Credit dataset. This case study aims to assess the generalization ability of RSS-DNB on unseen data from the perspectives of discrimination, calibration, and decision utility. We compare RSS-DNB with several representative interpretable and a non-interpretable method under a common experimental setting.

\subsection{Experimental setup}
To evaluate the practical performance of RSS-DNB, we conducted a case study on the Give Me Some Credit dataset. The dataset was randomly divided into a training set (75\%) and a test set (25\%). All preprocessing steps described in \Cref{subsec:method_example} were fitted using the training set and subsequently applied to the test data.

Following the experimental design in \Cref{sec:experiments}, we compared RSS-DNB and RSS-DNB-SA with the same set of baseline models, including Logistic Regression, LASSO, Decision Tree, SLIM, RISKSLIM, and XGBoost. These models cover traditional statistical methods, interpretable machine learning approaches, sparse integer scoring systems, and a widely used gradient boosting method. All models were trained on the training set and evaluated on the test set.

The models were evaluated from three aspects: AUROC was used to assess discrimination ability, ECE was used to evaluate calibration performance, and AUNBC was used as measure of decision utility. In addition, the number of non-zero predictors or decision tree nodes was reported to quantify model complexity.

\subsection{Results and Observations}
\Cref{tab:results_give} summarizes the performance of RSS-DNB, RSS-DNB-SA and other models on the test set. Overall, RSS-DNB and RSS-DNB-SA achieved competitive performance across discrimination, calibration, and decision utility while maintaining a sparse model structure. This is consistent with the conclusions of the previous Section.
\begin{table}[htbp]
    \centering
    \renewcommand{\arraystretch}{1.2}
    \caption{Performance comparison on the Give Me Some Credit dataset.}
    \label{tab:results_give}
    \begin{tabular}{ccccc}
    \toprule
    Model & AUROC & ECE & AUNBC & Size\\
    \midrule
    RSS-DNB	& \underline{0.8413} & \underline{0.0015}& \underline{0.0115} & 4\\
    RSS-DNB-SA & 0.8362 & \textbf{0.0009} & \underline{0.0115} & \textbf{2}	\\
    Logistic & \textbf{0.8415} & 0.0037 & \textbf{0.0116} & 8	\\
    Lasso & 0.8381 & 0.0046 & \textbf{0.0116} & \underline{3}	\\
    Decision tree & 0.7656 & \underline{0.0015} & \textbf{0.0116} & 7	\\
    SLIM & 0.5144 & 0.0647 & 0.0037 & \underline{3}	\\
    RISKSLIM & 0.8136 & 0.0121 & 0.0099 & 4	\\
    XGBoost & 0.8388 & 0.0108 & 0.0103 & -\\
    \bottomrule
    \end{tabular}
\end{table}

An important observation from the experimental results is that multiple models with different structures achieved similar performance. In particular, logistic regression, LASSO, and the proposed RSS-DNB model exhibited similar levels of discrimination, calibration, and decision utility, despite large differences in model complexity and coefficient structure. This phenomenon is related to the so-called Rashomon set, which refers to the existence of a large set of models that achieve near-optimal performance on a given dataset \citep{rudinStopExplainingBlack2019}. Within this set, we can often find at least one model that is inherently interpretable. From this perspective, the results suggest that when predictive performance is comparable, it is preferable to select models that are more interpretable and easier to use in practice. The RSS-DNB models have a sparse structure and small integer coefficients, which can provide transparent and meaningful representations, making them particularly suitable for decision support in real-world environments.
\section{Conclusion}
In this work, we studied the problem of developing risk scoring systems for decision-making, with a focus on optimizing decision utility rather than conventional predictive metrics. Existing approaches primarily emphasize discrimination and calibration, but optimizing these metrics alone does not guarantee improved decision utility. Therefore, We proposed the RSS-DNB model, a sparse integer linear model that directly maximizes net benefit over a range of thresholds. We established theoretical connections between net benefit, discrimination, and calibration. In particular, we proved that there exists a lower bound on the discrimination of the model (measured by AUROC), which is controlled by the model utility (measured by AUNBC). Specifically, this lower bound increases with AUNBC, implying that optimizing model utility will not result in models with poor discrimination. Furthermore, by leveraging the relationship between net benefit and calibration, we developed an algorithm that improves the calibration of a given model while simultaneously increasing its AUNBC, without compromising its discrimination performance. We also provided guarantees on the learning capacity and generalization performance of the proposed model.

Empirical results in both public datasets and a real-world clinical dataset demonstrate that the proposed method, while explicitly optimized for decision utility, does not degrade predictive performance compared to baseline models. This observation is consistent with our theoretical findings. Moreover, the sparse linear structure with integer coefficients enhances interpretability, while the integer programming framework allows the incorporation of various operational constraints, which facilitates practical deployment. As a result, the resulting scoring system is both transparent and readily usable in real decision-making contexts.

This study has several limitations. First, the proposed RSS-DNB model is based on a sparse linear structure with integer coefficients. While this structure has strong inherent interpretability, it cannot capture the complex nonlinear relationships and interactions among predictors.

Second, under our integer linear programming formulation, the number of decision variables grows with both sample size and the number of decision thresholds, resulting in large-scale optimization problem. In practice, solving this problem exactly may require substantial computational time, making it impractical. Heuristic methods, such as the simulated annealing algorithm adopted in this work, offer faster solutions, they cannot guarantee a global optimum or even a satisfactory result. Therefore, the development of more efficient optimization algorithms remains an important direction for future research. Future research may investigate stronger heuristic baselines, such as coordinate descent, local search, randomized rounding, and greedy score optimization, to further improve the scalability and solution quality of RSS-DNB for larger-scale problems. Additionally, exploring alternative formulations that reduce the problem complexity may further enhance computational efficiency.

Third, the performance of RSS-DNB is affected by the setting of the decision threshold grid used to approximate the area under the net benefit curve. Furthermore, the weighting scheme in AUNBC is determined by the spacing between adjacent thresholds, may not always fully reflect the specific application preferences. Although this study conducted sensitivity analysis on the threshold gridding and weighting scheme, and found that the proposed framework has good robustness, this may not be sufficient. The optimal selection of thresholds and weights may depend on the application environment, and future research could explore adaptive or data-driven threshold selection strategies.

Finally, although RSS-DNB was evaluated on multiple benchmark datasets and a large-scale case study, external validation using independent cohorts or prospective studies is still necessary before practical deployment. Future studies should explore the generalization and robustness of RSS-DNB across different application domains.
\section*{Declarations}
    \paragraph{Funding.} This research did not receive funding.
    \paragraph{Competing interests.}  The authors declare no competing interests.
    \paragraph{Data availability.} The data used in this study are publicly available. The datasets used in \Cref{sec:experiments} are obtained from the UCI Machine Learning Repository, and the corresponding preprocessed versions used in this study are available at: \url{https://static-content.springer.com/esm/art%3A10.1007%2Fs10994-015-5528-6/MediaObjects/10994_2015_5528_MOESM1_ESM.zip}. The credit risk dataset Give Me Some Credit are available at: \url{https://www.kaggle.com/c/GiveMeSomeCredit}.
    \paragraph{Code availability.} The code for the proposed method is available at: \url{https://github.com/ccwwhh100/RSS_DNB}.
    \paragraph{Author contributions.} W.C. wrote the main manuscript text. S.I.B. wrote the introduction and provided conceptual guidance for the illustrative application and case study. All authors reviewed the manuscript.
\clearpage
\appendix

\renewcommand{\thesection}{Appendix \Alph{section}}

\section{Proofs of Main Results}
\subsection*{NP-hardness Proof} \label{app:NPhard}
\begin{definition}[Bounded Integer Minimum-Disagreement Halfspace (MDH)]
Let $D_N = \{(\bm{x}_j, y_j)\}_{j=1}^N$ be a labeled dataset with feature vectors $\bm{x}_j \in \mathbb{R}^P$ and labels $y_j \in \{0,1\}$. Let $L, \Gamma \in \mathbb{Z}^+$ be bounds on coefficients and intercept. The MDH problem asks for $\bm{\lambda} \in \mathbb{Z}^P$ and $T \in \mathbb{Z}$ with $-L \le \lambda_k \le L$ for all $k=1, \dots, P$ and $-\Gamma \le T \le \Gamma$ that minimize the number of misclassifications:
\begin{equation}
E(\lambda, T) := \sum_{j=1}^N \Big(
I(\bm{x}_j^\top \lambda < T, y_j = 1)
+
I(\bm{x}_j^\top \lambda \ge T, y_j = 0)
\Big).
\label{eq:MDH}
\end{equation}
\end{definition}

It is known that directly minimizing the 0-1 loss (i.e., the number of misclassifications) for linear or halfspace classifiers is $\mathcal{NP}$-hard, including in discrete settings with integer-weighted classifiers; see, for example, hardness results for noisy/agnostic halfspace learning \citep{HoeffgenSimonVanHorn1995,AroraBabaiSternSweedyk1997} and for direct 0--1 loss optimization in binary classification \citep{NguyenSanner2013}. The SLIM work of \cite{ustunSupersparseLinearInteger2016} likewise formulates bounded-integer, 0--1 loss scoring systems as mixed-integer optimization problems and treats their training as computationally challenging, but it does not contain a standalone $\mathcal{NP}$-hardness proof. Taken together, these references show that minimum-disagreement and discrete 0-1 loss problems for integer-weighted linear classifiers are $\mathcal{NP}$-hard in general.

\begin{theorem}
The optimization problem \eqref{ilp:original} is $\mathcal{NP}$-hard.
\end{theorem}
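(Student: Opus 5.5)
We prove the claim by a polynomial-time reduction from the bounded integer MDH problem, which is $\mathcal{NP}$-hard by the results cited above, to a special case of problem \eqref{ilp:original}. Since hardness of a restriction implies hardness of the general problem, it suffices to exhibit one family of parameter choices $(M,\bm p,\bm\omega,C_0,\mathcal L,\mathcal B)$ under which \eqref{ilp:original} is equivalent to MDH.

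The natural choice is a single nontrivial threshold at $p_1=1/2$, so that $p_1/(1-p_1)=1$. We put all weight on it, $\omega_0=0$ and $\omega_1=1$, set $C_0=0$, take $\mathcal L=\{-L,\dots,L\}^P$, and let $\mathcal B$ restrict $T_1$ to $\{-\Gamma,\dots,\Gamma\}$. The coordinate $T_0$ receives zero weight, so it can be left free (any value with $T_0\le T_1$). The objective then becomes
\[
-\frac1N\bigl(\mathrm{TP}_1-\mathrm{FP}_1\bigr).
\]
We now rewrite this in terms of misclassifications. Since $\mathrm{TP}_1=N^+-\mathrm{FN}_1$, where $\mathrm{FN}_1=\sum_j I(\bm x_j^\top\bm\lambda<T_1,y_j=1)$, we have
\[
\mathrm{TP}_1-\mathrm{FP}_1 = N^+ - E(\bm\lambda,T_1),
\]
with $E$ as in \eqref{eq:MDH}. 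The objective therefore equals $\bigl(E(\bm\lambda,T_1)-N^+\bigr)/N$, an increasing affine function of $E$ whose constants depend only on the data. Consequently $(\bm\lambda,\bm T)$ is optimal for \eqref{ilp:original} if and only if $(\bm\lambda,T_1)$ is optimal for MDH. The decision versions correspond as well: an objective value at most $\theta$ corresponds to at most $N\theta+N^+$ disagreements.

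It remains to check that the instance is constructed in polynomial time. The data are copied unchanged, and $M=1$ with rational $p_1$, $\omega$ and $C_0$ is constant-size input. The encoding of $\mathcal L$ and $\mathcal B$ is given by their bounds $L$ and $\Gamma$, which are the same as in the MDH instance.

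The one point needing care is that the paper's formulation ties $\mathcal B$ to $\mathcal B_0$ and to the monotonicity $T_0\le T_1$. If the paper insists that $\mathcal B$ be a product box, we take $T_0\in\{-\Gamma,\dots,\Gamma\}$ as well. Then $T_0=-\Gamma$ is always feasible, and since $T_0$ carries weight zero it does not affect the objective. A second point is that the paper requires $\omega_0\ge0$ with $\sum\omega_i=1$, and our choice $\omega_0=0$ is allowed. If strictly positive weights were demanded, we would use the fact that the net benefit at $p_0=0$ does not depend on $T_0$ once $T_0$ lies below all scores, so it contributes only a constant. We expect this bookkeeping, rather than the core reduction, to be the main obstacle.
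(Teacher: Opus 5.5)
Your reduction is the same as the paper's. Both use a single operating threshold $p_1=\tfrac12$, weights $\omega_0=0$, $\omega_1=1$, and the identity $\mathrm{TP}_1-\mathrm{FP}_1=N^+-E(\bm\lambda,T_1)$. The step that fails as written is the choice $C_0=0$. Problem \eqref{ilp:original} is posed with a penalty factor $C_0>0$, so the instance you build is not an instance of that problem. Two of your claims depend on the penalty being absent: that $(\bm\lambda,\bm T)$ is optimal for \eqref{ilp:original} if and only if $(\bm\lambda,T_1)$ is optimal for MDH, and the decision-version threshold $N\theta+N^+$. You named the weights and the box $\mathcal B$ as the delicate bookkeeping, and you handle both correctly. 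The sign of $C_0$ is the one constraint that actually needs an argument, and it is where the paper's proof spends its effort.

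The fix is the paper's separation argument. Choose a rational $C_0$ with $0<C_0<1/(NP)$; the paper takes $C_0<1/(N(P+1))$. The objective becomes $J(\bm\lambda,T_1)=\bigl(E(\bm\lambda,T_1)-N^+\bigr)/N+C_0\|\bm\lambda\|_0$, with $E$ integer-valued and $0\le C_0\|\bm\lambda\|_0\le C_0P<1/N$. Hence if $E(\bm\lambda_A,T_A)\le E(\bm\lambda_B,T_B)-1$, then $J(\bm\lambda_B,T_B)-J(\bm\lambda_A,T_A)\ge 1/N-C_0P>0$. So every minimizer of \eqref{ilp:original} minimizes $E$, and the penalty only breaks ties among MDH optima in favor of sparser ones.

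Your ``if and only if'' therefore becomes one-directional, which is all a reduction needs: an MDH optimum that is not sparsest among MDH optima is not optimal for \eqref{ilp:original}. The decision version becomes: some feasible $(\bm\lambda,T_1)$ has $E\le k$ if and only if the optimal value of \eqref{ilp:original} is at most $(k-N^+)/N+C_0P$. With this change your proof coincides with the paper's.
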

\begin{proof}
We reduce MDH to a special instance of the risk-scoring problem. Fix an MDH instance $D_N$ and bounds $L,\Gamma$. Define
\[
\mathcal{L} := [-L, L]^P \cap \mathbb{Z}^P, \qquad \mathcal{B} := [-\Gamma, \Gamma]^2 \cap \mathbb{Z}^2.
\]
Consider a single operating threshold ($M = 1$) with risk thresholds $p_0 = 0$, $p_1 = \tfrac{1}{2}$ and weights $\omega_0 = 0$, $\omega_1 = 1$. Then, only the $i=1$ term contributes to the net-benefit objective. The problem \eqref{ilp:original} reduces to \begin{align*}
\min_{\bm{\lambda}, T_1} \quad &
J(\lambda, T_1) = - \frac{1}{N}\big(\mathrm{TP}_1 - \mathrm{FP}_1\big) + C_0 \|\bm{\lambda}\|_0, \\
\text{s.t.} \quad & \mathrm{TP}_1 = \sum_{j=1}^N
I(\bm{x}_j^\top \bm{\lambda} \ge T_1, y_j = 1),
\\
& \mathrm{FP}_1 = \sum_{j=1}^N I(\bm{x}_j^\top \bm{\lambda} \ge T_1, y_j = 0),
\\
& \bm{\lambda} \in \mathcal{L},\quad T_1 \in [-\Gamma, \Gamma] \cap \mathbb{Z},
\end{align*}
where $\|\bm{\lambda}\|_0$ is the number of nonzero entries of $\bm{\lambda}$. Let $N^+ := \sum_{j=1}^N I(y_j = 1)$ and define
\[
\mathrm{FN}_1 := \sum_{j=1}^N I(\bm{x}_j^\top \bm{\lambda} < T_1, y_j = 1).
\]
Then, $\mathrm{TP}_1 + \mathrm{FN}_1 = N^+$, so $\mathrm{TP}_1 = N^+ - \mathrm{FN}_1$ and
\begin{align*}
- \big(\mathrm{TP}_1 - \mathrm{FP}_1\big)
& = - \big(N^+ - \mathrm{FN}_1 - \mathrm{FP}_1\big) \\
& = \mathrm{FN}_1 + \mathrm{FP}_1 - N^+.
\end{align*}
The sum $\mathrm{FN}_1 + \mathrm{FP}_1$ is exactly the MDH error $E(\bm{\lambda}, T_1)$ in \eqref{eq:MDH}. Therefore,
\begin{equation}
J(\bm{\lambda}, T_1) = \frac{E(\bm{\lambda}, T_1)}{N} - \frac{N^+}{N} + C_0 \|\bm{\lambda}\|_0.
\label{eq:J}
\end{equation}
Since $E(\bm{\lambda}, T_1) \in \{0,1,\dots,N\}$, the term $E(\bm{\lambda},T_1)/N$ changes only in increments of $1/N$. Assume $0 < C_0 < \frac{1}{N(P+1)}$. Because $\bm{\lambda} \in \mathbb{Z}^P$ and $\|\bm{\lambda}\|_0 \le P$, we have
\begin{equation}
C_0 \|\bm{\lambda}\|_0 \le C_0 P < \frac{P}{N(P+1)} < \frac{1}{N}.
\label{eq:penalty}
\end{equation}
Consider two feasible solutions $(\bm{\lambda}_A,T_A)$ and $(\bm{\lambda}_B,T_B)$ with $E(\bm{\lambda}_A,T_A) \le E(\bm{\lambda}_B,T_B) - 1$. From \eqref{eq:J} and \eqref{eq:penalty}, we have
\begin{align*}
J(\bm{\lambda}_B,T_B) - J(\bm{\lambda}_A,T_A)
& = \frac{E(\bm{\lambda}_B,T_B) - E(\bm{\lambda}_A,T_A)}{N} + C_0\big(\|\bm{\lambda}_B\|_0 - \|\bm{\lambda}_A\|_0\big) \\
& \ge \frac{1}{N} - C_0 P \\
& > \frac{1}{N} - \frac{1}{N} = 0.
\end{align*}
Thus any unit decrease in the MDH error $E(\bm{\lambda},T_1)$ strictly improves the objective, regardless of changes in the sparsity term. Minimizing $J(\bm{\lambda},T_1)$ is therefore equivalent to:
(i) minimizing $E(\bm{\lambda},T_1)$ over $\bm{\lambda} \in \mathcal{L}$ and $T_1 \in [-\Gamma,\Gamma] \cap \mathbb{Z}$, and (ii) among all minimizers of $E$, selecting one with minimal $\|\bm{\lambda}\|_0$. Step (i) is exactly the MDH problem. The mapping from an MDH instance $(D_N,L,\Gamma)$ to the corresponding risk-scoring instance is computable in time polynomial in $N$, $P$, $L$, and $\Gamma$. Since MDH is $\mathcal{NP}$-hard (as a discrete minimum-disagreement halfspace problem in the sense explained above) and our problem contains MDH as a special case, the optimization problem \eqref{ilp:original} is $\mathcal{NP}$-hard as well.
\end{proof}
\subsection*{Proof of \Cref{thm:auroc_aunbc}}
\begin{proof}
    First, since ${TP}_i \ge 0$ and ${FP}_i \le N^-$ for all $i=1,2,\cdots,M$, we have 
    \begin{equation}
        \begin{aligned}
            \text{AUNBC} \le  \frac{1}{N} \left(N^+p_1+\sum_{i=1}^M \left(p_{i+1} - p_i\right) \left(0 - N^-\cdot \frac{p_i}{1-p_i}\right)\right)=a_0p_1+(1-a_0)P_M.
        \end{aligned}
 \end{equation}
    Then, let $a_0=\frac{N^+}{N}$, $b_0 = \frac{N^-}{N}$, and $a_{M+1}=b_{M+1}=0$. Vectors $\bm{a}=(a_1,\cdots,a)$ and $\bm{b}=(b_1,\cdots,b_M)$ satisfy $a_i = \frac{{TP}_i}{N}$ and $b_i = \frac{{FP}_i}{N}$, $i=1,2,\cdots,M$. Since $0=p_0<p_1<\cdots<p_M<1$, we have $a_0 \ge a_1 \ge \cdots \ge a_M \ge 0$ and $b_0 \ge b_1 \ge \cdots \ge b_M \ge 0$. AUNBC and AUROC can be represented by $F(\bm{a},\bm{b}) := \sum_{i=0}^M \left(p_{i+1} - o_i \right)\left(a_i-b_i\cdot\frac{p_i}{1-p_i}\right)$ and $G(\bm{a},\bm{b}) := \frac{1}{a_0 b_0} \sum_{i=0}^M(b_i-b_{i+1})a_i$, respectively. Let
    \begin{align}
        \mathcal{K}:=\left\{1 \le k \le M \mid b_k \ge \frac{(1-p_k)\left(1-G(\bm{a},\bm{b})\right)a_0 b_0}{\sum_{i=1}^M (a_{i-1}-a_i)(1-p_i)} \right\}.
    \end{align}
    Then, $\mathcal{K}$ is a non-empty set, otherwise for all $1\le k \le M$, $b_k<\frac{(1-p_k)\left(1-G(\bm{a},\bm{b})\right)a_0 b_0}{\sum_{i=1}^M (a_{i-1}-a_i)(1-p_i)}$ and
    \begin{equation}\label{eq:contradiction}
    \begin{aligned}
        G(\bm{a},\bm{b}) a_0 b_0 &= \sum_{i=0}^M(b_i-b_{i+1})a_i \\
        &= a_0 b_0 - \sum_{i=1}^M(a_{i-1}-a_{i})b_i\\
        &> a_0 b_0 - \sum_{i=1}^M(a_{i-1}-a_{i}) \frac{(1-p_i)\left(1-G(\bm{a},\bm{b})\right)a_0 b_0}{\sum_{j=1}^M (a_{j-1}-a_j)(1-p_j)}\\
        &= a_0 b_0 - \frac{\left(1-G(\bm{a},\bm{b})\right)a_0 b_0}{\sum_{j=1}^M (a_{j-1}-a_j)(1-p_j)}\sum_{i=1}^M (a_{i-1}-a_i)(1-p_i) \\
        &= G(\bm{a},\bm{b}) a_0 b_0.
    \end{aligned}
    \end{equation}
    
    There is a contradiction in (\ref{eq:contradiction}). Thus, there exists at least one integer $1 \le K \le M$ such that $b_K \ge \frac{(1-p_K)\left(1-G(\bm{a},\bm{b})\right)a_0 b_0}{\sum_{i=1}^M (a_{i-1}-a_i)(1-p_i)}$.
    Let vectors $\bm{a}'=(a_1',\cdots,a_M')$ and $\bm{b}'=(b_1',\cdots,b_M')$ satisfy $a_1'=\cdots=a_{K-1}'=a_0$, $a_K'=\cdots=a_M'=\frac{\sum_{i=0}^M a_i(p_{i+1}-p_i)-a_0 p_K}{1-p_K}$, $b_1'=\cdots=b_K'=\frac{(1-p_K)\left(1-G(\bm{a},\bm{b})\right)a_0 b_0}{\sum_{i=1}^M (a_{i-1}-a_i)(1-p_i)} \le b_K$, $b_{K+1}'=\cdots=b_M'=0$. Then, we have
    \begin{equation}
    \begin{aligned}
        a_0 - a_K' &= \frac{a_0-\sum_{i=0}^M a_i(p_{i+1}-p_i)}{1-p_K} \\
        &= \frac{a_0-\sum_{i=1}^M (a_{i-1}-a_i)p_i-a_M}{1-p_K}\\
        &= \frac{\sum_{i=1}^M (a_{i-1}-a_i)(1-p_i)}{1-p_K}\\
        &= \frac{\left(1-G(\bm{a},\bm{b})\right)a_0 b_0}{b_1'} \ge 0,
    \end{aligned}        
    \end{equation}
    \begin{equation}
        \begin{aligned}
            G(\bm{a}',\bm{b}') &= \frac{1}{a_0 b_0}\left((b_0-b_1')a_0+b_K'a_K'\right) \\
            &= \frac{1}{a_0 b_0} \left(a_0 b_0 - b_1'(a_0-a_K')\right) \\
            &= G(\bm{a},\bm{b}),
        \end{aligned}
    \end{equation}
    and
    \begin{equation}
        \begin{aligned}
            F(\bm{a}',\bm{b}') &= \sum_{i=0}^M (p_{i+1}-p_i)\left(a_i'-b_i'\cdot\frac{p_i}{1-p_i}\right) \\
            &= p_K a_0 +(1-p_K)a_K'-\sum_{i=0}^K (p_{i+1}-p_i)b_i'\cdot\frac{p_i}{1-p_i}\\
            &= \sum_{i=0}^M a_i(p_{i+1}-p_i) - b_1'\sum_{i=1}^K \frac{(p_{i+1}-p_i)p_i}{1-p_i}\\
            &\ge \sum_{i=0}^M a_i(p_{i+1}-p_i) - b_K\sum_{i=1}^K \frac{(p_{i+1}-p_i)p_i}{1-p_i} \\
            &\ge \sum_{i=0}^M a_i(p_{i+1}-p_i) - \sum_{i=1}^K \frac{b_i(p_{i+1}-p_i)p_i}{1-p_i} \\
            &\ge \sum_{i=0}^M a_i(p_{i+1}-p_i) - \sum_{i=0}^M \frac{b_i(p_{i+1}-p_i)p_i}{1-p_i} \\
            &= F(\bm{a},\bm{b}).
        \end{aligned}
    \end{equation}
    Moreover, since $a_K' = a_0 - \frac{\left(1-G(\bm{a},\bm{b})\right)a_0 b_0}{b_1'}$, we have
    \begin{equation}
        \begin{aligned}
            F(\bm{a}',\bm{b}') &= p_K a_0 +(1-p_K)a_K'-\sum_{i=0}^K \frac{b_i'(p_{i+1}-p_i)p_i}{1-p_i} \\
            &= a_0 - \frac{(1-p_K)a_0b_0\left(1-G(\bm{a},\bm{b})\right)}{b_1'}-b_1'P_K,
        \end{aligned}
    \end{equation}
    where $P_K = \sum_{i=0}^K \frac{(p_{i+1}-p_i)p_i}{1-p_i}$.
    By the AM--GM inequality, for any $x,y \ge 0$, $x+y\ge2\sqrt{xy}$, applying this with $x=\frac{(1-p_K)a_0b_0\left(1-G(\bm{a},\bm{b})\right)}{b_1'}$ and $y=b_1'P_K$, we obtain
    \begin{equation}
        \begin{aligned}
            F(\bm{a}',\bm{b}') &\le a_0 - 2\sqrt{P_K(1-p_K)a_0b_0\left(1-G(\bm{a},\bm{b})\right)}.
        \end{aligned}
    \end{equation}
    Equality holds if and only if $b_1'=\sqrt{\frac{(1-p_K)a_0b_0(1-G(\bm{a},\bm{b}))}{P_K}}$. Since the constraint $b_1'\le b_0$ must also be satisfied, this implies the condition $G(a,b)>1-\frac{b_0P_K}{(1-p_K)a_0}$. If this condition does not hold, the maximum of $F(\bm{a}',\bm{b}')$ is attained at $b_1' = b_0$, in which case
    \begin{align}
        F(\bm{a}',\bm{b}') \le a_0(1-p_K)G(\bm{a},\bm{b}) + a_0p_K - b_0P_K.
    \end{align}
    Therefore, we have
    \begin{equation}
        \begin{aligned}
            \text{AUNBC} = F(\bm{a},\bm{b}) &\le F(\bm{a}',\bm{b}') \\
            &\le \left\{
            \begin{array}{cc}
                a_0(1-p_K)G(\bm{a},\bm{b}) +a_0p_K - b_0P_K, & \text{if }G(\bm{a},\bm{b}) \le 1- \frac{b_0P_K}{(1-p_K)a_0}; \\
                a_0 - 2\sqrt{P_K(1-p_K)a_0b_0(1-G(\bm{a},\bm{b})}, & \text{otherwise}
            \end{array}\right.\\
            &=A_K(G(\bm{a},\bm{b});a_0)\\
            &=A_K\left(\text{AUROC};\frac{N^+}{N}\right) \le \max_{1\le k\le M}A_k\left(\text{AUROC};\frac{N^+}{N}\right).
        \end{aligned}
    \end{equation}
\end{proof}
\subsection*{Proof of \Cref{cor:auroc_aunbc}}
\begin{proof} 
    The function $A_k(x;a_0)$ defined in (\ref{eq:Ak}) is continuous in $[0,1]$ and strictly monotonically increases with respect to $x$ for each $k \in \{1,2,\cdots,M\}$. Then we can define the inverse  function $B_k(y;a_0)$ of $A_k(x;a_0)$, as shown in (\ref{eq:Bk}). In addition, since $\text{AUROC} \in [0,1]$, (\ref{eq:auroc}) holds true. Thus,
    \begin{equation}
        \begin{aligned}
            \text{AUROC} \ge A_k^{-1}(\text{AUNBC};a_0)=B_k(\text{AUNBC};a_0), \quad k=1,2,\cdots,M.
        \end{aligned}
    \end{equation}
    Since $0 \le \text{AUROC} \le 1$, we finally have
    \begin{align}
        1 \ge \mathrm{AUROC} \ge \max\left\{\min_{1\le k \le M} B_k(\mathrm{AUNBC};a_0), 0\right\}.
    \end{align}
\end{proof}
\subsection*{Proof of Theorem~\ref{thm:improve_aunbc}}
\begin{proof} 
For any risk model $c:\mathcal{X}\to [0,1]$, we use $\mathrm{TP}_i(c)$ and $\mathrm{FP}_i(c)$, respectively, to denote the number of true positives and the number of false positives predicted by $c$ above threshold $p_i$; use $N_i(c)$ and $O_i(c)$, respectively, to denote the total number of samples and the number of true positives predicted by $c$ in the interval $\mathcal{S}_i$. Then 
\begin{align}
  &N_i(c) = \mathrm{TP}_i(c) - \mathrm{TP}_{i+1}(c) +\mathrm{FP}_i(c) -\mathrm{FP}_{i+1}(c),\\
  &O_i(c) = \mathrm{TP}_i(c) - \mathrm{TP}_{i+1}(c).
\end{align}
For the model $c'_k$, we have 
\begin{align}
    \mathrm{TP}_i(c_k') = \sum_{j=1}^N I(c_k'(\bm{x}_j) \ge p_i,y_j=1) = \left\{
        \begin{array}{cc}
        \mathrm{TP}_{k}(c),  & \text{if } i=k+1;\\
        \mathrm{TP}_i(c),    & \text{otherwise,}
        \end{array}
        \right.\\
    \mathrm{FP}_i(c_k') = \sum_{j=1}^N I(c_k'(\bm{x}_j) \ge p_i,y_j=0) = \left\{
        \begin{array}{cc}
        \mathrm{FP}_{k}(c),  & \text{if } i=k+1;\\
        \mathrm{FP}_i(c),    & \text{otherwise.}
        \end{array}
        \right.
\end{align}
Then 
\begin{equation}
    \begin{aligned}
    & p_{k+1}N_k(c) < O_k(c)\\
    \Leftrightarrow \quad & p_{k+1}\left(\mathrm{TP}_k(c)-\mathrm{TP}_{k+1}(c)+\mathrm{FP}_k(c)-\mathrm{FP}_{k+1}(c) \right)<\left(\mathrm{TP}_k(c)-\mathrm{TP}_{k+1}(c) \right)\\
    \Leftrightarrow \quad &  \mathrm{TP}_{k+1}(c) - \mathrm{FP}_{k+1}(c) \cdot \frac{p_{k+1}}{1-p_{k+1}}  < \mathrm{TP}_k(c)-\mathrm{FP}_k(c)\cdot \frac{p_{k+1}}{1-p_{k+1}}\\
    \Leftrightarrow \quad &\mathrm{TP}_{k+1}(c) - \mathrm{FP}_{k+1}(c) \cdot \frac{p_{k+1}}{1-p_{k+1}} < \mathrm{TP}_{k+1}(c_k') - \mathrm{FP}_{k+1}(c_k') \cdot \frac{p_{k+1}}{1-p_{k+1}}\\
    \Leftrightarrow \quad & \sum_{i=0}^M \mathrm{TP}_{i}(c) - \mathrm{FP}_{i}(c) \cdot \frac{p_{i}}{1-p_{i}} < \sum_{i=0}^M \mathrm{TP}_{i}(c_k') - \mathrm{FP}_{i}(c_k') \cdot \frac{p_{i}}{1-p_{i}}.
    \end{aligned}
\end{equation}
Similarly, for the model $c_k''$ 
\begin{align}
    \mathrm{TP}_i(c_k'') = \sum_{j=1}^N I(c_k''(\bm{x}_j) \ge p_i,y_j=1) = \left\{
        \begin{array}{cc}
        \mathrm{TP}_{k+1}(c),  & \text{if } i=k;\\
        \mathrm{TP}_i(c),    & \text{otherwise.}
        \end{array}
        \right.\\
    \mathrm{FP}_i(c_k'') = \sum_{j=1}^N I(c_k''(\bm{x}_j) \ge p_i,y_j=0) = \left\{
        \begin{array}{cc}
        \mathrm{FP}_{k+1}(c),  & \text{if } i=k;\\
        \mathrm{FP}_i(c),    & \text{otherwise.}
        \end{array}
        \right.
\end{align}
Then
\begin{equation}
    \begin{aligned}
    & p_{k}N_k(c) > O_k(c)\\
    \Leftrightarrow \quad & p_{k}\left(\mathrm{TP}_k(c)-\mathrm{TP}_{k+1}(c)+\mathrm{FP}_k(c)-\mathrm{FP}_{k+1}(c) \right) > \left(\mathrm{TP}_k(c)-\mathrm{TP}_{k+1}(c) \right)\\
    \Leftrightarrow \quad &  \mathrm{TP}_{k+1}(c) - \mathrm{FP}_{k+1}(c) \cdot \frac{p_{k}}{1-p_{k}}  > \mathrm{TP}_k(c)-\mathrm{FP}_k(c)\cdot \frac{p_{k}}{1-p_{k}}\\
    \Leftrightarrow \quad &\mathrm{TP}_{k}(c_k'') - \mathrm{FP}_{k}(c_k'') \cdot \frac{p_{k}}{1-p_{k}} > \mathrm{TP}_{k}(c) - \mathrm{FP}_{k}(c) \cdot \frac{p_{k}}{1-p_{kre}}\\
    \Leftrightarrow \quad & \sum_{i=0}^M \mathrm{TP}_{i}(c_k'') - \mathrm{FP}_{i}(c_k'') \cdot \frac{p_{i}}{1-p_{i}} > \sum_{i=0}^M \mathrm{TP}_{i}(c) - \mathrm{FP}_{i}(c) \cdot \frac{p_{i}}{1-p_{i}}.
    \end{aligned}
\end{equation}
\end{proof}
\subsection*{Proof of \Cref{prop:property of the inproved model}}
\begin{proof}
We prove the proposition by analyzing the two for-loops in \Cref{alg:improve_aunbc}. For the first loop, let $O_k^{1,s}$ and $N_k^{1,s}$ denote the values of $O_k$ and $N_k$ after completing $s$ iterations of the first loop,
    where $s=0,\ldots,M$. After the iteration with $i=k$, the values of $O_k^{1,k+1}$ and $N_k^{1,k+1}$ either become zero, if the condition $O_k^{1,k}>p_{k+1}N_k^{1,k}$ holds, or remain unchanged otherwise. In both cases, $O_k^{1,k+1} \le p_{k+1}N_k^{1,k+1}$ holds after the iteration with $i=k$. Moreover, since the interval $\mathcal{S}_k$ is never revisited in the remaining iterations of the first loop, we have
    \begin{equation}
        O_k^{1,s} = O_k^{1,k+1},\quad N_k^{1,s} = N_k^{1,k+1},
    \end{equation}
    for all $s \ge k+1$, and hence
    \begin{equation}
        O_k^{1,s} \le p_{k+1}N_k^{1,s}.
    \end{equation}
    \indent For the second loop, let $O_k^{2,M-t}$ and $N_k^{2,M-t}$ denote the values of
    $O_k$ and $N_k$ after completing $t$ iterations of the second loop, where $t=0,\ldots,M$. Then for each $k=0,1,\cdots,M$, we have $O_k^{1,M} = O_k^{2,M}$ and $N_k^{1,M} = N_k^{2,M}$. Similarly, after the iteration with $i=k$, the values of $O_k^{2,k-1}$ and $N_k^{2,k-1}$ either become zero, if the condition $O_k^{2,k} < p_{k}N_k^{2,k}$ holds, or remain unchanged otherwise. In both cases, $O_k^{2,k-1} \ge p_{k}N_k^{2,k-1}$ holds after the iteration with $i=k$. Moreover, since the interval $\mathcal{S}_k$ is never revisited in the remaining iterations of the second loop, we have
    \begin{equation}
        O_k^{2,t} = O_k^{2,k-1},\quad N_k^{2,t} = N_k^{2,k-1},
    \end{equation}
    for all $t \le k-1$, and hence
    \begin{equation}
        O_k^{2,t} \ge p_{k}N_k^{2,t}.
    \end{equation}
    Then, we prove by backward induction that the inequality $O_k^{2,t} \le p_{k+1}N_k^{2,t}$ is preserved throughout the second loop.\\ 
    \indent At the iteration $i=k$, if $O_k^{2,k} < p_{k}N_k^{2,k}$ holds, we have 
    \begin{equation}
    O_{k-1}^{2,k-1} = O_{k-1}^{2,k}+O_{k}^{2,k},\quad N_{k-1}^{2,k-1} = N_{k-1}^{2,k}+N_{k}^{2,k} 
    \end{equation}
    Since the values of $O_{k-1}^{2,t}$ and $N_{k-1}^{2,t}$ do not change when $t \ge k$, we have 
    \begin{equation}
        O_{k-1}^{2,k} = O_{k-1}^{2,M} = O_{k-1}^{1,M}, \quad N_{k-1}^{2,k} = N_{k-1}^{2,M} = N_{k-1}^{1,M}.
    \end{equation}
     Therefore,
     \begin{equation}
         O_{k-1}^{2,k} \le p_k N_{k-1}^{2,k}.
     \end{equation}
     Combine with $O_k^{2,k} < p_{k}N_k^{2,k}$, we have 
     \begin{equation}
         O_{k-1}^{2,k-1} \le p_k N_{k-1}^{2,k-1}.
     \end{equation}
     \indent Otherwise, $O_k^{2,k} \ge p_{k}N_k^{2,k}$ holds, we have 
     \begin{equation}
         0 = O_k^{2,k-1} \le p_{k+1}N_k^{2,k-1} =0.
     \end{equation}
     Therefore, after the iteration with $i=k$, the inequalities for both $\mathcal{S}_k$ and $\mathcal{S}_{k-1}$ are satisfied. Applying backward induction completes the proof. Hence the conclusion of \Cref{prop:property of the inproved model} holds.    
\end{proof}
\subsection*{Proof of \Cref{cor:calibration}}
\begin{proof} 
Since for each $i=0,1,\cdots,M$, 
\begin{equation}
     p_i N_i^* \le O_i^* \le p_{i+1}N_i^*,
\end{equation}
we can choose $q_i\in [p_i,p_{i+1}]$, such that 
\begin{align}
    q_{i} N_i^*=O_i^*.
\end{align}
If $q_i\in \mathcal{S}_i$, $i = 0,1,\cdots,M$, then the conclusion (1) holds. Otherwise, $q_{k} =p_{k+1}$ for some $k \in \{ 0,1,\cdots,M-1\}$, then let $c_k':\mathcal{X}\to[0,1]$ satisfies
\begin{equation}
    \begin{aligned}
        c_k'(\bm{x})=\left\{
        \begin{array}{cc}
        p_{k+1} & \text{if }c^*(\bm {x}) \in \mathcal{S}_k;\\
        c^*(\bm {x})    & \text{otherwise.}
        \end{array}
        \right.
    \end{aligned}
\end{equation}
It is easy to verify that $c_k'$ has same value of AUNBC as $c^*$ and
\begin{align}
    \mathrm{TP}_k(c'_k) - \mathrm{TP}_{k+1}(c'_k) &= \mathrm{FP}_k(c'_k) -\mathrm{FP}_{k+1}(c'_k) = 0,\\
    q_{k}'  \left(\mathrm{TP}_k(c'_k) - \mathrm{TP}_{k+1}(c'_k) +\mathrm{FP}_k(c'_k) -\mathrm{FP}_{k+1}(c'_k)\right)  &= \mathrm{TP}_k(c'_k) - \mathrm{TP}_{k+1}(c'_k) \quad \forall\, q_k' \in [p_k,p_{k+1}),\\
    q_{i}  \left(\mathrm{TP}_i(c'_k) - \mathrm{TP}_{i+1}(c'_k) +\mathrm{FP}_i(c'_k) -\mathrm{FP}_{i+1}(c'_k)\right)  &= \mathrm{TP}_i(c'_k) - \mathrm{TP}_{i+1}(c'_k) \quad \forall\, i \not = k.
\end{align}
Then we can replace the $c^*$ with $c_k'$ and repeat above operation until all $q_i$ fall within the interval $\mathcal{S}_i$.

Therefore, we can always find a model that maximizes AUNBC and assign it a suitable output $q_i$ on the interval $\mathcal{S}_i$, $\forall \, i \in \{0,1,\cdots,M\}$, to achieve moderate calibration.
\end{proof}
\subsection*{Proof of \Cref{prop:calibration}}
\begin{proof}
    First, Supposing $\tilde c(\bm x_{j_1}), \tilde c(\bm x_{j_2})<1$, we can write $\tilde c(\bm x_{j})=c^*(\bm x_{j}) + \delta \cdot c(\bm x_{j})$ for $j\in\{j_1,j_2\}$. To establish the equivalence $c(\bm x_{j_1})< c(\bm x_{j_2}) \iff \tilde c(\bm x_{j_1})<\tilde c(\bm x_{j_2})$, we analyze the following two scenarios based on \Cref{alg:improve_aunbc}:
    \begin{enumerate}
        \item If $c(\bm x_{j_1})< c(\bm x_{j_2})$, then $c^*(\bm x_{j_1}) \le c^*(\bm x_{j_2})$ holds, which implies $\tilde c(\bm x_{j_1}) < \tilde c(\bm x_{j_2})$.
        \item If $c(\bm x_{j_1}) \ge c(\bm x_{j_2})$, then $c^*(\bm x_{j_1}) \ge c^*(\bm x_{j_2})$ holds, which implies $\tilde c(\bm x_{j_1}) \ge \tilde c(\bm x_{j_2})$.
    \end{enumerate}
    Summarizing both case completes the proof of the equivalence.\\
    \indent Then, we define the intervals $\mathcal{S}_i = [p_i,p_{i+1})$ for $i=0,1,\cdots,M-1$ and $\mathcal{S}_M=[p_M,1]$. For each $i=0,1,\cdots,M$, we use $N_i(c)$ and $O_i(c)$ to denote the number of samples and the number of positive samples whose predicted risk by $c$ lies in the interval $\mathcal{S}_i$, respectively. According to the \Cref{alg:moderate_calibration}, 
    \begin{equation}
        c^*(\bm x) \in \left \{0,q_0,q_1,\cdots,q_M, 1\right\},
    \end{equation}
    and $0 \le q_0 < p_1 \le q_1 < p_2\le\cdots < p_M \le q_M \le 1$. Then for each $i=0,1,\cdots,M$,
    \begin{equation}
    \begin{aligned}
        N_i(c^*) &= \sum_{j=1}^N I(c^*(\bm x_j) \in \mathcal{S}_i) = \sum_{j=1}^N I(c^*(\bm x_j) = q_i), \\
        O_i(c^*) &= \sum_{j=1}^N I(c^*(\bm x_j) \in \mathcal{S}_i, y_j=1) = \sum_{j=1}^N I(c^*(\bm x_j) = q_i,y_j=1) = q_iN_i(c^*). 
    \end{aligned}
    \end{equation}
    And for each $i=0,1,\cdots,M-1$, if $\delta < p_{i+1}-q_i$ holds, we have $p_i \le q_i + \delta \cdot c(\bm x)<p_{i+1}<1$. Suppose that $\delta < \min_{0\le i \le M-1} \{p_{i+1} - q_i\}$, we have
    \begin{equation}
    \begin{aligned}
        N_i(\tilde c) & = \sum_{j=1}^N I(\tilde c(\bm x_j) \in \mathcal{S}_i) \\
        & = \sum_{j=1}^N I(c^*(\bm x_j)+\delta\cdot c(\bm x_j) \in \mathcal{S}_i)\\
        &= \sum_{j=1}^N I(c^*(\bm x_j) \in \mathcal{S}_i) = N_i(c^*),
    \end{aligned}
    \end{equation}
    and similarly,
    \begin{equation}
        O_i(\tilde c) = O_i(c^*).
    \end{equation}
    As for $i=M$, we can get the same results for $N_M(\tilde c)$ and $O_M(\tilde c)$ if $q_M < 1$ and $\delta < 1-q_M$; otherwise, for all $1\le j \le N$ such that $c^*(\bm x_j) \in \mathcal{S}_M$, $c^*(\bm x_j)=1$, then we have $\tilde c(\bm x_j) = 1$. Therefore, $N_M(\tilde c) = N_M(c^*)$ and $O_M(\tilde c) = O_M(c^*)$ still hold. Then we can compute ECE value for $\tilde c$,
    \begin{equation}
    \begin{aligned}
    ECE(\tilde c) &= \sum_{\substack{i=0 \\ N_i(\tilde c) \not = 0}}^M \frac{N_i(\tilde c)}{N} \left | \frac{O_i(\tilde c)}{N_i(\tilde c)} - e_i(\tilde c) \right |  \\
    & = \sum_{\substack{i=0 \\ N_i(c^*) \not = 0}}^M \frac{N_i(c^*)}{N} \left | \frac{O_i(c^*)}{N_i(c^*)} - e_i(\tilde c) \right |\\
    & = \sum_{\substack{i=0 \\ N_i(c^*) \not = 0}}^M \frac{N_i(c^*)}{N} \left | q_i - \left(q_i + \frac{\sum_{j=1}^N \min \{\delta \cdot c^*(\bm x_j),1-q_i\} \cdot I(c^*(\bm x_j)\in \mathcal{S}_j)}{N_i(c^*)}  \right) \right | \\
    & \le \sum_{\substack{i=0 \\ N_i(c^*) \not = 0}}^M \frac{1}{N} \left |\sum_{j=1}^N \delta \right | = \delta.
    \end{aligned}
    \end{equation}
    Since $ECE(c^*)= 0$, we have 
    \begin{equation}
        \left | ECE(\tilde c) - ECE(c^*) \right | \le \delta.\\
    \end{equation}
    \indent Finally, for each $i=0,1,\cdots,M$, we use $\mathrm{TP}_i(c)$ and $\mathrm{FP}_i(c)$ to denote the number of true positive and the number of false positive samples, respectively, whose predicted risk by $c$ are at least $p_i$. By the definition, we have 
    \begin{equation}
        \mathrm{TP}_i(c) = \sum_{k = i}^M O_i(c),\quad \mathrm{FP}_i(c) = \sum_{k = i}^M (N_i(c) - O_i(c)).
    \end{equation}
    Therefore, for each $i=0,1,\cdots,M$, 
    \begin{equation}
        \mathrm{TP}_i(\tilde c) = \mathrm{TP}_i(c^*), \quad \mathrm{FP}_i(\tilde c) = \mathrm{FP}_i(c^*).
    \end{equation}
    \begin{equation}
        \implies AUNBC(c^*) = AUNBC(\tilde c).
    \end{equation}
\end{proof}
\subsection*{Proof of \Cref{thm:learning_capacity}}
\begin{proof} 
For each $i \in \{0,1,\cdots,M\}$, we can determine the integers $\lambda_i$ and $T_i$ by:

\begin{align}
\frac{\rho_i}{\|\bm \rho\|_\infty} \Lambda - \frac{1}{2} & < \lambda_i \leq \frac{\rho_i}{\|\bm \rho\|_\infty} \Lambda + \frac{1}{2}, \\
\frac{t_i}{\|\bm \rho\|_\infty} \Lambda - \frac{1}{2} & < T_i \leq \frac{t_i}{\|\bm \rho\|_\infty} \Lambda + \frac{1}{2}.
\end{align}

Due to:

\begin{align}
\lambda_i & \leq \frac{\rho_i}{\|\bm \rho\|_\infty} \Lambda + \frac{1}{2} < \Lambda + 1, \\
\lambda_i & > \frac{\rho_i}{\|\bm \rho\|_\infty} \Lambda - \frac{1}{2} > -\Lambda - 1,
\end{align}

we have $-\Lambda \leq \lambda_i \leq \Lambda$. Furthermore,

\begin{align}
\|\bm \rho\|_\infty \|X\|_\infty & = \|\bm \rho\|_\infty \max_{1 \leq j \leq N} \|\bm x_j\|_1 \geq \max_{1 \leq j \leq N} |\bm x_j \bm \rho| \geq t_M, \\
\Rightarrow T_i & > \frac{t_i}{\|\bm \rho\|_\infty} \Lambda - \frac{1}{2} \geq \frac{-t_i}{t_M} \Lambda \|X\|_\infty - \frac{1}{2} > -[\Lambda \|X\|_\infty] - 1, \\
T_i & \leq \frac{t_i}{\|\bm \rho\|_\infty} \Lambda + \frac{1}{2} \leq \frac{t_i}{t_M} \Lambda \|X\|_\infty + \frac{1}{2} < [\Lambda \|X\|_\infty] + 1,
\end{align}

we have $-T_{\text{max}} \leq T_i \leq T_{\text{max}}$.

Next, we will prove for any $i\in \{0,1,\cdots,M\}$, $J \subseteq \{1,2,\ldots,N\}$ and $J' \subseteq \{1,2,\ldots,N\}$ that

\begin{align}
\{j \in J: \bm x_j ^\top \bm \lambda < T_i\} & = \{j \in J: \bm x_j ^\top \bm \rho < t_i\}, \\
\{j \in J': \bm x_j ^\top \bm \lambda \geq T_i\} & = \{j \in J': \bm x_j ^\top \bm \rho \geq t_i\}.
\end{align}

It is equivalent to prove that for any $i = 0,1,\cdots,M$ and $j=1,2,\cdots,N$, the signs of $\frac{\bm x_j ^\top \bm \lambda - T_i}{\Lambda}$ and $\frac{\bm x_j ^\top \bm\rho - t_i}{\|\bm \rho\|_\infty}$ are always the same. Comparing the terms $\frac{\bm x_j ^\top \bm \lambda - T_i}{\Lambda}$ and $\frac{\bm x_j ^\top \bm \rho - t_i}{\|\bm \rho\|_\infty}$, we get

\begin{align}
\left| \frac{\bm x_j ^\top \bm \lambda - T_i}{\Lambda} - \frac{\bm x_j ^\top \bm \rho - t_i}{\|\bm \rho\|_\infty} \right| & \leq \left|\bm  x_j ^\top \left( \frac{\bm \lambda}{\Lambda} - \frac{\bm \rho}{\|\bm \rho\|_\infty} \right) \right| + \left| \frac{T_i}{\Lambda} - \frac{t_i}{\|\bm \rho\|_\infty} \right| \\
& \leq \frac{1}{2\Lambda} \|\bm  x_j \|_1 + \frac{1}{2\Lambda} \leq \frac{\|X\|_\infty + 1}{2\Lambda} < \gamma_{\text{min}} = \min_{i,j} \frac{|\bm x_j ^\top \bm \rho - t_i|}{\|\bm \rho\|_\infty}.
\end{align}
For the case, where $\bm x_j ^\top \bm \rho - t_i > 0$:
\begin{align}
\frac{\bm x_j ^\top \bm \rho - t_i}{\|\bm \rho\|_\infty} - \frac{\bm x_j ^\top \bm \lambda - T_i}{\Lambda} & \leq \left| \frac{\bm x_j ^\top \bm \lambda - T_i}{\Lambda} - \frac{\bm x_j ^\top \bm \rho - t_i}{\|\bm \rho\|_\infty} \right| < \min_{i,j} \frac{|\bm x_j ^\top \bm \rho - t_i|}{\|\bm \rho\|_\infty}, \\
\Rightarrow \frac{\bm x_j ^\top \bm \lambda - T_i}{\Lambda} & > \frac{\bm x_j ^\top \bm \rho - t_i}{\|\bm \rho\|_\infty} - \min_{i,j} \frac{|\bm x_j ^\top \bm \rho - t_i|}{\|\bm \rho\|_\infty} \geq 0.
\end{align}
For the case, where $\bm x_j ^\top \bm \rho - t_i < 0$:
\begin{align}
\frac{\bm x_j ^\top \bm \rho - t_i}{\|\bm \rho\|_\infty} - \frac{\bm x_j ^\top \bm \lambda - T_i}{\Lambda} & \geq -\left| \frac{\bm x_j ^\top \bm \lambda - T_i}{\Lambda} - \frac{\bm x_j ^\top \bm \rho - t_i}{\|\bm \rho\|_\infty} \right| > -\min_{i,j} \frac{|\bm x_j ^\top \bm \rho - t_i|}{\|\bm \rho\|_\infty}, \\
\Rightarrow \frac{\bm x_j ^\top \bm \lambda - T_i}{\Lambda} & < \frac{\bm x_j ^\top \bm \rho - t_i}{\|\bm \rho\|_\infty} + \min_{i,j} \frac{|\bm x_j ^\top \bm \rho - t_i|}{\|\bm \rho\|_\infty} \leq 0.
\end{align}
\end{proof}
\subsection*{Proof of \Cref{cor:learning_capacity}}
\begin{proof} 
If we only consider a subset $\mathcal{D}_{(k)}$ of dataset $\mathcal{D}$, where $\mathcal{D}_{(k)} = \left\{\left(\bm{x}_{j}, y_{j}\right)\right\}_{j\in \mathcal J_{(k)}}$, by Theorem (\ref{thm:learning_capacity}), we have

\begin{equation}
    \begin{aligned}
            &\sum_{i=0}^{M}\left(\omega_{i} \sum_{j\in \mathcal J_{(k)}} I\left( \bm{x}_{j}^\top \bm{\lambda}\ge T_{i},y_j=1\right)
            -\frac{\omega_{i}p_i}{1-p_{i}} \sum_{j\in \mathcal J_{(k)}} I\left(\bm{x}_{j}^\top \bm{\lambda} \geq T_{i}, y_{j}=0\right)\right) \\ 
            \ge &\sum_{i=0}^{M}\left(\omega_{i} \sum_{j\in \mathcal J_{(k)}} I\left( \bm{x}_{j}^\top \bm{\rho}\ge t_{i},y_j=1\right)
            -\frac{\omega_{i}p_i}{1-p_{i}} \sum_{j\in \mathcal J_{(k)}} I\left( \bm{x}_{j} ^\top \bm{\rho} \geq t_{i}, y_{j}=0\right)\right).
    \end{aligned}
\end{equation}
From the definition of $\mathcal{J}_{(k)}$, we know that $|\mathcal J_{(k)}|=N-(k-1)$. Thus, we have
\begin{equation}
\begin{aligned}
    &\sum_{i=0}^{M}\left(\omega_{i} \sum_{j \not \in \mathcal J_{(k)}} I\left( \bm{x}_{j}^\top \bm{\lambda}\ge T_{i},y_j=1\right)
    -\frac{\omega_{i}p_i}{1-p_{i}} \sum_{j \not \in \mathcal J_{(k)}} I\left(\bm{x}_{j}^\top \bm{\lambda} \geq T_{i}, y_{j}=0\right)\right) \\
    \ge &\sum_{i=0}^{M}\left(0
    -\frac{\omega_{i}p_i}{1-p_{i}}(k-1) \right) =-(k-1)\sum_{i=0}^M\frac{\omega_ip_i}{1-p_i}.
\end{aligned}    
\end{equation}
and
\begin{equation}
\begin{aligned}
    &\sum_{i=0}^{M}\left(\omega_{i} \sum_{j \not \in \mathcal J_{(k)}} I\left( \bm{x}_{j}^\top \bm{\rho}\ge t_{i},y_j=1\right)
    -\frac{\omega_{i}p_i}{1-p_{i}} \sum_{j \not \in \mathcal J_{(k)}} I\left( \bm{x}_{j}^\top  \bm{\rho} \geq t_{i}, y_{j}=0\right)\right) \\
    \le &\sum_{i=0}^{M}\left(\omega_{i} (k-1)-0\right) = k-1.
\end{aligned}    
\end{equation}
Finally, we can get
\begin{equation}
    \begin{aligned}
    &\sum_{i=0}^{M}\left(\omega_{i} \sum_{j=1}^{N} I\left( \bm{x}_{j}^\top \bm{\lambda}\ge T_{i},y_j=1\right)
    -\frac{\omega_{i}p_i}{1-p_{i}} \sum_{j=1}^{N} I\left(\bm{x}_{j}^\top \bm{\lambda} \geq T_{i}, y_{j}=0\right)\right) \\
    \ge &\sum_{i=0}^{M}\left(\omega_{i} \sum_{j=1}^{N} I\left( \bm{x}_{j}^\top \bm{\rho}\ge t_{i},y_j=1\right)
    -\frac{\omega_{i}p_i}{1-p_{i}} \sum_{j=1}^{N} I\left( \bm{x}_{j} ^\top \bm{\rho} \geq t_{i}, y_{j}=0\right)\right) - (k-1)\sum_{i=0}^M\frac{\omega_i}{1-p_i}.
    \end{aligned}
\end{equation}
\end{proof}
\subsection*{Proof of Theorem~\ref{thm:generalization}}
\begin{proof} 
$R_N(\bm \lambda,\bm T)$ and $R(\bm \lambda,\bm T))$ can be decomposed as:

\begin{align}
R_N(\bm \lambda,\bm T) & =  \sum_{i=0}^M \omega_i R_{1,N}(\bm \lambda, T_i) + \sum_{i=0}^M \omega_i R_{2,N}(\bm \lambda,T_i) , \\
R(\bm \lambda,\bm T) & = \sum_{i=0}^M \omega_i R_{1}(\bm \lambda,T_i) + \sum_{i=0}^M \omega_i R_2(\bm \lambda,T_i),
\end{align}

where
\begin{align}
R_{1,N}(\bm \lambda,T_i) & := -\frac{1}{N} \sum_{j=1}^N I(\bm x_j ^\top \bm \lambda -T_i \ge 0,y_j = 1), \\
R_{2,N}(\bm \lambda,T_i) & := \frac{p_i}{N(1-p_i)} \sum_{j=1}^N I(\bm x_j ^\top \bm \lambda -T_i \ge 0,y_j = 0), \\
R_{1}(\bm \lambda,T_i) & := -\mathbb{E}_{\mathcal {X},\mathcal{Y}} [I(\bm x_j ^\top \bm \lambda -T_i \ge 0,y_j = 1)], \\
R_{2}(\bm \lambda,T_i) & := \frac{p_i}{1-p_i} \mathbb{E}_{\mathcal X,\mathcal Y}[I(\bm x_j ^\top \bm \lambda -T_i \ge 0,y_j = 0)].
\end{align}

According to Hoeffding's inequality, for all $\epsilon > 0$,

\begin{align}
\mathbb{P}(R_{1}(\bm \lambda,T_i) - R_{1,N}(\bm \lambda,T_i) \geq \epsilon) & \leq \exp(-2N\epsilon^2),
\end{align}
and
\begin{equation}
    \begin{aligned}
        &\mathbb{P}(R_{2}(\bm \lambda,T_i) - R_{2,N}(\bm \lambda,T_i) \geq \epsilon) \\
        = & \mathbb{P} \left( \frac{p_i}{1 - p_i} \left( \mathbb{E}_{\mathcal X,\mathcal Y}[I(\bm x_j ^\top \bm \lambda -T_i \ge 0,y_j = 0)] - \frac{1}{N} \sum_{j=1}^N I(\bm x_j ^\top \bm \lambda -T_i \ge 0,y_j = 0) \right) \geq \epsilon \right) \\
        \leq &\exp(\frac{-2N(1 - p_i)^2\epsilon^2}{p_i^2}).
    \end{aligned}
\end{equation}

More generally,
\begin{equation}
\begin{aligned}
&\mathbb{P}(R_{1}(\bm \lambda,T_i) - R_{1,N}(\bm \lambda,T_i) \geq \epsilon, \forall \bm \lambda \in \mathcal{L},T_i \in \mathcal{B}_0) \\
 \leq &\sum_{\bm \lambda \in \mathcal{L},T_i \in \mathcal{B}_0} \mathbb{P}(R_{1}(\bm \lambda,T_i) - R_{1,N}(\bm \lambda,T_i) \geq \epsilon) \\
 \leq &|\mathcal{L}|\cdot |\mathcal{B}_0| \exp(-2N\epsilon^2).
\end{aligned}    
\end{equation}

Hence, for all $\delta > 0$ and $\bm \lambda \in \mathcal{L},T_i \in \mathcal{B}_0$ with probability at least $1 - \delta$, we obtain
\begin{align}
    R_{1}(\bm \lambda,T_i) \leq R_{1,N}(\bm \lambda,T_i) + \sqrt{\frac{\ln|\mathcal{L}|+\ln|\mathcal{B}_0| - \ln\delta}{2N}}.
\end{align}
With probability at least $1 - (M + 1)\delta$, we have
\begin{align}
    \sum_{i=0}^M \omega_i R_{1}(\bm \lambda,T_i) \leq \sum_{i=0}^M \omega_i R_{1,N}(\bm \lambda,T_i) + \sqrt{\frac{\ln|\mathcal{L}|+\ln|\mathcal{B}_0| - \ln\delta}{2N}}.
\end{align}
Similarly, with probability at least $1 - (M + 1)\delta$, we can write
\begin{align}
    \sum_{i=0}^M \omega_i R_{2}(\bm \lambda,T_i) \leq \sum_{i=0}^{M} \omega_i R_{2,N}(\bm \lambda,T_i) + \sum_{i=0}^{M} \frac{ \omega_i p_i}{1 - p_i} \sqrt{\frac{ \ln|\mathcal{L}|+\ln|\mathcal{B}_0| - \ln \delta}{2N}}
\end{align}
Thus, for small $\delta > 0$, with probability at least $1 - 2(M + 1)\delta$, we obtain
\begin{align} \label{ieq:generalization}
   R(\bm \lambda,\bm T) \leq R_{N}(\bm \lambda,\bm T) + \sum_{i=0}^{M} \frac{ \omega_i}{1 - p_i} \sqrt{\frac{\ln|\mathcal{L}|+\ln|\mathcal{B}_0| - \ln \delta}{2N}} .
\end{align}
Since the inequality (\ref{ieq:generalization}) is hold for any $\bm \lambda \in \mathcal{L}$ and $\bm T \in \mathcal{B}$, with probability at least $1 - 2(M + 1)\delta$, we have
\begin{align}
\sup_{\bm \lambda \in \mathcal{L},\bm T \in \mathcal{B}} \left\{ R(\bm \lambda,\bm T) - R_{N}(\bm \lambda,\bm T) \right\} \le \sum_{i=0}^{M} \frac{ \omega_i}{1 - p_i} \sqrt{\frac{\ln|\mathcal{L}|+\ln|{\mathcal{B}}_0| - \ln \delta}{2N}} ,
\end{align}
\end{proof}
\section{Intuitive Interpretation of AUNBC}\label{app:interpretation_aunbc}

AUNBC provides an aggregated measure of the decision utility of a prediction model over a range of threshold probabilities. Unlike conventional discrimination metrics (e.g., AUROC), which evaluate the ability to distinguish between positive and negative outcomes, AUNBC evaluates whether using a model leads to better decisions compared with alternative strategies, such as treating all individuals or treating none.

In decision curve analysis, the threshold probability $p_t$ represents the minimum risk at which a decision maker would consider taking an intervention. Different decision makers may choose different thresholds depending on their preferences. For example, a lower threshold indicates a preference to avoid missed cases, while a higher threshold indicates a preference to avoid unnecessary interventions.

For each threshold probability, the corresponding net benefit quantifies the utility of applying the prediction model. Therefore, the net benefit curve (decision curve) describes the usefulness of a model across different decision thresholds. AUNBC is obtained by calculating the area under the net benefit curve. Formally, for a risk model $c:\mathcal{X}\to[0,1]$, 
\begin{equation}
    \begin{aligned}
        \text{AUNBC}(c) &= \int_0^1 \textrm{NB}(p;c)\,\mathrm{d}p\\&=\int_0^1 \left(\pi\cdot\mathrm{TPR}(p;c)-(1-\pi)\cdot\mathrm{FPR}(p;c)\cdot\frac{p}{1-p}\right)\, \mathrm{d}p,
    \end{aligned}
\end{equation}
where $\pi$ denotes the prevalence of positive cases, $\mathrm{NB}(p;c)$, $\mathrm{TPR}(p;c)$, and $\mathrm{FPR}(p;c)$ denote the net benefit, true positive rate, and false positive rate of model $c$ at threshold $p$, respectively. If $c(x)<\delta<1$ holds, the integral is Riemann integrable and therefore measures the overall decision utility over the interval $[0,1]$. A larger AUNBC indicates that the model provides higher average net benefit over the interval $[0,1]$. In practice, we can use a discrete form as defined in \Cref{def:aunbc} to approximate the AUNBC.
\begin{figure}
    \centering
    \includegraphics[width=0.8\linewidth]{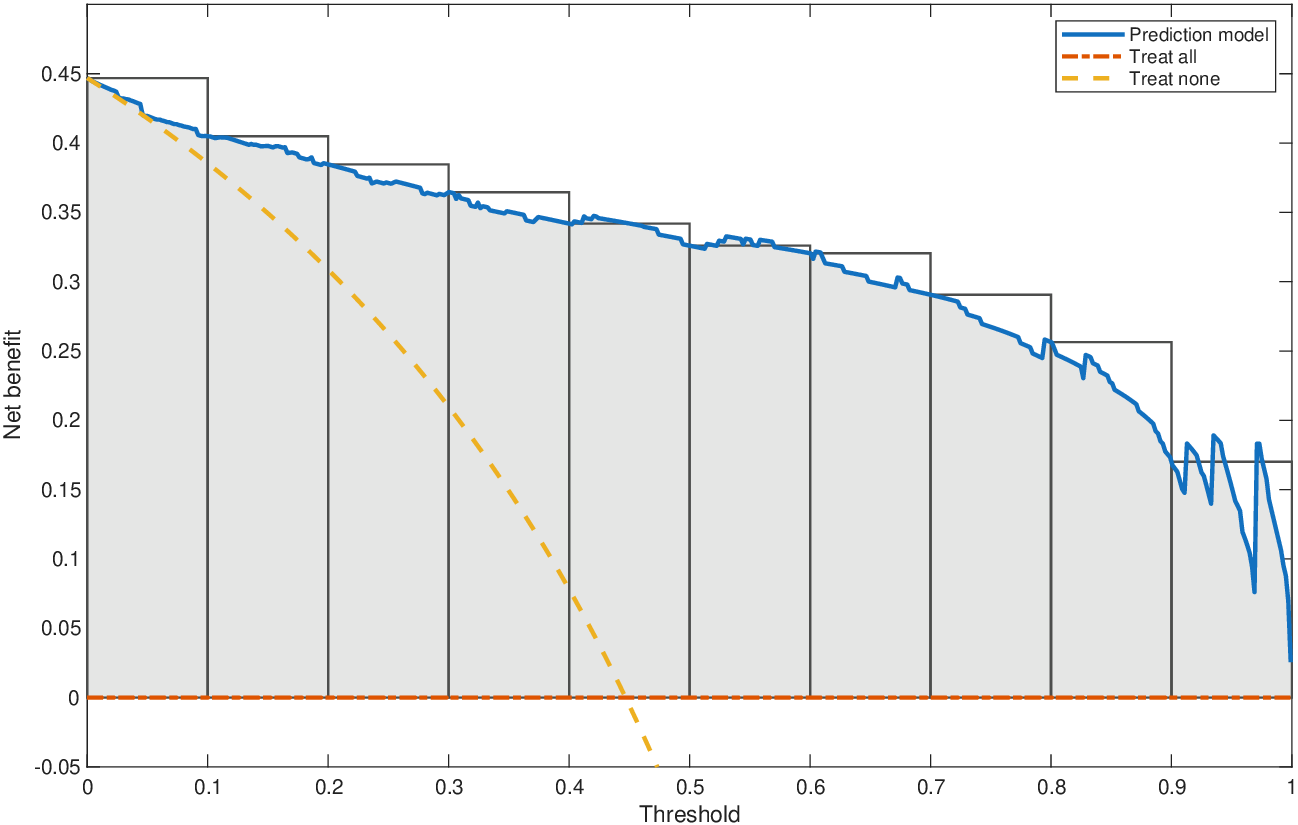}
    \caption{Illustration of the interpretation of AUNBC. The shaded area corresponds to the continuous AUNBC of the prediction model. The total area of the rectangles approximates the are under the curve.}
    \label{fig:AUNBC}
\end{figure}
\section{Ablation Studies} \label{app:ablation}
This appendix provides ablation studies to investigate the effects of the sparsity penalty $C_0$ and the post-processing procedure Algorithm \ref{alg:moderate_calibration} in RSS-DNB and RSS-DNB-SA. Specifically, we consider two variants of each model. First, we remove the sparsity penalty by setting $C_0$ to evaluate its effect on the sparsity and performance of the models. Second, we remove the post-processing procedure Algorithm \ref{alg:moderate_calibration} to assess its contribution to improving AUNBC and calibration. The results are summarized in \Cref{tab:ablation}.

As shown in \Cref{tab:ablation}, removing the sparsity penalty ($C_0=0$) increases the number of selected variables, confirming its role in controlling model sparsity. Meanwhile, the predictive performance remains relatively stable, indicating that the regularization term mainly affects model complexity rather than the optimization objective.

Removing the post-processing procedure Algorithm \ref{alg:moderate_calibration} leads to deteriorates calibration performance, demonstrating its effectiveness in improving the calibration of the learned scoring systems.
\begin{landscape}
\footnotesize
\setlength{\tabcolsep}{4pt}
\renewcommand{\arraystretch}{1.1}
\begin{longtable}{crcccccc}
\multicolumn{8}{l}{\textbf{Table \thetable\ :}  Ablation studies of RSS-DNB and RSS-DNB-SA.}\label{tab:ablation} \\
\toprule
Dataset & Metric & RSS-DNB & RSS-DNB ($C_0=0$) & RSS-DNB w/o Alg. 2 & RSS-DNB-SA & RSS-DNB-SA ($C_0=0$) & RSS-DNB-SA w/o Alg. 2  \\
\midrule
\endfirsthead

\multicolumn{8}{l}{\textbf{Table \thetable\ :}  continued} \\
\toprule
Dataset & Metric & RSS-DNB & RSS-DNB ($C_0=0$) & RSS-DNB w/o Alg. 2 & RSS-DNB-SA & RSS-DNB-SA ($C_0=0$) & RSS-DNB-SA w/o Alg. 2  \\
\midrule
\endhead

\bottomrule
\multicolumn{8}{l}{\footnotesize Notes: All values are reported as mean $\pm$ standard deviation over 10-fold cross-validation. Size is reported as mean (minimum--maximum) across the 10 folds.} \\
\multicolumn{8}{l}{\footnotesize The best and second-best results are highlighted in bold and underline, respectively.} \\
\endlastfoot
adult&Train AUROC&\underline{0.881 $\pm$ 0.002} & 0.881 $\pm$ 0.002 & \textbf{0.881 $\pm$ 0.002} & 0.872 $\pm$ 0.003 & 0.880 $\pm$ 0.001 & 0.872 $\pm$ 0.003\\
&Test AUROC&\underline{0.880 $\pm$ 0.007} & 0.879 $\pm$ 0.007 & \textbf{0.880 $\pm$ 0.007} & 0.872 $\pm$ 0.008 & 0.878 $\pm$ 0.006 & 0.872 $\pm$ 0.008\\
&Train ECE&\textbf{0.000 $\pm$ 0.000} & \textbf{0.000 $\pm$ 0.000} & 0.029 $\pm$ 0.005 & \textbf{0.000 $\pm$ 0.000} & \textbf{0.000 $\pm$ 0.000} & 0.038 $\pm$ 0.004\\
&Test ECE&\textbf{0.013 $\pm$ 0.003} & \textbf{0.013 $\pm$ 0.003} & 0.033 $\pm$ 0.005 & 0.014 $\pm$ 0.004 & 0.015 $\pm$ 0.004 & 0.038 $\pm$ 0.007\\
&Train AUNBC&\underline{0.102 $\pm$ 0.000} & \underline{0.102 $\pm$ 0.000} & 0.102 $\pm$ 0.000 & 0.101 $\pm$ 0.001 & \textbf{0.104 $\pm$ 0.000} & 0.101 $\pm$ 0.001\\
&Test AUNBC&\underline{0.102 $\pm$ 0.003} & \underline{0.102 $\pm$ 0.003} & 0.102 $\pm$ 0.003 & 0.100 $\pm$ 0.003 & \textbf{0.103 $\pm$ 0.003} & 0.100 $\pm$ 0.003\\
&Size&\underline{22.2 (19-24)} & 31.4 (26-33) & \underline{22.2 (19-24)} & \textbf{18.6 (15-22)} & 25.4 (22-28) & \textbf{18.6 (15-22)}\\
\midrule
bankruptcy&Train AUROC&\textbf{1.000 $\pm$ 0.000} & \textbf{1.000 $\pm$ 0.000} & \textbf{1.000 $\pm$ 0.000} & \underline{1.000 $\pm$ 0.000} & \textbf{1.000 $\pm$ 0.000} & \underline{1.000 $\pm$ 0.000}\\
&Test AUROC&\textbf{0.997 $\pm$ 0.011} & 0.982 $\pm$ 0.033 & \textbf{0.997 $\pm$ 0.011} & \textbf{0.997 $\pm$ 0.011} & \underline{0.987 $\pm$ 0.032} & \textbf{0.997 $\pm$ 0.011}\\
&Train ECE&\textbf{0.000 $\pm$ 0.000} & \textbf{0.000 $\pm$ 0.000} & 0.057 $\pm$ 0.002 & \textbf{0.000 $\pm$ 0.000} & \textbf{0.000 $\pm$ 0.000} & \underline{0.057 $\pm$ 0.002}\\
&Test ECE&\textbf{0.004 $\pm$ 0.013} & 0.008 $\pm$ 0.017 & 0.061 $\pm$ 0.019 & \underline{0.006 $\pm$ 0.013} & \textbf{0.004 $\pm$ 0.013} & 0.062 $\pm$ 0.019\\
&Train AUNBC&\textbf{0.572 $\pm$ 0.016} & \textbf{0.572 $\pm$ 0.016} & \textbf{0.572 $\pm$ 0.016} & \underline{0.569 $\pm$ 0.016} & \textbf{0.572 $\pm$ 0.016} & \underline{0.569 $\pm$ 0.016}\\
&Test AUNBC&\textbf{0.568 $\pm$ 0.147} & 0.557 $\pm$ 0.139 & \textbf{0.568 $\pm$ 0.147} & \underline{0.568 $\pm$ 0.147} & 0.561 $\pm$ 0.135 & \underline{0.568 $\pm$ 0.147}\\
&Size&\underline{2.9 (2-3)} & 5.7 (5-6) & \underline{2.9 (2-3)} & \textbf{2.1 (2-3)} & 3.2 (3-5) & \textbf{2.1 (2-3)}\\
\midrule
breastcancer&Train AUROC&0.992 $\pm$ 0.002 & \underline{0.994 $\pm$ 0.001} & 0.992 $\pm$ 0.002 & 0.990 $\pm$ 0.002 & \textbf{0.994 $\pm$ 0.001} & 0.990 $\pm$ 0.002\\
&Test AUROC&\textbf{0.985 $\pm$ 0.015} & 0.980 $\pm$ 0.020 & \textbf{0.985 $\pm$ 0.015} & 0.972 $\pm$ 0.030 & \underline{0.980 $\pm$ 0.022} & 0.972 $\pm$ 0.030\\
&Train ECE&\textbf{0.000 $\pm$ 0.000} & \textbf{0.000 $\pm$ 0.000} & 0.033 $\pm$ 0.002 & \textbf{0.000 $\pm$ 0.000} & \textbf{0.000 $\pm$ 0.000} & 0.030 $\pm$ 0.003\\
&Test ECE&0.022 $\pm$ 0.010 & \underline{0.021 $\pm$ 0.014} & 0.041 $\pm$ 0.019 & 0.022 $\pm$ 0.013 & \textbf{0.019 $\pm$ 0.017} & 0.043 $\pm$ 0.010\\
&Train AUNBC&0.325 $\pm$ 0.006 & \textbf{0.325 $\pm$ 0.006} & 0.325 $\pm$ 0.006 & 0.320 $\pm$ 0.007 & \underline{0.325 $\pm$ 0.006} & 0.320 $\pm$ 0.007\\
&Test AUNBC&\underline{0.305 $\pm$ 0.057} & \textbf{0.309 $\pm$ 0.058} & \underline{0.305 $\pm$ 0.057} & 0.298 $\pm$ 0.066 & 0.304 $\pm$ 0.061 & 0.298 $\pm$ 0.066\\
&Size&\underline{6.5 (5-7)} & 8.6 (8-9) & \underline{6.5 (5-7)} & \textbf{3.5 (3-5)} & 8.8 (8-9) & \textbf{3.5 (3-5)}\\
\midrule
haberman&Train AUROC&0.733 $\pm$ 0.015 & \underline{0.739 $\pm$ 0.008} & 0.733 $\pm$ 0.015 & 0.728 $\pm$ 0.011 & \textbf{0.740 $\pm$ 0.010} & 0.728 $\pm$ 0.011\\
&Test AUROC&0.662 $\pm$ 0.119 & \underline{0.676 $\pm$ 0.157} & 0.662 $\pm$ 0.119 & 0.667 $\pm$ 0.138 & \textbf{0.686 $\pm$ 0.131} & 0.667 $\pm$ 0.139\\
&Train ECE&\textbf{0.000 $\pm$ 0.000} & \textbf{0.000 $\pm$ 0.000} & 0.056 $\pm$ 0.014 & \textbf{0.000 $\pm$ 0.000} & \textbf{0.000 $\pm$ 0.000} & 0.049 $\pm$ 0.010\\
&Test ECE&\textbf{0.118 $\pm$ 0.061} & 0.124 $\pm$ 0.053 & 0.123 $\pm$ 0.056 & 0.123 $\pm$ 0.049 & \underline{0.121 $\pm$ 0.045} & 0.127 $\pm$ 0.058\\
&Train AUNBC&0.476 $\pm$ 0.012 & \underline{0.480 $\pm$ 0.012} & 0.476 $\pm$ 0.012 & 0.474 $\pm$ 0.011 & \textbf{0.481 $\pm$ 0.012} & 0.473 $\pm$ 0.012\\
&Test AUNBC&0.440 $\pm$ 0.107 & 0.437 $\pm$ 0.127 & 0.440 $\pm$ 0.107 & \underline{0.442 $\pm$ 0.111} & \textbf{0.448 $\pm$ 0.113} & 0.434 $\pm$ 0.118\\
&Size&\underline{2.4 (2-3)} & 3.0 (3-3) & \underline{2.4 (2-3)} & \textbf{1.9 (1-2)} & 3.0 (3-3) & \textbf{1.9 (1-2)}\\
\midrule
heartdisease&Train AUROC&\underline{0.926 $\pm$ 0.008} & \textbf{0.934 $\pm$ 0.010} & \underline{0.926 $\pm$ 0.008} & 0.916 $\pm$ 0.010 & 0.925 $\pm$ 0.014 & 0.916 $\pm$ 0.010\\
&Test AUROC&0.819 $\pm$ 0.086 & 0.837 $\pm$ 0.074 & 0.819 $\pm$ 0.086 & \underline{0.841 $\pm$ 0.077} & \textbf{0.850 $\pm$ 0.066} & \underline{0.841 $\pm$ 0.077}\\
&Train ECE&\textbf{0.000 $\pm$ 0.000} & \textbf{0.000 $\pm$ 0.000} & 0.058 $\pm$ 0.007 & \textbf{0.000 $\pm$ 0.000} & \textbf{0.000 $\pm$ 0.000} & 0.059 $\pm$ 0.012\\
&Test ECE&0.079 $\pm$ 0.033 & \underline{0.071 $\pm$ 0.051} & 0.138 $\pm$ 0.066 & 0.079 $\pm$ 0.052 & \textbf{0.066 $\pm$ 0.030} & 0.118 $\pm$ 0.066\\
&Train AUNBC&\underline{0.359 $\pm$ 0.011} & \textbf{0.366 $\pm$ 0.012} & \underline{0.359 $\pm$ 0.011} & 0.352 $\pm$ 0.008 & 0.357 $\pm$ 0.008 & 0.352 $\pm$ 0.008\\
&Test AUNBC&0.206 $\pm$ 0.145 & 0.228 $\pm$ 0.135 & 0.206 $\pm$ 0.145 & \underline{0.248 $\pm$ 0.128} & \textbf{0.263 $\pm$ 0.120} & \underline{0.248 $\pm$ 0.128}\\
&Size&\textbf{16.3 (12-22)} & 30.9 (29-32) & \textbf{16.3 (12-22)} & \underline{23.5 (20-26)} & 30.2 (28-31) & \underline{23.5 (20-26)}\\
\midrule
mammo&Train AUROC&0.859 $\pm$ 0.003 & \textbf{0.862 $\pm$ 0.004} & 0.859 $\pm$ 0.003 & 0.852 $\pm$ 0.006 & \underline{0.860 $\pm$ 0.004} & 0.852 $\pm$ 0.006\\
&Test AUROC&0.845 $\pm$ 0.031 & 0.845 $\pm$ 0.034 & 0.845 $\pm$ 0.031 & \underline{0.847 $\pm$ 0.033} & 0.845 $\pm$ 0.030 & \textbf{0.848 $\pm$ 0.032}\\
&Train ECE&\textbf{0.000 $\pm$ 0.000} & \textbf{0.000 $\pm$ 0.000} & 0.053 $\pm$ 0.012 & \textbf{0.000 $\pm$ 0.000} & \textbf{0.000 $\pm$ 0.000} & 0.058 $\pm$ 0.010\\
&Test ECE&0.078 $\pm$ 0.019 & \textbf{0.070 $\pm$ 0.018} & 0.093 $\pm$ 0.028 & 0.078 $\pm$ 0.020 & \underline{0.075 $\pm$ 0.023} & 0.100 $\pm$ 0.033\\
&Train AUNBC&0.262 $\pm$ 0.006 & \textbf{0.264 $\pm$ 0.006} & 0.262 $\pm$ 0.006 & 0.257 $\pm$ 0.006 & \underline{0.263 $\pm$ 0.006} & 0.256 $\pm$ 0.006\\
&Test AUNBC&\textbf{0.252 $\pm$ 0.052} & \underline{0.252 $\pm$ 0.054} & \textbf{0.252 $\pm$ 0.052} & 0.251 $\pm$ 0.052 & 0.250 $\pm$ 0.050 & 0.248 $\pm$ 0.054\\
&Size&\underline{6.5 (6-7)} & 13.7 (13-14) & \underline{6.5 (6-7)} & \textbf{4.6 (4-6)} & 13.3 (12-14) & \textbf{4.6 (4-6)}\\
\midrule
mushroom&Train AUROC&\textbf{1.000 $\pm$ 0.000} & \textbf{1.000 $\pm$ 0.000} & \textbf{1.000 $\pm$ 0.000} & \textbf{1.000 $\pm$ 0.000} & \textbf{1.000 $\pm$ 0.000} & \textbf{1.000 $\pm$ 0.000}\\
&Test AUROC&\textbf{1.000 $\pm$ 0.000} & \textbf{1.000 $\pm$ 0.000} & \textbf{1.000 $\pm$ 0.000} & \textbf{1.000 $\pm$ 0.000} & \textbf{1.000 $\pm$ 0.000} & \textbf{1.000 $\pm$ 0.000}\\
&Train ECE&\textbf{0.000 $\pm$ 0.000} & \textbf{0.000 $\pm$ 0.000} & \underline{0.048 $\pm$ 0.000} & \textbf{0.000 $\pm$ 0.000} & \textbf{0.000 $\pm$ 0.000} & \underline{0.048 $\pm$ 0.000}\\
&Test ECE&\textbf{0.000 $\pm$ 0.000} & \textbf{0.000 $\pm$ 0.000} & \underline{0.048 $\pm$ 0.002} & \textbf{0.000 $\pm$ 0.000} & \textbf{0.000 $\pm$ 0.000} & \underline{0.048 $\pm$ 0.002}\\
&Train AUNBC&\textbf{0.482 $\pm$ 0.002} & \textbf{0.482 $\pm$ 0.002} & \textbf{0.482 $\pm$ 0.002} & \textbf{0.482 $\pm$ 0.002} & \textbf{0.482 $\pm$ 0.002} & \textbf{0.482 $\pm$ 0.002}\\
&Test AUNBC&\textbf{0.482 $\pm$ 0.017} & \textbf{0.482 $\pm$ 0.017} & \textbf{0.482 $\pm$ 0.017} & \textbf{0.482 $\pm$ 0.017} & \textbf{0.482 $\pm$ 0.017} & \textbf{0.482 $\pm$ 0.017}\\
&Size&\underline{22.3 (19-43)} & 106.1 (53-113) & \underline{22.3 (19-43)} & 35.3 (30-43) & \textbf{19.8 (18-21)} & 35.3 (30-43)\\
\midrule
spambase&Train AUROC&0.954 $\pm$ 0.010 & \underline{0.961 $\pm$ 0.005} & 0.958 $\pm$ 0.009 & 0.917 $\pm$ 0.037 & \textbf{0.965 $\pm$ 0.002} & 0.917 $\pm$ 0.037\\
&Test AUROC&0.951 $\pm$ 0.017 & 0.953 $\pm$ 0.018 & \underline{0.956 $\pm$ 0.012} & 0.919 $\pm$ 0.038 & \textbf{0.959 $\pm$ 0.011} & 0.919 $\pm$ 0.038\\
&Train ECE&\textbf{0.000 $\pm$ 0.000} & \textbf{0.000 $\pm$ 0.000} & 0.083 $\pm$ 0.015 & \textbf{0.000 $\pm$ 0.000} & \textbf{0.000 $\pm$ 0.000} & 0.038 $\pm$ 0.008\\
&Test ECE&\textbf{0.011 $\pm$ 0.008} & \underline{0.015 $\pm$ 0.005} & 0.079 $\pm$ 0.017 & 0.028 $\pm$ 0.009 & 0.020 $\pm$ 0.006 & 0.050 $\pm$ 0.015\\
&Train AUNBC&0.298 $\pm$ 0.008 & \underline{0.312 $\pm$ 0.005} & 0.284 $\pm$ 0.010 & 0.249 $\pm$ 0.037 & \textbf{0.324 $\pm$ 0.003} & 0.249 $\pm$ 0.037\\
&Test AUNBC&0.295 $\pm$ 0.020 & \underline{0.300 $\pm$ 0.023} & 0.282 $\pm$ 0.017 & 0.249 $\pm$ 0.044 & \textbf{0.312 $\pm$ 0.019} & 0.249 $\pm$ 0.044\\
&Size&\underline{33.2 (28-37)} & 38.4 (33-44) & \underline{33.2 (28-37)} & \textbf{32.1 (23-39)} & 41.5 (37-48) & \textbf{32.1 (23-39)}\\

\end{longtable}
\end{landscape}
\section{Sensitivity Analysis of Threshold Gridding and Weighting Scheme}\label{app: sensitivity_analysis}
In this appendix, we investigate the sensitivity of RSS-DNB and RSS-DNB-SA to the threshold gridding and weighting scheme used in the optimization objective. All evaluation metrics are computed using the original evaluation protocol with uniformly spaced thresholds $p_i=\frac{i}{10}$, $i=0,1,\ldots,9$. Therefore, any performance differences depend on optimization setting, rather than evaluation criteria. 

We consider five experiment settings, as shown in \Cref{tab:sensitivity_settings}. The first four settings investigate the effect of threshold by using different threshold grids with equal weights. The last two settings investigate the effect of the weighting scheme using the same randomly generated threshold grid. 

\begin{table}[htbp]
    \centering
    \renewcommand{\arraystretch}{1.2}
    \caption{Experiment settings for the sensitivity analysis}
    \label{tab:sensitivity_settings}
    \begin{tabular}{ccc}
    \toprule
    Setting & Thresholds & Weights\\
    \midrule
    Original & $p_i = \frac{i}{10}$ & Equal \\
    Fine     & $p_i = \frac{i}{20}$ & Equal \\
    Coarse   & $p_i = \frac{i}{5}$  & Equal \\
    Random-EW   & Fixed random thresholds & Equal \\
    Random-LW   & Same random thresholds & $\omega_i=p_{i+1}-p_i$\\
    \bottomrule
    \end{tabular}
\end{table}
For the random threshold setting, a single threshold grid was generated once and shared across all datasets. The thresholds are $\{0,  \,  0.0857,  \,  0.1102, \,   0.1325,  \,  0.2292,  \,  0.4491,  \,  0.5409,  \,  0.5727,   \, 0.5778,  \,  0.6975\}$.

\Cref{tab:sensitivity_rss-dnb,tab:sensitivity_rss-dnb-sa} summarizes the performance of RSS-DNB and RSS-DNB-SA under different experiment settings, respectively. As shown in tables, except haberman dataset, where the uniformly spaced threshold grids appear to yield slightly better performance than the randomly generated threshold grids, the evaluation metrics remain very similar across different threshold gridding and weighting scheme on all other datasets. Moreover, we can observed that a finer threshold gridding tends to produce a slightly better performance than a coarser one. This is intuitively reasonable, since we have to optimize net benefits at a larger number of decision thresholds under a finer threshold grid. Nevertheless, the performance differences among these settings are consistently small.

Overall, the results suggests that both RSS-DNB and RSS-DNB-SA are reasonably robust to choice of threshold gridding and weighting scheme, and the conclusion presented in the \Cref{sec:experiments} do not highly depend on the particular optimization settings.

\begin{landscape}
\footnotesize
\setlength{\tabcolsep}{4pt}
\renewcommand{\arraystretch}{1.1}
\newcolumntype{C}[1]{>{\centering\arraybackslash}p{#1}}
\newcolumntype{R}[1]{>{\raggedleft\arraybackslash}p{#1}}
\begin{longtable}{C{2cm}R{2.5cm}C{2.5cm}C{2.5cm}C{2.5cm}C{2.5cm}C{2.5cm}}
\multicolumn{7}{l}{\textbf{Table \thetable\ :}  Performance of RSS-DNB under different experiment settings.} \label{tab:sensitivity_rss-dnb} \\
\toprule
Dataset & Metric & Original & Fine & Coarse & Random-EW & Random-LW  \\
\midrule
\endfirsthead

\multicolumn{7}{l}{\textbf{Table \thetable\ :}  continued} \\
\toprule
Dataset & Metric & Original & Fine & Coarse & Random-EW & Random-LW \\
\midrule
\endhead

\bottomrule
\multicolumn{7}{l}{\footnotesize Notes: All values are reported as mean $\pm$ standard deviation over 10-fold cross-validation. Size is reported as mean (minimum--maximum) across the 10 folds.} \\
\multicolumn{7}{l}{\footnotesize The best and second-best results are highlighted in bold and underline, respectively.} \\
\endlastfoot
adult&Train AUROC&\textbf{0.881 $\pm$ 0.002} & 0.840 $\pm$ 0.057 & 0.861 $\pm$ 0.003 & 0.876 $\pm$ 0.002 & \underline{0.876 $\pm$ 0.002}\\
&Test AUROC&\textbf{0.880 $\pm$ 0.007} & 0.838 $\pm$ 0.059 & 0.859 $\pm$ 0.008 & 0.875 $\pm$ 0.007 & \underline{0.875 $\pm$ 0.008}\\
&Train ECE&\textbf{0.000 $\pm$ 0.000} & \textbf{0.000 $\pm$ 0.000} & \textbf{0.000 $\pm$ 0.000} & \textbf{0.000 $\pm$ 0.000} & \textbf{0.000 $\pm$ 0.000}\\
&Test ECE&0.013 $\pm$ 0.003 & \textbf{0.009 $\pm$ 0.005} & \underline{0.010 $\pm$ 0.003} & 0.010 $\pm$ 0.004 & 0.011 $\pm$ 0.004\\
&Train AUNBC&\textbf{0.102 $\pm$ 0.000} & 0.091 $\pm$ 0.016 & \underline{0.101 $\pm$ 0.001} & 0.099 $\pm$ 0.001 & 0.099 $\pm$ 0.001\\
&Test AUNBC&\textbf{0.102 $\pm$ 0.003} & 0.090 $\pm$ 0.015 & \underline{0.100 $\pm$ 0.003} & 0.099 $\pm$ 0.004 & 0.099 $\pm$ 0.004\\
&Size&22.2 (19-24) & \textbf{14.2 (2-24)} & 23.0 (20-25) & \underline{22.0 (19-24)} & \underline{22.0 (19-24)}\\
\midrule
bankruptcy&Train AUROC&\textbf{1.000 $\pm$ 0.000} & \textbf{1.000 $\pm$ 0.000} & \textbf{1.000 $\pm$ 0.000} & \underline{0.999 $\pm$ 0.003} & 0.996 $\pm$ 0.007\\
&Test AUROC&\textbf{0.997 $\pm$ 0.011} & 0.987 $\pm$ 0.032 & 0.987 $\pm$ 0.032 & \underline{0.990 $\pm$ 0.032} & 0.981 $\pm$ 0.034\\
&Train ECE&\textbf{0.000 $\pm$ 0.000} & \textbf{0.000 $\pm$ 0.000} & \textbf{0.000 $\pm$ 0.000} & \textbf{0.000 $\pm$ 0.000} & \textbf{0.000 $\pm$ 0.000}\\
&Test ECE&0.004 $\pm$ 0.013 & 0.004 $\pm$ 0.013 & 0.004 $\pm$ 0.013 & \underline{0.002 $\pm$ 0.004} & \textbf{0.000 $\pm$ 0.000}\\
&Train AUNBC&\textbf{0.572 $\pm$ 0.016} & \textbf{0.572 $\pm$ 0.016} & \textbf{0.572 $\pm$ 0.016} & \underline{0.567 $\pm$ 0.017} & 0.563 $\pm$ 0.018\\
&Test AUNBC&\textbf{0.568 $\pm$ 0.147} & \underline{0.561 $\pm$ 0.135} & \underline{0.561 $\pm$ 0.135} & 0.556 $\pm$ 0.152 & 0.541 $\pm$ 0.157\\
&Size&2.9 (2-3) & 3.4 (3-5) & 3.4 (3-5) & \underline{1.9 (1-2)} & \textbf{1.7 (1-2)}\\
\midrule
breastcancer&Train AUROC&\underline{0.992 $\pm$ 0.002} & \textbf{0.996 $\pm$ 0.002} & 0.990 $\pm$ 0.003 & 0.988 $\pm$ 0.003 & 0.988 $\pm$ 0.003\\
&Test AUROC&\underline{0.985 $\pm$ 0.015} & \textbf{0.988 $\pm$ 0.014} & 0.970 $\pm$ 0.028 & 0.971 $\pm$ 0.027 & 0.965 $\pm$ 0.029\\
&Train ECE&\textbf{0.000 $\pm$ 0.000} & \textbf{0.000 $\pm$ 0.000} & \textbf{0.000 $\pm$ 0.000} & \textbf{0.000 $\pm$ 0.000} & \textbf{0.000 $\pm$ 0.000}\\
&Test ECE&0.022 $\pm$ 0.010 & 0.023 $\pm$ 0.010 & \textbf{0.018 $\pm$ 0.017} & 0.018 $\pm$ 0.015 & \textbf{0.018 $\pm$ 0.012}\\
&Train AUNBC&\underline{0.325 $\pm$ 0.006} & 0.324 $\pm$ 0.006 & \textbf{0.326 $\pm$ 0.006} & 0.318 $\pm$ 0.007 & 0.316 $\pm$ 0.006\\
&Test AUNBC&\underline{0.305 $\pm$ 0.057} & \textbf{0.312 $\pm$ 0.054} & 0.300 $\pm$ 0.056 & 0.294 $\pm$ 0.066 & 0.282 $\pm$ 0.072\\
&Size&6.5 (5-7) & 8.5 (8-9) & 7.7 (6-9) & \underline{3.2 (3-4)} & \textbf{3.1 (2-4)}\\
\midrule
haberman&Train AUROC&0.733 $\pm$ 0.015 & \underline{0.734 $\pm$ 0.010} & \textbf{0.736 $\pm$ 0.008} & 0.683 $\pm$ 0.017 & 0.683 $\pm$ 0.017\\
&Test AUROC&0.662 $\pm$ 0.119 & \underline{0.666 $\pm$ 0.125} & \textbf{0.682 $\pm$ 0.108} & 0.636 $\pm$ 0.089 & 0.636 $\pm$ 0.089\\
&Train ECE&\textbf{0.000 $\pm$ 0.000} & \textbf{0.000 $\pm$ 0.000} & \textbf{0.000 $\pm$ 0.000} & \textbf{0.000 $\pm$ 0.000} & \textbf{0.000 $\pm$ 0.000}\\
&Test ECE&0.118 $\pm$ 0.061 & 0.142 $\pm$ 0.042 & \underline{0.117 $\pm$ 0.040} & \textbf{0.094 $\pm$ 0.041} & \textbf{0.094 $\pm$ 0.041}\\
&Train AUNBC&\underline{0.476 $\pm$ 0.012} & 0.475 $\pm$ 0.012 & \textbf{0.479 $\pm$ 0.012} & 0.461 $\pm$ 0.011 & 0.461 $\pm$ 0.011\\
&Test AUNBC&0.440 $\pm$ 0.107 & \underline{0.444 $\pm$ 0.096} & \textbf{0.452 $\pm$ 0.113} & 0.443 $\pm$ 0.093 & 0.443 $\pm$ 0.093\\
&Size&2.4 (2-3) & \underline{2.2 (2-3)} & 2.9 (2-3) & \textbf{1.0 (1-1)} & \textbf{1.0 (1-1)}\\
\midrule
heartdisease&Train AUROC&0.926 $\pm$ 0.008 & \textbf{0.940 $\pm$ 0.005} & 0.928 $\pm$ 0.014 & \underline{0.930 $\pm$ 0.010} & \underline{0.930 $\pm$ 0.007}\\
&Test AUROC&0.819 $\pm$ 0.086 & 0.863 $\pm$ 0.076 & 0.847 $\pm$ 0.073 & \textbf{0.868 $\pm$ 0.075} & \underline{0.865 $\pm$ 0.083}\\
&Train ECE&\textbf{0.000 $\pm$ 0.000} & \textbf{0.000 $\pm$ 0.000} & \textbf{0.000 $\pm$ 0.000} & \textbf{0.000 $\pm$ 0.000} & \textbf{0.000 $\pm$ 0.000}\\
&Test ECE&0.079 $\pm$ 0.033 & 0.074 $\pm$ 0.039 & \underline{0.070 $\pm$ 0.039} & \textbf{0.064 $\pm$ 0.032} & 0.081 $\pm$ 0.040\\
&Train AUNBC&\underline{0.359 $\pm$ 0.011} & 0.357 $\pm$ 0.014 & \textbf{0.366 $\pm$ 0.015} & 0.348 $\pm$ 0.015 & 0.345 $\pm$ 0.012\\
&Test AUNBC&0.206 $\pm$ 0.145 & \textbf{0.258 $\pm$ 0.131} & \underline{0.240 $\pm$ 0.155} & 0.236 $\pm$ 0.177 & 0.239 $\pm$ 0.160\\
&Size&16.3 (12-22) & 20.6 (15-26) & 19.4 (16-25) & \underline{12.5 (10-17)} & \textbf{11.8 (10-14)}\\
\midrule
mammo&Train AUROC&\underline{0.859 $\pm$ 0.003} & \textbf{0.863 $\pm$ 0.004} & 0.857 $\pm$ 0.004 & 0.851 $\pm$ 0.003 & 0.847 $\pm$ 0.003\\
&Test AUROC&\underline{0.845 $\pm$ 0.031} & \textbf{0.845 $\pm$ 0.029} & 0.835 $\pm$ 0.034 & 0.841 $\pm$ 0.031 & 0.842 $\pm$ 0.039\\
&Train ECE&\textbf{0.000 $\pm$ 0.000} & \textbf{0.000 $\pm$ 0.000} & \textbf{0.000 $\pm$ 0.000} & \textbf{0.000 $\pm$ 0.000} & \textbf{0.000 $\pm$ 0.000}\\
&Test ECE&0.078 $\pm$ 0.019 & 0.073 $\pm$ 0.021 & \textbf{0.062 $\pm$ 0.020} & \underline{0.068 $\pm$ 0.020} & 0.074 $\pm$ 0.021\\
&Train AUNBC&0.262 $\pm$ 0.006 & \textbf{0.263 $\pm$ 0.006} & \underline{0.263 $\pm$ 0.006} & 0.257 $\pm$ 0.006 & 0.254 $\pm$ 0.006\\
&Test AUNBC&\textbf{0.252 $\pm$ 0.052} & \underline{0.250 $\pm$ 0.050} & 0.247 $\pm$ 0.055 & 0.248 $\pm$ 0.055 & 0.249 $\pm$ 0.054\\
&Size&6.5 (6-7) & 11.5 (11-12) & 11.0 (10-12) & \underline{4.3 (4-6)} & \textbf{3.2 (3-4)}\\
\midrule
mushroom&Train AUROC&\textbf{1.000 $\pm$ 0.000} & \textbf{1.000 $\pm$ 0.000} & \textbf{1.000 $\pm$ 0.000} & \textbf{1.000 $\pm$ 0.000} & \textbf{1.000 $\pm$ 0.000}\\
&Test AUROC&\textbf{1.000 $\pm$ 0.000} & \textbf{1.000 $\pm$ 0.000} & \textbf{1.000 $\pm$ 0.000} & \textbf{1.000 $\pm$ 0.000} & \textbf{1.000 $\pm$ 0.000}\\
&Train ECE&\textbf{0.000 $\pm$ 0.000} & \textbf{0.000 $\pm$ 0.000} & \textbf{0.000 $\pm$ 0.000} & \textbf{0.000 $\pm$ 0.000} & \textbf{0.000 $\pm$ 0.000}\\
&Test ECE&\textbf{0.000 $\pm$ 0.000} & \textbf{0.000 $\pm$ 0.000} & \textbf{0.000 $\pm$ 0.000} & \textbf{0.000 $\pm$ 0.000} & \textbf{0.000 $\pm$ 0.000}\\
&Train AUNBC&\textbf{0.482 $\pm$ 0.002} & \textbf{0.482 $\pm$ 0.002} & \textbf{0.482 $\pm$ 0.002} & \textbf{0.482 $\pm$ 0.002} & \textbf{0.482 $\pm$ 0.002}\\
&Test AUNBC&\textbf{0.482 $\pm$ 0.017} & \textbf{0.482 $\pm$ 0.017} & \textbf{0.482 $\pm$ 0.017} & \textbf{0.482 $\pm$ 0.017} & \textbf{0.482 $\pm$ 0.017}\\
&Size&22.3 (19-43) & 12.6 (7-15) & \underline{8.3 (7-13)} & \textbf{7.0 (7-7)} & \textbf{7.0 (7-7)}\\
\midrule
spambase&Train AUROC&\textbf{0.954 $\pm$ 0.010} & \underline{0.953 $\pm$ 0.020} & 0.866 $\pm$ 0.139 & 0.945 $\pm$ 0.011 & 0.936 $\pm$ 0.017\\
&Test AUROC&\textbf{0.951 $\pm$ 0.017} & \underline{0.949 $\pm$ 0.016} & 0.867 $\pm$ 0.139 & 0.938 $\pm$ 0.018 & 0.934 $\pm$ 0.016\\
&Train ECE&\textbf{0.000 $\pm$ 0.000} & \textbf{0.000 $\pm$ 0.000} & \textbf{0.000 $\pm$ 0.000} & \textbf{0.000 $\pm$ 0.000} & \textbf{0.000 $\pm$ 0.000}\\
&Test ECE&0.011 $\pm$ 0.008 & \textbf{0.010 $\pm$ 0.007} & 0.012 $\pm$ 0.008 & \underline{0.011 $\pm$ 0.006} & 0.018 $\pm$ 0.005\\
&Train AUNBC&\underline{0.298 $\pm$ 0.008} & \underline{0.298 $\pm$ 0.013} & 0.262 $\pm$ 0.068 & \textbf{0.301 $\pm$ 0.008} & 0.284 $\pm$ 0.014\\
&Test AUNBC&\textbf{0.295 $\pm$ 0.020} & \underline{0.293 $\pm$ 0.016} & 0.264 $\pm$ 0.071 & 0.289 $\pm$ 0.025 & 0.277 $\pm$ 0.020\\
&Size&\textbf{33.2 (28-37)} & \underline{33.6 (29-38)} & 35.5 (31-44) & 34.3 (26-41) & 37.1 (32-50)\\

\end{longtable}
\end{landscape}

\begin{landscape}
\footnotesize
\setlength{\tabcolsep}{4pt}
\renewcommand{\arraystretch}{1.1}
\newcolumntype{C}[1]{>{\centering\arraybackslash}p{#1}}
\newcolumntype{R}[1]{>{\raggedleft\arraybackslash}p{#1}}
\begin{longtable}{C{2cm}R{2.5cm}C{2.5cm}C{2.5cm}C{2.5cm}C{2.5cm}C{2.5cm}}
\multicolumn{7}{l}{\textbf{Table \thetable\ :}  Performance of RSS-DNB-SA under different experiment settings.} \label{tab:sensitivity_rss-dnb-sa} \\
\toprule
Dataset & Metric & Original & Fine & Coarse & Random-EW & Random-LW  \\
\midrule
\endfirsthead

\multicolumn{7}{l}{\textbf{Table \thetable\ :}  continued} \\
\toprule
Dataset & Metric & Original & Fine & Coarse & Random-EW & Random-LW \\
\midrule
\endhead

\bottomrule
\multicolumn{7}{l}{\footnotesize Notes: All values are reported as mean $\pm$ standard deviation over 10-fold cross-validation. Size is reported as mean (minimum--maximum) across the 10 folds.} \\
\multicolumn{7}{l}{\footnotesize The best and second-best results are highlighted in bold and underline, respectively.} \\
\endlastfoot
adult&Train AUROC&0.872 $\pm$ 0.003 & \textbf{0.880 $\pm$ 0.002} & 0.855 $\pm$ 0.005 & \underline{0.873 $\pm$ 0.003} & 0.872 $\pm$ 0.002\\
&Test AUROC&0.872 $\pm$ 0.008 & \textbf{0.879 $\pm$ 0.006} & 0.854 $\pm$ 0.007 & \underline{0.872 $\pm$ 0.008} & 0.869 $\pm$ 0.008\\
&Train ECE&\textbf{0.000 $\pm$ 0.000} & \textbf{0.000 $\pm$ 0.000} & \textbf{0.000 $\pm$ 0.000} & \textbf{0.000 $\pm$ 0.000} & \textbf{0.000 $\pm$ 0.000}\\
&Test ECE&0.014 $\pm$ 0.004 & 0.013 $\pm$ 0.003 & \textbf{0.009 $\pm$ 0.003} & 0.013 $\pm$ 0.004 & \underline{0.012 $\pm$ 0.003}\\
&Train AUNBC&\underline{0.101 $\pm$ 0.001} & \textbf{0.101 $\pm$ 0.001} & 0.099 $\pm$ 0.001 & 0.099 $\pm$ 0.001 & 0.098 $\pm$ 0.001\\
&Test AUNBC&\textbf{0.100 $\pm$ 0.003} & \textbf{0.100 $\pm$ 0.004} & 0.098 $\pm$ 0.003 & 0.098 $\pm$ 0.003 & 0.097 $\pm$ 0.003\\
&Size&18.6 (15-22) & \textbf{17.7 (14-24)} & 18.7 (13-23) & 19.5 (12-25) & \underline{18.4 (15-21)}\\
\midrule
bankruptcy&Train AUROC&\textbf{1.000 $\pm$ 0.000} & \textbf{1.000 $\pm$ 0.000} & 0.999 $\pm$ 0.001 & 0.999 $\pm$ 0.003 & 0.996 $\pm$ 0.007\\
&Test AUROC&\underline{0.997 $\pm$ 0.011} & \textbf{0.999 $\pm$ 0.002} & \underline{0.997 $\pm$ 0.011} & 0.990 $\pm$ 0.032 & 0.981 $\pm$ 0.034\\
&Train ECE&\textbf{0.000 $\pm$ 0.000} & \textbf{0.000 $\pm$ 0.000} & \textbf{0.000 $\pm$ 0.000} & \textbf{0.000 $\pm$ 0.000} & \textbf{0.000 $\pm$ 0.000}\\
&Test ECE&0.006 $\pm$ 0.013 & 0.007 $\pm$ 0.011 & 0.010 $\pm$ 0.014 & \underline{0.002 $\pm$ 0.004} & \textbf{0.000 $\pm$ 0.000}\\
&Train AUNBC&\textbf{0.569 $\pm$ 0.016} & 0.567 $\pm$ 0.016 & \underline{0.568 $\pm$ 0.015} & 0.567 $\pm$ 0.017 & 0.563 $\pm$ 0.018\\
&Test AUNBC&\textbf{0.568 $\pm$ 0.147} & \underline{0.565 $\pm$ 0.148} & 0.564 $\pm$ 0.150 & 0.556 $\pm$ 0.152 & 0.541 $\pm$ 0.157\\
&Size&2.1 (2-3) & \underline{2.0 (2-2)} & 2.2 (2-3) & \underline{2.0 (1-3)} & \textbf{1.9 (1-3)}\\
\midrule
breastcancer&Train AUROC&\underline{0.990 $\pm$ 0.002} & \textbf{0.994 $\pm$ 0.002} & 0.987 $\pm$ 0.003 & 0.987 $\pm$ 0.003 & 0.986 $\pm$ 0.003\\
&Test AUROC&0.972 $\pm$ 0.030 & \textbf{0.986 $\pm$ 0.014} & 0.972 $\pm$ 0.031 & \underline{0.973 $\pm$ 0.025} & 0.971 $\pm$ 0.029\\
&Train ECE&\textbf{0.000 $\pm$ 0.000} & \textbf{0.000 $\pm$ 0.000} & \textbf{0.000 $\pm$ 0.000} & \textbf{0.000 $\pm$ 0.000} & \textbf{0.000 $\pm$ 0.000}\\
&Test ECE&0.022 $\pm$ 0.013 & 0.022 $\pm$ 0.012 & 0.019 $\pm$ 0.014 & \underline{0.017 $\pm$ 0.013} & \textbf{0.014 $\pm$ 0.013}\\
&Train AUNBC&\underline{0.320 $\pm$ 0.007} & \textbf{0.320 $\pm$ 0.007} & 0.319 $\pm$ 0.006 & 0.317 $\pm$ 0.006 & 0.316 $\pm$ 0.006\\
&Test AUNBC&\underline{0.298 $\pm$ 0.066} & \textbf{0.305 $\pm$ 0.056} & 0.297 $\pm$ 0.059 & 0.296 $\pm$ 0.065 & 0.290 $\pm$ 0.072\\
&Size&\underline{3.5 (3-5)} & 3.9 (3-5) & \textbf{3.4 (3-4)} & 3.6 (3-4) & \textbf{3.4 (3-4)}\\
\midrule
haberman&Train AUROC&\underline{0.728 $\pm$ 0.011} & \textbf{0.735 $\pm$ 0.008} & 0.696 $\pm$ 0.011 & 0.683 $\pm$ 0.017 & 0.683 $\pm$ 0.017\\
&Test AUROC&\textbf{0.667 $\pm$ 0.138} & 0.654 $\pm$ 0.123 & \underline{0.665 $\pm$ 0.100} & 0.639 $\pm$ 0.092 & 0.639 $\pm$ 0.092\\
&Train ECE&\textbf{0.000 $\pm$ 0.000} & \textbf{0.000 $\pm$ 0.000} & \textbf{0.000 $\pm$ 0.000} & \textbf{0.000 $\pm$ 0.000} & \textbf{0.000 $\pm$ 0.000}\\
&Test ECE&0.123 $\pm$ 0.049 & 0.133 $\pm$ 0.045 & \underline{0.111 $\pm$ 0.039} & \textbf{0.096 $\pm$ 0.039} & \textbf{0.096 $\pm$ 0.039}\\
&Train AUNBC&\underline{0.474 $\pm$ 0.011} & \textbf{0.475 $\pm$ 0.011} & 0.462 $\pm$ 0.011 & 0.461 $\pm$ 0.011 & 0.461 $\pm$ 0.011\\
&Test AUNBC&0.442 $\pm$ 0.111 & 0.438 $\pm$ 0.107 & \textbf{0.448 $\pm$ 0.095} & \underline{0.442 $\pm$ 0.093} & \underline{0.442 $\pm$ 0.093}\\
&Size&\underline{1.9 (1-2)} & 2.0 (2-2) & \textbf{1.0 (1-1)} & \textbf{1.0 (1-1)} & \textbf{1.0 (1-1)}\\
\midrule
heartdisease&Train AUROC&0.916 $\pm$ 0.010 & \textbf{0.930 $\pm$ 0.005} & 0.921 $\pm$ 0.010 & \underline{0.926 $\pm$ 0.009} & 0.924 $\pm$ 0.006\\
&Test AUROC&0.841 $\pm$ 0.077 & \textbf{0.875 $\pm$ 0.072} & \underline{0.866 $\pm$ 0.063} & 0.842 $\pm$ 0.052 & 0.854 $\pm$ 0.071\\
&Train ECE&\textbf{0.000 $\pm$ 0.000} & \textbf{0.000 $\pm$ 0.000} & \textbf{0.000 $\pm$ 0.000} & \textbf{0.000 $\pm$ 0.000} & \textbf{0.000 $\pm$ 0.000}\\
&Test ECE&0.079 $\pm$ 0.052 & 0.089 $\pm$ 0.026 & \textbf{0.063 $\pm$ 0.023} & \underline{0.070 $\pm$ 0.028} & 0.083 $\pm$ 0.030\\
&Train AUNBC&0.352 $\pm$ 0.008 & \textbf{0.355 $\pm$ 0.010} & \underline{0.354 $\pm$ 0.011} & 0.347 $\pm$ 0.014 & 0.338 $\pm$ 0.012\\
&Test AUNBC&0.248 $\pm$ 0.128 & \textbf{0.300 $\pm$ 0.114} & \underline{0.272 $\pm$ 0.101} & 0.237 $\pm$ 0.102 & 0.243 $\pm$ 0.105\\
&Size&\textbf{23.5 (20-26)} & 24.0 (22-28) & \textbf{23.5 (18-28)} & \textbf{23.5 (20-27)} & \underline{23.9 (21-28)}\\
\midrule
mammo&Train AUROC&\underline{0.852 $\pm$ 0.006} & \textbf{0.854 $\pm$ 0.004} & 0.845 $\pm$ 0.004 & 0.851 $\pm$ 0.004 & 0.849 $\pm$ 0.005\\
&Test AUROC&\textbf{0.847 $\pm$ 0.033} & \underline{0.846 $\pm$ 0.029} & 0.840 $\pm$ 0.031 & 0.835 $\pm$ 0.034 & 0.844 $\pm$ 0.036\\
&Train ECE&\textbf{0.000 $\pm$ 0.000} & \textbf{0.000 $\pm$ 0.000} & \textbf{0.000 $\pm$ 0.000} & \textbf{0.000 $\pm$ 0.000} & \textbf{0.000 $\pm$ 0.000}\\
&Test ECE&0.078 $\pm$ 0.020 & 0.078 $\pm$ 0.015 & \textbf{0.059 $\pm$ 0.024} & \underline{0.064 $\pm$ 0.023} & 0.071 $\pm$ 0.019\\
&Train AUNBC&\underline{0.257 $\pm$ 0.006} & \textbf{0.258 $\pm$ 0.006} & 0.255 $\pm$ 0.006 & 0.257 $\pm$ 0.006 & 0.255 $\pm$ 0.006\\
&Test AUNBC&\textbf{0.251 $\pm$ 0.052} & 0.248 $\pm$ 0.047 & \underline{0.251 $\pm$ 0.048} & 0.244 $\pm$ 0.053 & \underline{0.251 $\pm$ 0.054}\\
&Size&4.6 (4-6) & 5.0 (5-5) & \textbf{3.4 (3-5)} & 4.6 (4-6) & \underline{3.6 (3-5)}\\
\midrule
mushroom&Train AUROC&\textbf{1.000 $\pm$ 0.000} & \textbf{1.000 $\pm$ 0.000} & \textbf{1.000 $\pm$ 0.000} & \textbf{1.000 $\pm$ 0.000} & \textbf{1.000 $\pm$ 0.000}\\
&Test AUROC&\textbf{1.000 $\pm$ 0.000} & \textbf{1.000 $\pm$ 0.000} & \textbf{1.000 $\pm$ 0.000} & \textbf{1.000 $\pm$ 0.000} & \textbf{1.000 $\pm$ 0.000}\\
&Train ECE&\textbf{0.000 $\pm$ 0.000} & \textbf{0.000 $\pm$ 0.000} & \textbf{0.000 $\pm$ 0.000} & \textbf{0.000 $\pm$ 0.000} & \textbf{0.000 $\pm$ 0.000}\\
&Test ECE&\textbf{0.000 $\pm$ 0.000} & \textbf{0.000 $\pm$ 0.000} & \textbf{0.000 $\pm$ 0.000} & \textbf{0.000 $\pm$ 0.000} & \textbf{0.000 $\pm$ 0.000}\\
&Train AUNBC&\textbf{0.482 $\pm$ 0.002} & \textbf{0.482 $\pm$ 0.002} & \textbf{0.482 $\pm$ 0.002} & \textbf{0.482 $\pm$ 0.002} & \textbf{0.482 $\pm$ 0.002}\\
&Test AUNBC&\textbf{0.482 $\pm$ 0.017} & \textbf{0.482 $\pm$ 0.017} & \textbf{0.482 $\pm$ 0.017} & \textbf{0.482 $\pm$ 0.017} & \textbf{0.482 $\pm$ 0.017}\\
&Size&\underline{35.3 (30-43)} & \underline{35.3 (30-43)} & \underline{35.3 (30-43)} & \textbf{35.2 (30-43)} & \textbf{35.2 (30-43)}\\
\midrule
spambase&Train AUROC&\textbf{0.917 $\pm$ 0.037} & 0.863 $\pm$ 0.053 & \underline{0.906 $\pm$ 0.060} & 0.895 $\pm$ 0.062 & 0.893 $\pm$ 0.055\\
&Test AUROC&\textbf{0.919 $\pm$ 0.038} & 0.851 $\pm$ 0.058 & \underline{0.898 $\pm$ 0.066} & 0.885 $\pm$ 0.068 & 0.886 $\pm$ 0.046\\
&Train ECE&\textbf{0.000 $\pm$ 0.000} & \textbf{0.000 $\pm$ 0.000} & \textbf{0.000 $\pm$ 0.000} & \textbf{0.000 $\pm$ 0.000} & \textbf{0.000 $\pm$ 0.000}\\
&Test ECE&0.028 $\pm$ 0.009 & 0.025 $\pm$ 0.008 & \underline{0.017 $\pm$ 0.007} & 0.018 $\pm$ 0.010 & \textbf{0.015 $\pm$ 0.006}\\
&Train AUNBC&\underline{0.249 $\pm$ 0.037} & 0.219 $\pm$ 0.036 & \textbf{0.262 $\pm$ 0.054} & 0.245 $\pm$ 0.045 & 0.245 $\pm$ 0.045\\
&Test AUNBC&\underline{0.249 $\pm$ 0.044} & 0.210 $\pm$ 0.041 & \textbf{0.255 $\pm$ 0.064} & 0.238 $\pm$ 0.054 & 0.236 $\pm$ 0.041\\
&Size&32.1 (23-39) & \textbf{20.6 (9-48)} & 30.9 (13-40) & \underline{27.8 (9-39)} & 29.6 (9-41)\\

\end{longtable}
\end{landscape}
\section{Statistical Analysis}\label{app:statistical_analysis}
To assess the statistical significance of the differences among the compared methods, we conducted the Friedman test followed by the Nemenyi test across all datasets in Section 3. The analysis was performed separately for AUROC, AUNBC, ECE, and model size. The corresponding Friedman test statistics, average ranks, and pairwise comparison results from the Nemenyi test are reported in the following sections.
\subsection{Friedman test results}
The Friedman test was first applied to determine whether statistically significant differences existed among the compared methods for each evaluation metric. The null hypothesis assumes that all methods have equivalent performance. The chi-square statistics, degree of freedom  and corresponding p-values are summarized in \Cref{tab:friedman}.
\begin{table}[htbp]
    \centering
    \caption{Friedman test results}
    \label{tab:friedman}
    \begin{tabular}{crrr}
    \toprule
    Metric & $\chi^2$ & df & p-value  \\
    \midrule
    AUROC  & 29.00 & 7 & $<0.001$  \\
    AUNBC  & 18.36 & 7 & $0.010$ \\
    ECE    & 25.93 & 7 & $<0.001$ \\
    Size   & 17.46 & 5 & $0.004$ \\
    \bottomrule
    \end{tabular}
\end{table}
\subsection{Nemenyi Test Results}
The Nemenyi test was conducted to identify statistically significant differences between individual pairs of methods. The average ranks are summarized in \Cref{tab:ranks}. The pairwise p-values for AUROC, AUNBC, ECE, and model size are reported in \Cref{tab:nemenyi_auroc,tab:nemenyi_aunbc,tab:nemenyi_ece,tab:nemenyi_size}.
\begin{table}[htbp]
    \centering
    \caption{Average ranks for AUROC, AUNBC, ECE, and, model size. Lower average ranks indicate better performance.}
    \label{tab:ranks}
    \begin{tabular}{crrrr}
    \toprule
    Method & AUROC & AUNBC &ECE &Model Size\\
    \midrule
    RSS-DNB & 4.8125 &3.9375 &2.5625 & 3.0625\\
    RSS-DNB-SA     & 4.9375 &4.3125&3.3750  &2.1250\\
    Logistic     &2.8125 &2.8750 &5.2500&5.8125\\
    LASSO &2.9375  &4.0000 &6.6250 &3.3750\\
    Decision tree &6.8125 &6.3125 &3.0000 &-\\
    SLIM &6.6875 &6.6250 & 3.3125&3.1250\\
    RISKSLIM &4.3125  & 4.8125&5.2500  &3.5000\\
    XGBoost &2.6875 &3.1250  &6.6250 &-\\
    \bottomrule
    \end{tabular}
\end{table}
\begin{table}[htbp]
    \centering
    \caption{Nemenyi test for AUROC}
    \label{tab:nemenyi_auroc}
    \begin{tabular}{crrrrrrr}
    \toprule
    & RSS-DNB & RSS-DNB-SA & Logistic & LASSO& Decision tree& SLIM  & RISKSLIM\\
    \midrule
    RSS-DNB-SA &  1.000   &-&-&-&-&-&-\\
    Logistic   &0.730  &0.664 &-&-&-&-&-\\
    LASSO & 0.791 &0.730 &1.000 &-&-&-&-\\
    Decision tree &0.730  & 0.791    &  0.024  &  0.033 &-&-&-\\
    SLIM &0.791 &  0.844  &    0.033 &   0.046& 1.000&-&-\\
    RISKSLIM & 1.000   &1.000    &  0.925  &  0.952& 0.454  &       0.524&-\\
    XGBoost & 0.664 &  0.595    &  1.000   & 1.000& 0.017    &     0.024 &0.889\\
    \bottomrule
    \end{tabular}
\end{table}

\begin{table}[htbp]
    \centering
    \caption{Nemenyi test for AUNBC}
    \label{tab:nemenyi_aunbc}
    \begin{tabular}{crrrrrrr}
    \toprule
    & RSS-DNB & RSS-DNB-SA & Logistic & LASSO& Decision tree& SLIM  & RISKSLIM\\
    \midrule
    RSS-DNB-SA     & 1.000 &-&-&-&-&-&-\\
    Logistic     & 0.989  & 0.939 &-&-&-&-&-\\
    LASSO & 1.000 &  1.000  &    0.984 &-&-&-&-\\
    Decision tree & 0.524 &  0.730   &   0.093 &   0.559 &-&-&-\\
    SLIM & 0.355 &  0.559   &   0.046 &   0.387 &1.000  &-&-\\
    RISKSLIM &0.997&   1.000  &    0.761  &  0.998& 0.925   &      0.818&-\\
    XGBoost & 0.998 &  0.979    &  1.000  &  0.997& 0.155    &     0.081& 0.868\\
    \bottomrule
    \end{tabular}
\end{table}

\begin{table}[htbp]
    \centering
    \caption{Nemenyi test for ECE}
    \label{tab:nemenyi_ece}
    \begin{tabular}{crrrrrrr}
    \toprule
    & RSS-DNB & RSS-DNB-SA & Logistic & LASSO& Decision tree& SLIM  & RISKSLIM\\
    \midrule
    RSS-DNB-SA     & 0.998 &-&-&-&-&-&-\\
    Logistic     & 0.355 &  0.791 &-&-&-&-&-\\
    LASSO & 0.020 &  0.137 &0.952  &-&-&-&-\\
    Decision tree & 1.000 &  1.000  &    0.595   & 0.061 &-&-&-\\
    SLIM & 0.999 &  1.000   &   0.761  &  0.121& 1.000 &-&-\\
    RISKSLIM & 0.355 &  0.791  &    1.000 &   0.952& 0.595&         0.761&-\\
    XGBoost &0.020&   0.137   &   0.952   & 1.000& 0.061    &     0.121& 0.952\\
    \bottomrule
    \end{tabular}
\end{table}

\begin{table}[htbp]
    \centering
    \caption{Nemenyi test for model size}
    \label{tab:nemenyi_size}
    \begin{tabular}{crrrrr}
    \toprule
    & RSS-DNB & RSS-DNB-SA & Logistic & LASSO& SLIM \\
    \midrule
    RSS-DNB-SA     & 0.917  &-&-&-&-\\
    Logistic     & 0.039 & 0.001 &-&-&-\\
    LASSO & 1.000 & 0.765    & 0.096 &-&-\\
    SLIM & 1.000 & 0.894  &   0.047  & 1.000&- \\
    RISKSLIM &0.997&  0.684  &   0.132 &  1.000& 0.999 \\
    \bottomrule
    \end{tabular}
\end{table}
\section{Additional Materials}\label{app:additional}
This appendix provides additional materials, including a table of computational statistics of MILP-based models in \Cref{sec:experiments} and the algorithms for synthetic prediction in \Cref{sec:Method} and for solving problem (\ref{eq:risk prob}). 
\subsection{Additional Table}
\Cref{tab:computational_statistics} summarizes the computational statistics of RSS-DNB, SLIM, and RISKSLIM. The reported runtime and optimality gap are averaged over the 10 cross-validation folds. Since the solver termination status was consistent across all folds for each dataset and method, it is reported once.
\begin{table}[!htbp]
    \centering
    \renewcommand{\arraystretch}{1.2}
    \caption{Computational statistics of RSS-DNB, SLIM, and RISKSLIM.}
    \label{tab:computational_statistics}
    \begin{tabular}{cccccc}
    \toprule
    Dataset & Method & Runtime (s) & Gap (\%) & Solver Status & Solved Folds  \\
    \midrule
    \multirow{3}{*}{adult}
    & RSS-DNB & 600.1 & 85.0 & Time limit& 0/10\\
    & SLIM &600.1 & 57.9 &Time limit &0/10\\
    & RISKSLIM & 648.1 & 100.0 & Time limit&0/10\\
    \midrule
    \multirow{3}{*}{bankruptcy}
    & RSS-DNB & 0.2 & 0.0 & Optimal &10/10\\
    & SLIM &$<$0.1 &0.0 &Optimal &10/10\\
    & RISKSLIM &16.2 &0.0 & Optimal &10/10\\
    \midrule
    \multirow{3}{*}{breastcancer}
    & RSS-DNB & 600.1 & 8.3 &Time limit &0/10\\
    & SLIM & 9.0&0.0 &Optimal &10/10\\
    & RISKSLIM &600.3 & 58.1& Time limit&0/10\\
    \midrule
    \multirow{3}{*}{haberman}
    & RSS-DNB & 600.1& 22.7&Time limit &0/10\\
    & SLIM & 88.3&0.0 &Optimal &10/10\\
    & RISKSLIM & 12.1& 0.0& Optimal &10/10\\
    \midrule
    \multirow{3}{*}{heartdisease}
    & RSS-DNB & 600.1& 30.7&Time limit &0/10\\
    & SLIM & 600.1&62.8 &Time limit &0/10\\
    & RISKSLIM &600.2 &68.6 & Time limit&0/10\\
    \midrule
    \multirow{3}{*}{mammo}
    & RSS-DNB &600.1 &1.1 &Time limit &0/10\\
    & SLIM & 0.6&0.0 &Optimal &10/10\\
    & RISKSLIM &600.1 &5.6 & Time limit&0/10\\
    \midrule
    \multirow{3}{*}{mushroom}
    & RSS-DNB &83.2 &0.0 &Optimal &10/10\\
    & SLIM &0.9 &0.0 &Optimal &10/10\\
    & RISKSLIM & 764.1&47.2 & Time limit&0/10\\
    \midrule
    \multirow{3}{*}{spambase}
    & RSS-DNB &4.0 & 0.0& Optimal&10/10\\
    & SLIM & 600.0&89.6 & Time limit&0/10\\
    & RISKSLIM & 615.3& 100.0 & Time limit&0/10\\
    \bottomrule
    \end{tabular}
\end{table}
\subsection{Additional Algorithms}
\Cref{alg:synthetic_1} present the algorithm for generating synthetic predictions with a specified Pearson correlation with the output. \Cref{alg:synthetic_2} presents the algorithm for generating synthetic predictions that achieve the maximum AUNBC for a given AUROC. \Cref{alg:sa} presents the algorithm for solving problem \ref{eq:risk prob} using simulated annealing. \Cref{alg:find_opt_T} provides a subroutine involked within \Cref{alg:sa} to find the optimal intercept vector $\bm T^{opt}$ for a given coefficient vector $\bm \lambda$ and threshold vector $\bm p$.
\begin{algorithm}\SetKwInOut{Input}{Input}\SetKwInOut{Output}{Output }
    \caption{Generation of Synthetic Predictions (Type I: Controlled Correlation)}\label{alg:synthetic_1}
    \Input{Binary outcome $\bm Y$, target correlation level $r$}
    \Output{Synthetic prediction scores $\bm s$ such that $\mathrm{corr}(\bm s, \bm Y) = r$}
    \BlankLine
    $N \gets \mathrm{length}(\bm Y)$ \;
    $\bm y \gets \left(\bm Y - \mathrm{mean}(\bm Y)\right)/\mathrm{std}(\bm Y)$    \tcp*{Standardize the outcome}
    Sample $\bm z \sim \mathcal{N}(0,I_N)$  \tcp*{Generate a random vector $\bm z$}
    $\bm z \gets \bm z - \frac{\langle \bm z, \bm y \rangle}{\langle \bm y, \bm y \rangle} \bm y $ \tcp*{Remove projection onto $\bm y$}
    $\bm z \gets \bm z / \mathrm{std}(\bm z)$ \;
    $\bm s \gets r \cdot \bm y + \sqrt{1 - r^2} \cdot \bm z$ \tcp*{Construct the synthetic prediction}
    $\bm s \gets \bm s - \min(\bm s)$, $\bm s \gets \bm s / \max(\bm s)$
     \tcp*{Rescale to $[0,1]$}
\end{algorithm}

\begin{algorithm}\SetKwInOut{Input}{Input}\SetKwInOut{Output}{Output }
    \caption{Generation of Synthetic Predictions (Type II: Boundary-Attaining)} \label{alg:synthetic_2}
    \Input{Binary outcome $\bm Y$, target AUROC value $G$, threshold $\bm p$ with $0 = p_0<p_1<\ldots<p_M<p_{M+1}=1$}
    \Output{Synthetic prediction scores $\bm s$ achieving maximum AUNBC when $\text{AUROC}=G$}
    $N \gets \mathrm{length}(\bm Y)$ \;
    $a_0 \gets \mathrm{mean}(\bm Y)$  \tcp*{Proportion of positive labels}
    $b_0 \gets 1 - a_0$  \tcp*{Proportion of negative labels}
    \For{$k \gets 1$ to $M$}{
    $P_k \gets \sum_{i=1}^k\frac{(p_{i+1}-p_i)p_i}{1-p}$
    }
    \For{$k \gets 1$ to $M$ }{\tcp*{Compute maximum AUNBC for each k}
    \eIf{$G \le 1 - \frac{b_0 P_k}{(1-p_k) a_0} $}{
    $\bm b_{1:k} \gets b_0$
    }{$\bm b_{1:k} \gets \sqrt{\frac{{(1-p_k){a_0 b_0 (1-G)}}}{P_k}}$}
    $b_{k+1:M} \gets 0$, $a_{1:k-1} \gets a_0$, $a_{k:M} \gets a_0 - \frac{(1-G) a_0 b_0}{b_1}$ \;
    $F_k \gets \sum_{i=0}^M (p_{i+1}-p_i)\left(a_i - b_i \cdot \frac{p_i}{1-p_i}\right)$
    }
    $K = \underset{1\le i\le M} {\arg\max}\, F_i$ 
    \eIf{$G \le 1 - \frac{b_0 P_K}{(1-p_K) a_0} $}{
    $\bm b_{1} \gets b_0$
    }{$\bm b_{1} \gets \sqrt{\frac{{(1-p_K){a_0 b_0 (1-G)}}}{P_K}}$}
    $a_{K} \gets a_0 - \frac{(1-G) a_0 b_0}{b_1}$ \;
    \tcp{Generate prediction score $\bm s$ such that $\mathrm{AUNBC}=F_K$}
    Initialize $\bm s$ as an array of size $N$\;
    \ForEach{$i$ such that $Y_i = 0$}{
        $s_i \gets p_{K}$\;
    }
    Find the smallest index $I$ such that $\sum_{i=1}^I Y_i = \mathrm{round}(N\cdot a_K)$\;
    \ForEach{$i$ such that $Y_i = 1$}{
    \eIf{$i \le I$}{$s_{i} \gets 1$}
    {$s_{i} \gets p_{K-1}$}}
\end{algorithm}
\begin{algorithm}\SetKwInOut{Input}{Input}\SetKwInOut{Output}{Output }
    \caption{Simulated Annealing for RSS-DNB}\label{alg:sa}
    \KwData{$\{(\bm{x}_j,y_j)\}_{j=1}^N \subseteq \mathbb{R}^P \times \{0,1\}$}
    \Input{Initial Coefficient $\bm \lambda^0$, initial temperature $t^0$, cooling rate $\alpha$, minimum temperature $t^{min}$, threshold $\bm p$ with $0=p_0<p_1<\ldots<p_M<p_{M+1}=1$, maximum coefficient $\Lambda$, penalty factor $C_0$, iterations per temperature $L$}
    \Output{Optimal Coefficient $\bm \lambda^{opt}$, and optimal intercept $\bm T^{opt}$}
    $t \leftarrow t^0$, $\bm \lambda \leftarrow \bm \lambda^0$\;
    $[\bm T,Loss] \leftarrow \mathrm{FindOptimalT}\left(\{(\bm{x}_j,y_j)\}_{j=1}^N;\bm
    \lambda, \bm p, C_0\right)$ \;
    $Loss^{opt} \leftarrow Loss$ \;
    \While{$t > t^{min} $}{
    \For{$iter \leftarrow 1$ to $L$}{
        $\bm \lambda^{new} \leftarrow \bm \lambda$\;
        $i \leftarrow \mathrm{RandomInteger}([1,P])$ \;
        $\lambda_i^{new} \leftarrow \mathrm{RandomInteger}([-\Lambda,\Lambda]\backslash\{\lambda_i\} )$ \;
        $\left[\bm T^{new}, {Loss}^{new}\right] \leftarrow \mathrm{FindOptimalT}\left(\{(\bm{x}_j,y_j)\}_{j=1}^N;\bm \lambda^{new}, \bm p, C_0\right)$ \;
        \If{${Loss}^{new} < {Loss}$}{
            ${Loss} \leftarrow {Loss}^{new} $, $\bm \lambda \leftarrow \bm \lambda^{new}$, $T \leftarrow T^{new}$ \;
            \If{$Loss < Loss^{opt}$}{
            ${Loss}^{opt} \leftarrow {Loss} $, $\bm \lambda^{opt} \leftarrow \bm \lambda$, $T^{opt} \leftarrow T$ \;
            }          
        }
        \ElseIf{$\mathrm{Random}() < \exp\{\left(Loss - Loss^{new}\right)/t\}$}{
            ${Loss} \leftarrow {Loss}^{new} $, $\bm \lambda \leftarrow \bm \lambda^{new}$, $T \leftarrow T^{new}$ \;
        }
    }
    $t \leftarrow t - \alpha$
    }
\end{algorithm}
\begin{algorithm}\SetKwInOut{Input}{Input}\SetKwInOut{Output}{Output }
    \caption{FindOptimalT}\label{alg:find_opt_T}
    \KwData{$\{(\bm{x}_j,y_j)\}_{j=1}^N \subseteq \mathbb{R}^P \times \{0,1\}$}
    \Input{Coefficient $\bm \lambda$, threshold $\bm p$ with $0=p_0<p_1<\ldots<p_M<p_{M+1}=1$, and penalty factor $C_0$}
    \Output{Optimal intercept $\bm T^{opt}$ and $Loss$}
    $\hat{y}_j \leftarrow \langle \bm x_j, \bm \lambda \rangle$, for $1\le j \le N$\;
    $\mathcal{B} \leftarrow \mathrm{Sort}\left(\left\{\lfloor \hat y_j \rfloor: 1 \le j \le N \right\} \cup \left\{\max_{1\le j \le N} \lfloor \hat y_j \rfloor+1\right\}\right)$\;
    $T_0^{opt} \leftarrow \min _{1\le j \le N}\lfloor \hat y_j \rfloor$\;
    $NB^0 \leftarrow \sum_{j=1}^N (p_1-p_0)\cdot I(y_j=1)/N$ \;
    \For{$i \leftarrow 1$ to M}{
        $k \leftarrow 1$ \;
        \ForEach{$T \in \mathcal{B}\cap [T_{i-1}^{opt}, \infty)$ }{
            $\mathrm{TP}_k^i \leftarrow \sum_{j=1}^N I(\hat{y}_j \ge T, y_j = 1)$\;
            $\mathrm{FP}_k^i \leftarrow \sum_{j=1}^N I(\hat{y}_j \ge T, y_j = 1)$\;
            $\mathrm{NB}_k^i \leftarrow \mathrm{TP}_k/N - \mathrm{FP}_k/N \cdot {p_i}/{(1-p_i)}$\;
            $T_k^i \leftarrow T$\;
            $k \leftarrow k+1$ \;
        }
        $Index \leftarrow {\arg\max}_{k \ge 1}\left\{\mathrm{NB}_k^i\right\}$ \;        
        $\mathrm{NB}^i \leftarrow \mathrm{NB}_{Index}^i$\;
        $T_i^{opt} \leftarrow T_{Index}^i$ \;
    }
    $Loss \leftarrow -\sum_{i=0}^M (p_{i+1}-p_i)NB^i + C_0 \cdot \|\lambda\|_0$\;
    $\bm T^{opt} \leftarrow \left(T_0^{opt},T_1^{opt},\ldots,T_M^{opt}\right)$\;
\end{algorithm}

\clearpage


\bibliographystyle{plainnat}  
\bibliography{references}

\end{document}